\documentclass[10pt,journal,compsoc]{IEEEtran}
\newcommand{\myparagraph}[1]{\vspace{0.3\baselineskip}\noindent{\textbf{#1.}}~}
\usepackage[linesnumbered,ruled,vlined,boxed,noend]{algorithm2e}
\usepackage{booktabs}
\usepackage{amsmath}
\usepackage{amsthm}
\usepackage{amssymb}
\usepackage{graphicx}
\usepackage{enumitem}
\usepackage{multirow}

\setlist[itemize]{leftmargin=*}

\usepackage[colorlinks=true,linkcolor=red,urlcolor=black,citecolor=blue]{hyperref}

\DeclareMathOperator{\tm}{\textup{Tm}} 
\DeclareMathOperator{\m}{\textup{m}} 
\newcommand{\bdelta}{\boldsymbol{\delta}}

\newcommand{\bx}{\boldsymbol{x}}
\newcommand{\bc}{\boldsymbol{c}}

\newcommand{\var}{\mathrm{Var}}

\newcommand{\jkl}{\texttt{JKL}\xspace}
\newcommand{\mv}{\texttt{MV}\xspace}
\newcommand{\fr}{\texttt{FR}\xspace}
\newcommand{\nf}{\texttt{NF}\xspace}
\newcommand{\tkm}{\texttt{TKM}\xspace}
\newcommand{\sfr}{\texttt{SFR}\xspace}
\newcommand{\lloyd}{\texttt{Lloyd's heuristic}\xspace}
\newcommand{\kplus}{\texttt{$k$-means++}\xspace}
\newcommand{\athlete}{\textsf{Athlete}\xspace}
\newcommand{\bank}{\textsf{Bank}\xspace}
\newcommand{\census}{\textsf{Census}\xspace}
\newcommand{\diabetes}{\textsf{Diabetes}\xspace}
\newcommand{\recruitment}{\textsf{Recruitment}\xspace}
\newcommand{\spanish}{\textsf{Spanish}\xspace}
\newcommand{\student}{\textsf{Student}\xspace}
\newcommand{\spatial}{\textsf{3D-spatial}\xspace}
\newcommand{\censusbig}{\textsf{Census1990}\xspace}
\newcommand{\hmda}{\textsf{HMDA}\xspace}

\newtheorem{theorem}{Theorem}

\newtheorem{proposition}{Proposition}
\newtheorem{definition}{Definition}
\newtheorem{assumption}{Assumption}
\newtheorem{lemma}{Lemma}




\ifCLASSOPTIONcompsoc
  \usepackage[nocompress]{cite}
\else
  \usepackage{cite}
\fi
\ifCLASSINFOpdf
\else
\fi  
\hyphenation{op-tical net-works semi-conduc-tor}
\begin{document}
\title{Efficient $k$-means with Individual Fairness via Exponential Tilting}
\author{Shengkun Zhu,
        Jinshan Zeng,
        Yuan Sun,
        Sheng Wang,
        Xiaodong Li,~\IEEEmembership{Fellow,~IEEE,}
        Zhiyong Peng
\IEEEcompsocitemizethanks{\IEEEcompsocthanksitem Shengkun Zhu,  Sheng Wang, and Zhiyong Peng are with the School of Computer Science, Wuhan University.
E-mail: \{whuzsk66, swangcs, peng\}@whu.edu.cn
\IEEEcompsocthanksitem Jinshan Zeng is with the School of Computer and Information Engineering, Jiangxi Normal University.
E-mail: jinshanzeng@jxnu.edu.cn
\IEEEcompsocthanksitem Yuan Sun is with La Trobe Business School, La Trobe University.
Email: yuan.sun@latrobe.edu.au.
\IEEEcompsocthanksitem Xiaodong Li is with the School of Computing Technologies, RMIT University.
Email: xiaodong.li@rmit.edu.au.
}

}

\markboth{Journal of \LaTeX\ Class Files,~Vol.~14, No.~8, June~2024}%
{Shell \MakeLowercase{\textit{et al.}}: Bare Demo of IEEEtran.cls for Computer Society Journals}
\IEEEtitleabstractindextext{
\begin{abstract}
In location-based resource allocation scenarios, the distances between each individual and the facility are desired to be approximately equal, thereby ensuring fairness. 
Individually fair clustering is often employed to achieve the principle of \textit{``treating all points equally''}, which can be applied in these scenarios.
This paper proposes a novel algorithm, tilted $k$-means (\tkm), aiming to achieve individual fairness in clustering to ensure fair allocation of resources.
We integrate the exponential tilting into the sum of squared errors (SSE) to formulate a novel objective function called tilted SSE.
We demonstrate that the tilted SSE can generalize to SSE and employ the coordinate descent and first-order gradient method for optimization.
We propose a novel fairness metric, the variance of the squared distance of each point to its nearest centroid within a cluster, which can alleviate the \textit{Matthew Effect} typically caused by existing fairness metrics.
Our theoretical analysis demonstrates that the well-known $k$-means++ incurs a multiplicative error of $O(k\log k)$ with our objective function, and we establish the convergence of \tkm under mild conditions. 
In terms of fairness, we prove that the variance in each cluster generated by \tkm decreases with $t$, where $t$ is a hyperparameter that adjusts the trade-off between utility and fairness. 
In terms of efficiency, we demonstrate the time complexity of \tkm is linear with the dataset size. Moreover, we demonstrate the monotonicity of the tilted SSE with respect to $t$ in a simple case.
Our experimental results demonstrate that \tkm outperforms state-of-the-art methods in effectiveness, fairness, and efficiency. 
Specifically, \tkm exhibits a better trade-off between clustering utility and fairness than six baselines and achieves \textit{hundreds or even thousands} of times acceleration in running time. 
Moreover, \tkm can overcome the RAM overflow issue that other methods encounter with a large dataset size.
\end{abstract}
\begin{IEEEkeywords}
Location-based resource allocation, $k$-means, individual fairness, exponential tilting, coordinate descent, variance.
\end{IEEEkeywords}}

\maketitle

\IEEEdisplaynontitleabstractindextext
\IEEEpeerreviewmaketitle

\section{Introduction}\label{intro}
In the era of big data, the scale of data is increasing exponentially \cite{Nargesian2019Lake, Zhang2015Memory}, emerging from diverse fields \cite{Edara2021Metadata, Fan2022Graphs} with rich information and potential value \cite{Wu2014Mining}. 
Clustering algorithms have become powerful tools for exploring the internal structure of datasets by partitioning data points into different clusters \cite{Ankerst1999OPTICS}, where data points within the same cluster are similar to each other, while those in different clusters are dissimilar \cite{Jain1999Clustering,zhou2009graph}. 
$k$-means is one of the most classic clustering algorithms, which measures the similarity between data points using Euclidean distance and is suitable for various types of data \cite{mortensen2023marigold,cao2010mining,cong2009efficient, Aken2017Automatic}. 
This characteristic makes $k$-means widely applicable in various location-based resource allocation scenarios, such as opening new facilities to serve residents \cite{Edo2020Elastic, gupta2017local, Wang2020Efficiency, Shang2016Collective}.

\begin{figure}
  \centering
  \includegraphics[width=\linewidth]{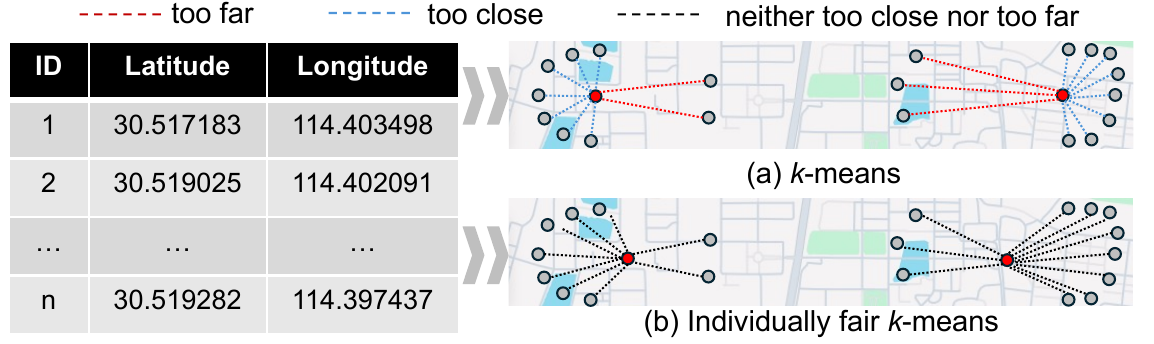}
  \caption{A comparison between $k$-means and individually fair $k$-means. $k$-means results in those minority residents being too far from the centroid, while in the clustering results of individually fair $k$-means, the distance of each resident to the centroid is approximately equal.}
  \label{fig: example}
\end{figure}

However, applying $k$-means to resource allocation scenarios may lead to the issue of unfairness \cite{Shaham2022Models,Jung2020Service,Tutorial2022Clustering}. Consider the scenario in Figure \ref{fig: example}(a): when setting up public facilities such as hospitals for residents, $k$-means tends to place these facilities closer to densely populated areas, resulting in sparsely populated areas having difficulty accessing public resources and unfair treatment for minority residents.
Individual fairness is a promising concept that can ensure that within the same cluster, each data point is treated approximately equally \cite{Mehrabi2022Survey, Simon2023Fairness}. Figure \ref{fig: example}(b) shows the clustering results obtained by $k$-means with individual fairness, where the distances from each resident to the centroid are approximately equal. In this case, we consider the clustering result to be fair.

One of the most widely studied concepts in individual fairness for $k$-clustering is the ``\textit{service in your neighborhood}" proposed by Jung et al. \cite{Jung2020Service}. 
This concept ensures that each data point has a centroid within a small constant factor of their neighborhood radius. 
The neighborhood radius is defined as the minimum radius of a ball centered at each data point that includes at least $n/k$ data points, where $n$ is the total number of data points.
Several studies \cite{Negahbani2021Better, Mahabadi2020Individual} have made significant improvements in clustering utility and yielded tighter theoretical bounds based on this individual fairness concept. Mahabadi and Vakilian \cite{Mahabadi2020Individual} introduced a local search method for $k$-clustering that notably surpasses \cite{Jung2020Service} in effectiveness. 
Negahbani and Chakrabarty \cite{Negahbani2021Better} proposed leveraging linear programming to develop improved algorithms for individually fair $k$-clustering, both theoretically and practically.

However, the fairness definition of these methods faces a similar issue. To illustrate this, let us consider the scenario in Fig. \ref{fig: radius}: point B resides in a sparsely populated area, with its neighborhood radius larger than point A situated in a densely populated area. This tends to result in opening more facilities in densely populated areas while only opening a few facilities in sparsely populated areas \cite{Negahbani2021Better}. 
Within the same radius, individual A in densely populated areas has more opportunities to choose facilities, while individual B in sparsely populated areas may have only a single facility available.
Moreover, densely populated areas attract more individuals due to abundant resources. To meet the needs of these individuals, additional facilities must be opened. This results in the development of sparse areas continually lagging behind.
This phenomenon, also known as the \textit{Matthew Effect} \cite{Robert1968Matthew, Azoulay2014Matthew}, is a sociological concept describing how the distribution of resources, wealth, and opportunities tends to favor individuals who already possess them. 

Moreover, existing individually fair clustering methods suffer from the efficiency issue: their running time heavily depends on the dataset size. The most promising theoretical finding suggests a time complexity of $O(kn^4)$ \cite{Negahbani2021Better}.
Based on data from the U.S. Census, the population of New York City was 8.468 million in 2021 \cite{uscensus}. Due to the high time complexity of existing algorithms, no individual fair clustering algorithm can effectively perform clustering analysis on such a large-scale dataset.
Moreover, as the dataset size increases, existing methods suffer from the issue of RAM overflow since they require computation of the distance between each pair data point, necessitating the storage of an $n\times n$ array in memory.
Additionally, in the clustering results obtained by these algorithms, each centroid must be selected from the data points, which is often unreasonable in real-world applications. 

\begin{figure}
  \centering
  \includegraphics[width=\linewidth]{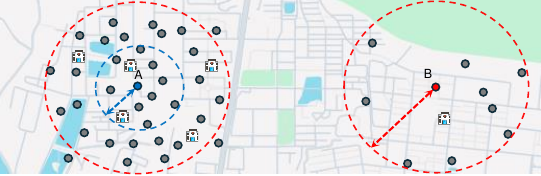}
  \caption{In sparse areas, the neighborhood radius for point B is larger than the neighborhood radius for point A in dense areas. Within the same radius, individual A can access more facilities.}
  \label{fig: radius}
\end{figure}

\textit{Exponential tilting} is a widely used technique to induce parametric shifts in distributions in various disciplines, including statistics \cite{butler2007saddlepoint,siegmund1976importance, Rong2022Bayesian}, probability \cite{dembo2009large}, information theory \cite{Merhav2014List, Beirami2019Characterization}, and optimization \cite{Pee2011Solving, Shen2010Dual}. 
Li et al. \cite{li2023jmlr} first proposed using exponential tilting in machine learning to ensure the fairness of empirical risk minimization (TERM). 
The flexibility of TERM lies in its ability to adjust the impact of individual losses using a scale parameter and thus enables us to effectively tune the influence of minority data points as required \cite{Wang2023Sparse}.
TERM provided examples of exponential tilting in supervised learning, such as linear regression and logistic regression. However, the practical applications of exponential tilting in unsupervised learning, especially in clustering algorithms, remain unresolved.
Furthermore, some theoretical analysis of TERM relies on the assumption that the objective function follows a generalized linear model, which does not hold for clustering algorithms.


We aim to utilize the ability of exponential tilting to induce parametric shifts in distribution to ensure individual fairness for clustering analysis. 
Building on this concept, we propose a novel loss function, tilted SSE, for the individually fair $k$-means problem based on exponential tilting, and suggest solving this problem effectively through coordinate descent (CD) and stochastic gradient descent (SGD), ensuring that the centroid in each cluster is closer to minority data points, thus guaranteeing individual fairness. Moreover, we demonstrate that tilted SSE can generalize to SSE when the scaled parameter in \tkm is set to 0. 
Due to the fact that existing fairness metrics may exacerbate the Matthew Effect in location-based resource allocation scenarios, we propose a novel criterion for evaluating fairness within clusters, utilizing the \textit{variance} of distances between each data point and its centroid. Our fairness metric aims to treat each individual more equitably compared to existing metrics, thereby mitigating the Matthew Effect.

Our theoretical analysis comprises five parts: approximation guarantee, convergence analysis, fairness analysis, efficiency analysis, and monotonicity analysis.
Our approximation guarantee indicates that the centroids obtained through the well-known $k$-means++ incur a multiplicative error of $O(k\log k)$.
We establish the convergence analysis for \tkm under mild conditions. 
Specifically, we demonstrate that the expected tilted SSE is non-increasing with respect to iterations.
For fairness analysis, we demonstrate that the variance of distances in each cluster decreases as the increase of hyperparameter $t$ in \tkm. 
A smaller variance indicates greater fairness, implying that as $t$ grows, clustering becomes fairer.
For efficiency analysis, we demonstrate that the time complexity of \tkm is $O(kn)$, similar to that of $k$-means.
Note that the time complexity of the state-of-the-art method is $O(kn^4)$ \cite{Negahbani2021Better}, while that of \tkm is linear with respect to $n$. Therefore, \tkm exhibits a significant advantage in efficiency compared to other methods.
For monotonicity analysis, we demonstrate that the tilted SSE monotonically increases with $t$ in a simple case. This property may guide the choices of $t$ for \tkm in practical applications.

Our experimental evaluations demonstrate the effectiveness, fairness, efficiency, and convergence of \tkm over ten real-world datasets with five measurements. Our experimental findings indicate that \tkm outperforms the state-of-the-art methods regarding the trade-off between clustering utility and fairness. 
Specifically, we use SSE to measure clustering utility. The SSE of \tkm is lower than that of the state-of-the-art method and is very close to clustering algorithms that do not consider individual fairness on some datasets. To evaluate fairness, we use not only variance as a metric but also the maximum distance from each sample point to the centroid within each cluster. 
Our results show that \tkm outperforms the state-of-the-art fair clustering methods on both metrics.
Moreover, \tkm's performance in efficiency is remarkably impressive. Due to the linear time complexity with dataset size, \tkm achieves acceleration of \textbf{hundreds or even thousands of times} compared to other fairness-aware clustering methods. Furthermore, \tkm can overcome the RAM overflow issues in other methods when dealing with large-scale data.
Additionally, we validate the impact of different hyperparameters in \tkm on its convergence.

\noindent Our contributions are summarized as follows:
\begin{itemize}
    \item We incorporate exponential tilting into SSE to propose a novel method for individually fair $k$-means: \tkm.
    \item We theoretically analyze \tkm's approximation guarantee, convergence, fairness, efficiency, and monotonicity.
    \item We experimentally validated the effectiveness, fairness, efficiency, and convergence of \tkm.

\end{itemize}

The remaining sections are structured as follows: Section \ref{sec notations} presents the notations used in this paper, Section \ref{sec related work} presents the related work, Section \ref{sce preliminary} introduces the preliminaries used in our study, Section \ref{sec proposed tilted kmeans} outlines our proposed method, \tkm, Section \ref{sec experiments} validates our algorithm through experiments, Section \ref{sec conclusion} concludes our paper, and Section \ref{sec proof} presents the proofs of our theories.

\begin{table}
\centering
\setlength\tabcolsep{15pt}
\caption{Summary of notations}
\label{tab: notation}
\begin{tabular}{ll}
\toprule
\textbf{Notation} & \textbf{Description}\\
\midrule
$\mathcal{X}:=\{\boldsymbol{x}_i\}_{i=1}^n$ & The dataset of $n$ points\\
$\mathcal{S}:=\{\mathcal{S}_j\}_{j=1}^k$ & The set of $k$ clusters\\
$\mathcal{C}:=\{\boldsymbol{c}_j\}_{j=1}^k $& The set of $k$ centroids\\
$\overline{\psi}, \overline{\phi}$ & The SSE, tilted SSE of all clusters\\
$\psi, \phi$& The SSE, tilted SSE of each cluster\\
$\m, \tm $& The arithmetic, tilted mean operator\\
$\eta$& The learning rate\\
$E$& The epoch size\\
\bottomrule
\end{tabular}
\end{table}

\section{Notations}\label{sec notations}

We use different text formatting styles to represent different mathematical concepts: plain letters for scalars, bold letters for vectors, and calligraphic letters for sets. For instance, $k$ represents a scalar, $\boldsymbol{x}$ represents a vector, and $\mathcal{C}$ denotes a set. Without loss of generality, all data points in this paper are represented using vectors.
We use $[k]$ to represent the set $\{1, 2, ..., k\}$. The symbol $\mathbb{E}$ denotes the expectation of a random variable, and we use ``$:=$" to indicate a definition. 
We use $\mathbb{I}$ to denote the identity matrix.
We use $\|\cdot\|$ to denote the Euclidean norm of a vector. We use the symbol ``$\log$'' to denote the natural logarithm with base $e$. Table \ref{tab: notation} lists the notations appearing in this paper and their interpretations.

\section{Related Work}\label{sec related work}
We provide an overview of previous studies on fair clustering and the application of exponential tilting in various fields and highlight the limitations of these studies.

\myparagraph{Fair Clustering}
Fairness in clustering algorithms is typically divided into two categories: \textit{group fairness} and \textit{individual fairness} \cite{Francois2022Towards,Simon2023Fairness,Mehrabi2022Survey,Dong2023Fairness}. The goal of group fairness is to achieve clustering of a given set of points with minimal cost while ensuring that all clusters are balanced with respect to certain protected attributes, such as gender or race. Group fairness is not the focus of this paper, so interested readers can refer to \cite{Chen2019Proportionally,Bera2019fairAlgorithms, Chierichetti2017Fair, Zhu2023F3KM,Dickerson2024Doubly}. 

The concept of \textit{individual fairness} was initially introduced by Dwork et al. \cite{Dwork2012Fairness} in the context of classification, which posits that ``\textit{similar individuals should be treated equally}''. Several studies have explored this definition of individual fairness in clustering, such as \cite{Brubach2020Pairwise,Chakrabarti2022Individually}. Another widely used and researched concept of individual fairness is referred to as ``\textit{service in your neighborhood}'', which was initially suggested by Jung et al. \cite{Jung2020Service}.
This concept aims to ensure that each data point has a centroid within at most a small constant factor of their neighborhood radius, where the neighborhood radius is the minimum radius of a ball centered at the data point $\bx_i$ that includes at least $n/k$ data points. 
Subsequently, various methods addressing the individually fair $k$-clustering were based on this paradigm \cite{Negahbani2021Better,Mahabadi2020Individual,Chhaya2022Coresets}, along with numerous improved theoretical upper bounds \cite{bateni2024scalable,Vakilian2022Improved}. 
Mahabadi and Vakilian \cite{Mahabadi2020Individual} introduced a local search algorithm for $k$-clustering, which significantly outperforms the method proposed by Jung et al. \cite{Jung2020Service} in terms of clustering utility. Negahbani and Chakrabarty \cite{Negahbani2021Better} proposed leveraging linear programming techniques to develop improved algorithms for individually fair $k$-clustering, both theoretically and practically. 
The fairness metric used in these methods can alleviate some of the unfairness in location-based resource allocation scenarios by ensuring that facilities are within a neighborhood radius of each data point. 
However, this metric might exacerbate the Matthew Effect, as it tends to result in more facilities being opened in densely populated areas while fewer facilities are opened in sparsely populated areas.
Moreover, existing individually fair clustering methods encounter the same issue: they suffer from prohibitively high computational time. Specifically, the time complexity of \cite{Mahabadi2020Individual} is $O(k^5n^4)$, and \cite{Negahbani2021Better} is $O(kn^4)$. 
To address the running time issue, Chhaya et al. \cite{Chhaya2022Coresets} proposed a method to reduce the dataset size by constructing a coreset. However, this approach results in diminished clustering utility and fails to mitigate the inherent dependency of the computational complexity of existing individual fairness clustering on dataset size.

\myparagraph{Exponential Tilting}
We elucidate the concept of exponential tilting and explore its applications across various disciplines.
Let $\mathcal{P}:=\{p_{\theta}\}$ be a set of probability distributions with parameter $\theta$, $X$ denote a random variable drawn from the probability distribution $p_{\theta}$, then for any $x\in\mathcal{X}$, the information of $x$ under $\theta$ \cite{cover1999elements} is defined as
\begin{align}\label{eq 1}
    f(x,\theta):=\log p_{\theta}(X=x).
\end{align}

When $\theta$ is not specified, we assume that $X$ is a random variable drawn from the distribution $p(\cdot)$. Then the cumulant generating function of $f(X,\theta)$ \cite{dembo2009large} is defined as
\begin{align}\label{eq: 2}
    \Delta_{X}(t,\theta):=\log\Bigl(\mathbb{E}\Bigl[e^{tf(X,\theta)}\Bigr]\Bigr)=\log\sum_{x}p(x)p_{\theta}(x)^{-t},
\end{align}    
where $\mathbb{E}[e^{tf(X,\theta)}]$ is commonly referred to as an exponential tilting of the information density, and can induce the probability distribution with parameter $\theta$ shifting. 
Exponential tilting has been applied in numerous fields, such as statistics \cite{butler2007saddlepoint,siegmund1976importance,Rong2022Bayesian}, applied probability \cite{dembo2009large}, information theory \cite{Merhav2014List,Beirami2019Characterization}, and optimization \cite{Pee2011Solving,Shen2010Dual}. Interested readers can refer to \cite{li2023jmlr} for a more detailed introduction.
Currently, there are relatively few applications of exponential tilting in machine learning \cite{li2023jmlr,Wang2023Sparse,Szab2021Cross}.
Li et al. \cite{li2023jmlr} proposed tilted empirical risk minimization (TERM), which allows flexible tuning of individual losses, marking a pioneering move in machine learning.
TERM offers several examples of supervised learning, including linear regression and logistic regression, as illustrated in Fig. \ref{fig: toy example in tian}. Recent research has also concentrated on supervised learning, such as the additive model \cite{Wang2023Sparse} and semantic segmentation \cite{Szab2021Cross}.

\noindent\underline{\textit{\textbf{Remarks}}}.
1) Current fairness metrics might exacerbate the Matthew Effect, as they tend to lead to more facilities being opened in densely populated areas while fewer facilities are opened in sparsely populated areas.
2) The efficiency of existing individually fair $k$-clustering algorithms heavily depends on the number of samples $n$ of the dataset.
3) Existing individually fair clustering algorithms cannot flexibly tune the trade-off between utility and fairness. Moreover, these algorithms require cluster centroids to be one of the data points.
4) The current application of exponential tilting is still limited to supervised learning, and it has not been applied in unsupervised learning, especially in clustering.

\begin{figure}
  \centering
  \includegraphics[width=\linewidth]{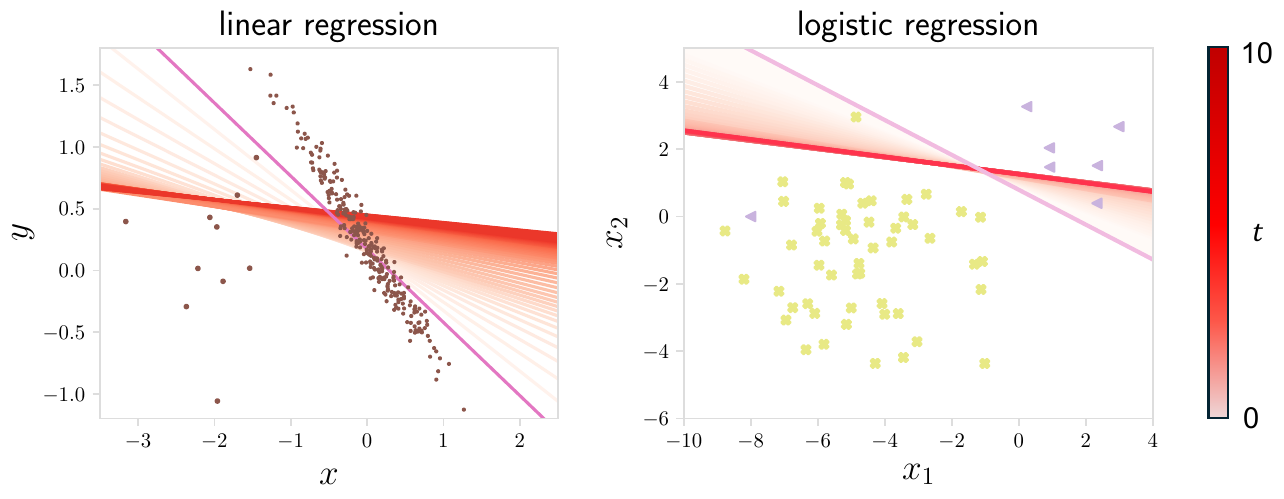}
  \caption{Two examples of TERM \cite{li2023jmlr}. Increasing parameter $t$ can magnify the impact of minority points on the models.}
  \label{fig: toy example in tian}
\end{figure}

\section{Preliminaries}\label{sce preliminary}
We begin by introducing the definition of $k$-means. Then, we present the well-known $k$-means++ initialization method.

\subsection{$k$-means}
$k$-means is a widely used clustering algorithm designed to partition a dataset into $k$ distinct clusters based on similarities among data points. Let $\mathcal{X}:=\{\boldsymbol{x}_i\}_{i=1}^n$ be a dataset of $n$ points, $k$-means aims to find a set $\mathcal{S}:=\{\mathcal{S}_j\}_{j=1}^k$ of $k$ clusters such that the sum of squared error (SSE) is minimized, 
\begin{align}\label{eq: kmeans sse}
    \min_{\mathcal{S},\mathcal{C}}\Bigl\{\overline{\psi}(\mathcal{S},\mathcal{C}):=\sum_{j=1}^{k}\frac{1}{n}\sum_{\bx_i\in\mathcal{S}_j}f(\bx_i,\bc_j)\Bigr\},
\end{align}
where $\mathcal{C}:=\{\bc_j\}_{j=1}^k$ is a set of centroids, $\boldsymbol{c}_j$ is the centroid of cluster $\mathcal{S}_j$, $f(\bx_i,\bc_j):=\|\bx_i-\bc_j\|^2$ denotes the square of the Euclidean distance from a data point $\bx_i$ to the centroid $\bc_j$.
The commonly used method for solving $k$-means is the well-known Lloyd's heuristic \cite{Lloyd1982}, which iteratively computes the assignment of each data point and the centroids through coordinate descent. Next, we provide a detailed description of the optimization process of Lloyd's heuristic. We begin by presenting the equivalent form of Problem (\ref{eq: kmeans sse}) as
\begin{align}\label{eq: equivalent kmeans sse}
    \min_{\mathcal{S},\mathcal{C}}\Bigl\{\overline{\psi}(\mathcal{S},\mathcal{C})\!:=\!\sum_{j=1}^{k}\psi(\bdelta_j,\bc_j)\!:=\!\sum_{j=1}^{k}\frac{1}{n}\sum_{i=1}^{n}f(\bx_i,\bc_j) \delta_{ij}\Bigr\},\!\!
\end{align}
where $\delta_{ij},i\in[n],j\in[k]$ denotes the assignment of each data point, for example, if $\boldsymbol{x}_i\in \mathcal{S}_j$, then $\delta_{ij}=1$, else $\delta_{ij}=0$, 
$\boldsymbol{\delta}_j := (\delta_{1j}, \delta_{2j}, \cdots, \delta_{nj}) \in \mathbb{R}^n$ represents the assignment of data points in the $j$-th cluster, and $\psi(\bdelta_j,\bc_j):=\frac{1}{n}\sum_{i=1}^n f(\bx_i,\bc_j)\delta_{ij}$ denotes the SSE in the cluster $\mathcal{S}_j$.
To solve Problem (\ref{eq: equivalent kmeans sse}), one may iteratively assign each point to its nearest centroid and refine $\bc_j$ using Lloyd's heuristic. Following initialization, with $\bc_j$ holds constant, the solution for $\delta_{ij}$ can be obtained as
\begin{align}\label{eq: solution to fij}
    \delta_{i j}= \begin{cases}1, &j=\arg \min _l f(\bx_i,\bc_l), \\ 0, & \text{otherwise.}\end{cases}
\end{align}
When $\delta_{ij}$ holds constant, solve for $\bc_j$ as follows:
\begin{align}\label{eq: solution to cj}
    \boldsymbol{c}_j=\m(\mathcal{S}_j)=\frac{\sum_{i=1}^{n}\delta_{ij}\cdot\bx_i }{\sum_{i=1}^{n} \delta_{ij}},
\end{align}
where $\m(\cdot)$ is an operator to calculate the weighted mean.

\begingroup
\begin{algorithm}[t]
  \caption{$k$-means++}
  \label{alg: kmeans++}
  \SetAlgoLined
  \KwIn{$\mathcal{X}:=\{\bx_i\}_{i=1}^n$, $k$.}
  $\mathcal{C}\leftarrow$ Sample a point uniformly from $\mathcal{X}$\label{alg: kmeans++, step 1}\;

  $\mathcal{C}\leftarrow$ Sample the next centroid $\bc_j \in\mathcal{X}$ with probability $\frac{D(\bc_j)^2}{\sum_{\bc_j\in  \mathcal{X}} \! D(\bc_j)^2}$\label{alg: kmeans++, step 2}\;

  Repeat Step \ref{alg: kmeans++, step 2} until $k$ centroids are chosen\label{alg: kmeans++, step 3}\;
  \tcp{Coordinate descent.}
  \While{not converge\label{alg: kmeans++, step 4}}{
    Update $\delta_{ij}, i\in[n],j\in[k]$ by Equation (\ref{eq: solution to fij})\;

    Update $\bc_j,j\in[k]$ by Equation (\ref{eq: solution to cj})\label{alg: kmeans++, step 6}\;

  }
      \KwRet $\mathcal{C}$.

\end{algorithm} 




  

\endgroup

\subsection{$k$-means++}

$k$-means++ is an improved version of $k$-means by providing a more effective strategy for selecting initial centroids, thus enhancing the speed and accuracy \cite{arthur2007k}. We provide the details of $k$-means++ in Algorithm \ref{alg: kmeans++}.
Its process involves selecting the first centroid randomly from the dataset (Step \ref{alg: kmeans++, step 1} in Algorithm \ref{alg: kmeans++}). Let $D(\bx_i)$ be the shortest distance from a data point $\bx_i$ to its closest centroids that we have already chosen. The subsequent centroid is chosen from the data points based on their squared distances to the nearest existing centroids, with a probability $\frac{D(\bx')^2}{\sum_{\bx_i \in \mathcal{X}}D(\bx_i)^2}$ (Step \ref{alg: kmeans++, step 2} in Algorithm \ref{alg: kmeans++}). This iterative process is repeated until $k$ centroids are chosen (Step \ref{alg: kmeans++, step 3} in Algorithm \ref{alg: kmeans++}).
After selecting $k$ centroids, the subsequent update of $\bdelta_j$ and $\bc_j$ is performed through coordinate descent, which is identical to Lloyd's heuristic (Steps \ref{alg: kmeans++, step 4}-\ref{alg: kmeans++, step 6} in Algorithm \ref{alg: kmeans++}).

\section{Proposed \tkm}\label{sec proposed tilted kmeans}
In this section, we begin by proposing the objective function of tilted $k$-means (\tkm) and presenting the corresponding optimization method. Then, we theoretically analyze the convergence, approximation guarantee, fairness, efficiency, and monotonicity of \tkm.


\subsection{Objective Function of \tkm}
Due to the characteristic of exponential tilting inducing parametric shifts in distributions, we consider incorporating exponential tilting into SSE to obtain \textit{tilted SSE}.
The objective of tilted $k$-means is to minimize the tilted SSE within each cluster as follows:
\begin{align}\label{eq: tkm}
    \min_{\mathcal{S},\mathcal{C}}\Bigl\{\overline{\phi}(t,\mathcal{S},\mathcal{C}):=&\sum_{j=1}^{k}\phi(t,\bdelta_j,\bc_j)\notag\\
    :=&\sum_{j=1}^{k}\frac{1}{t}\log\frac{1}{n}\!\sum_{i=1}^{n}e^{tf(\bx_i,\bc_j)\delta_{ij}}\Bigr\},
\end{align}    
where $t>0$ is a hyperparameter. 
Note that when $\mathcal{P}$ in (\ref{eq 1}) is an exponential set of distributions parameterized by $\bc_j$ and $\bdelta_{j}$, the cumulant generating function can be written as:
\begin{align}\label{eq: cumulant function}
    \Delta_{X}(t,\bdelta_{j},\bc_j):=\log\frac{1}{n}\sum_{i=1}^{n}e^{tf(\bx_i,\bc_j)\delta_{ij}}.
\end{align}
Therefore, it is clear that the objective function of \tkm can be considered as a properly scaled summation version of the cumulant generating function in Equation~\eqref{eq: cumulant function}.

Next, we consider the case of $t = 0$ in \tkm.
When $t \to 0$, according to L'Hôpital's rule, it holds that:
\begin{align}\label{eq: limition}
    \lim_{t\to 0}\overline{\phi}(t,\mathcal{S},\mathcal{C}) = \frac{1}{n}\sum_{j=1}^k\sum_{i=1}^{n}f(\bx_i,\bc_j)\delta_{ij}.
\end{align}
Therefore, when $t \to 0$, tilted SSE generates to SSE. Without loss of generality, we define
\begin{align}\label{eq: phi 0 > psi}
    \phi(0,\bdelta_{j},c_j):= \psi(\bdelta_{j},\bc_j).
\end{align}

\begin{algorithm}[t]
  \caption{Solving tilted $k$-means via SGD}
  \label{alg: sgd for tilted kmeans}
  \SetAlgoLined
  \KwIn{$\mathcal{X}:=\{\bx_i\}_{i=1}^n$, $k$: \# of clusters, $E$: \# of epoch.}
  Initialize $\mathcal{C}:=\{\bc_j\}_{j=1}^k$ by $k$-means++\;\label{line 1 alg 3}
  \While{not converge}{
  \tcc{Assignment.}
    Update $\delta_{ij}, i\in[n],j\in[k]$ by (\ref{eq: solution to fij})\; \label{line 3 alg 3}
    \tcc{Refinement.}
    \For{$e=1,\cdots,E$}{
    
    Sample a mini-batch data $\mathcal{B}$ from $\mathcal{X}$\; \label{line 5 alg 3}
    
    Update $\bc_j,j\in[k]$ by (\ref{eq: sgd for cj})\; \label{line 7 alg 3}

    }

  }
      \KwRet $\mathcal{C}$.

\end{algorithm}

\subsection{Solving Tilted $k$-means}
Since Problem (\ref{eq: tkm}) involves a highly non-convex objective function with multi-block variables, we consider using coordinate descent (CD) to solve it. We begin by fixing $\bc_j$ to solve $\delta_{ij}$. 
Due to the monotonically increasing nature of the objective function with respect to $tf(\bx_i,\bc_j)$, the solution for $\delta_{ij}$ is identical to that of Equation (\ref{eq: solution to fij}).
Next, we consider fixing $\delta_{ij}$ to solve $\mathcal{C}$.
Since the tilted SSE is convex with respect to $\bc_j$ (this property will be proven in Section \ref{sec: Theoretical Analysis}), we can derive the optimality condition for the tilted SSE with respect to $\bc_j$. We then present the first-order gradient of $\overline{\phi}(t,\mathcal{S},\mathcal{C})$ with respect to $\bc_j$ as follows,
\begin{equation}\label{eq8}
\begin{split}
    \nabla_{\bc_j} \overline{\phi}(t,\mathcal{S},\mathcal{C})=&\nabla_{\bc_j} \phi(t,\bdelta_j,\bc_j) \\
    = &\frac{\sum_{\bx_i\in\mathcal{S}_j} \!e^{tf(\bx_i,\bc_j)}\cdot\nabla_{\bc_j}f(\bx_i,\bc_j)}{\sum_{i=1}^n e^{tf(\bx_i,\bc_j)\delta_{ij}}}.
\end{split}
\end{equation}    
where $\nabla_{\bc_j}f(\bx_i,\bc_j):=2(\bx_i-\bc_j)$ is the first-order gradient of $f(\bx_i,\bc_j)$ with respect to $\bc_j$.
Then setting Equation (\ref{eq8}) equal to zero yields the optimal condition of $\bc_j$: 
\begin{align}\label{eq9}
    \sum_{\bx_i\in\mathcal{S}_j} e^{t\|\bx_i-\bc_j\|^2}(\bx_i-\bc_j)=0.
\end{align}

We define a tilted mean operator $\tm(\cdot)$, where $\bc_j=\tm(t,\mathcal{S}_j)$ represents the values of $\bc_j$ that satisfy Equation (\ref{eq9}). Note that obtaining the closed solution for $c_j$ from Equation (\ref{eq9}) is nontrivial, therefore, we employ the first-order gradient method to solve $\bc_j$. Let $\mathcal{B}$ be a batch data from $\mathcal{X}$, then $\bc_j$ is updated as follows:
\begin{align}\label{eq: sgd for cj}
    \bc_j &\leftarrow \bc_j - \eta\cdot \nabla_{\bc_j} \overline{\phi}(t,\mathcal{B},\mathcal{C}),\\
    \nabla_{\bc_j} \overline{\phi}(t,\mathcal{B},\mathcal{C})&=\frac{\sum_{\bx_i\in\mathcal{B}_j}e^{tf(\bx_i,\bc_j)}\cdot\nabla_{\bc_j}f(\bx_i,\bc_j)}{\sum_{i=1}^{n}e^{tf(\bx_i,\bc_j)\delta_{ij}}},
\end{align}    
where $\eta$ is a learning rate, and $\mathcal{B}_j:=\mathcal{S}_j\cap \mathcal{B}$. 
Note that the first-order gradient method is a commonly used optimization method for solving such problems. Interested readers may consider trying second-order gradient methods such as \textit{Newton method} \cite{Nesterov2004Introductory} for solving $\bc_j$.

\myparagraph{Algorithm Description}
The algorithmic process of \tkm can be summarized into three parts: initialization, assignment, and refinement.
We provide algorithm details for \tkm in Algorithm \ref{alg: sgd for tilted kmeans} and an example in Fig. \ref{fig: example of tkm}. Firstly, the centroids set $\mathcal{C}$ is initialized using $k$-means++ (Line \ref{line 1 alg 3} in Algorithm \ref{alg: sgd for tilted kmeans}). Subsequently, we employ CD to iteratively solve $\delta_{ij}$ (assignment) and $\bc_j$ (refinement) (Lines \ref{line 3 alg 3}-\ref{line 7 alg 3} in Algorithm \ref{alg: sgd for tilted kmeans}). We set $E$ epochs for solving $\bc_j$, where in each epoch, a batch $\mathcal{B}$ data is sampled from $\mathcal{X}$, and the data points within $\mathcal{B}\cap \mathcal{S}_j$ are used to solve $\bc_j$ using Equation~\eqref{eq: sgd for cj}.

\begin{figure}
  \centering
  \includegraphics[width=\linewidth]{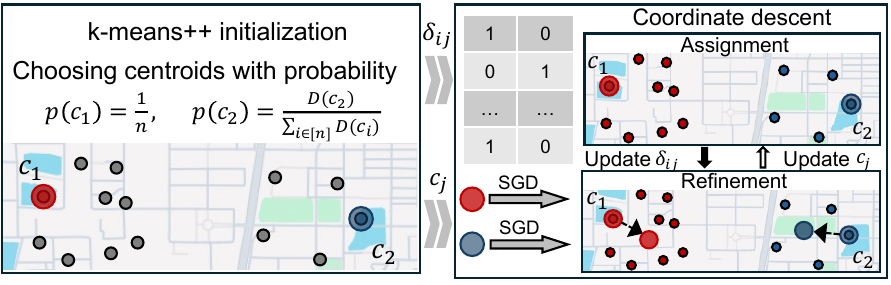}
  \caption{An example of \tkm includes the stages of initialization, assignment, and refinement.}
  \label{fig: example of tkm}
\end{figure}

\subsection{Theoretical Analysis}\label{sec: Theoretical Analysis}
Our theoretical analysis consists of five parts. The first part provides an approximation guarantee for the initial centroids obtained by $k$-means++ with respect to the tilted SSE. Then we present a convergence analysis of \tkm. Next, we delve into a fairness analysis of \tkm. In the fourth part, we explore the time complexity of \tkm. Finally, we analyze the monotonicity of the tilted SSE using a simple case.

\subsubsection{Definitions and Assumptions}
We begin by providing some definitions and assumptions used throughout our theories.

\begin{definition}[Tilted weight]
    Given a cluster $\mathcal{S}_j$ and a centroid $\bc_j$, the tilted weight $w_i(t,\mathcal{S}_j,\bc_j)$ of a data point $\bx_i$ is defined as 
    \begin{equation}
    \begin{split}
        w_i(t,\mathcal{S}_j,\bc_j):=&\,\,\frac{e^{tf(\bx_i,\bc_j)}}{\sum_{\bx_i\in\mathcal{S}_j} e^{tf(\bx_i,\bc_j)}}.
    \end{split}
    \end{equation}

\end{definition}

\begin{definition}[Tilted empirical mean and variance] 
Let $\mathrm{f}(\mathcal{S}_j,\bc_j):=\bigl\{f(\bx_i,\bc_j)|\bx_i\in\mathcal{S}_j\bigl\}$ be a set of squared Euclidean distances of points in $\mathcal{S}_j$ to the centroid $\bc_j$, then the tilted empirical mean and variance in the cluster $\mathcal{S}_j$ are defined as 
\begin{align}
    \mathbb{E}_t\bigl(\mathrm{f}(\mathcal{S}_j,\bc_j)\bigr)&:=\sum_{\bx_i\in\mathcal{S}_j}w_i(t,\mathcal{S}_j,\bc_j)\cdot f(\bx_i,\bc_j),\\
    \var_t\bigl(\mathrm{f}(\mathcal{S}_j,\bc_j)\bigr)&:=\mathbb{E}_t\Bigl(f(\bx_i,\bc_j)-\mathbb{E}_t\bigl(\mathrm{f}(\mathcal{S}_j,\bc_j)\bigr)\Bigr)^2.
\end{align}    
\end{definition}

Note that when $t=0$, tilted empirical mean and variance generalize to the standard mean and variance in statistics.

\begin{definition}[Gradient Lipschitz Continuity]
    The objective function $f: \mathbb{R}^d\rightarrow\mathbb{R}$ is continuously differentiable and the gradient function of $f$,
namely, $\nabla f: \mathbb{R}^d\rightarrow\mathbb{R}$, is gradient Lipschitz continuous with Lipschitz constant $L>0$, if for any $\bc,\bc'\in\mathbb{R}^d$, it holds that
\begin{align}
    \|\nabla f(\bc)-\nabla f(\bc')\|\leq L \|\bc-\bc'\|.
\end{align}
\end{definition}

\begin{definition}[Tilted Hessian]
    For any $t\geq 0$, we define the Tilted Hessian $\nabla^2_{\bc_j\bc_j^{\top}}\phi(t,\bdelta_j,\bc_j)$ as the Hessian of $\phi(t,\bdelta_j,\bc_j)$ with respect to $\bc_j$. That is
\begin{small}
\begin{align*}
    &\!\nabla^2_{\bc_j\bc_j^{\top}}\phi(t,\bdelta_j,\bc_j)\!=\frac{t}{n}\sum_{i=1}^n\bigl(\nabla_{\bc_j}f(\bx_i,\bc_j)\delta_{ij}-\nabla_{\bc_j}\phi(t,\bdelta_j,\bc_j)\bigl)\notag\\
    &\!\!\!\bigl(\nabla_{\bc_j}f(\bx_i,\bc_j)\delta_{ij}\!-\!\nabla_{\bc_j}\phi(t,\bdelta_j,\bc_j)\bigl)^{\top} \!e^{t\bigl(f(\bx_i,\bc_j)\delta_{ij}-\phi(t,\bdelta_j,\bc_j)\bigr)}\\
    &+\frac{1}{n}\sum_{i=1}^n e^{t\bigl(f(\bx_i,\bc_j)\delta_{ij}-\phi(t,\bdelta_j,\bc_j)\bigr)} \cdot2\mathbb{I}\delta_{ij},
\end{align*}      
\end{small}
and $\mathbb{I}$ is an identity matrix of appropriate size.
\end{definition}

\begin{lemma}[Strong Convexity of Tilted SSE \cite{li2023jmlr}]\label{lemma: strong convexity of tilted sse}
    For any $t\geq 0$, the tilted SSE is strongly convex with respect to $\bc_j$. That is
\begin{align*}
    \nabla^2_{\bc_j\bc_j^{\top}}\phi(t,\bdelta_j,\bc_j)\succeq \frac{2|\mathcal{S}_j|}{n}\mathbb{I}.
\end{align*}
\end{lemma}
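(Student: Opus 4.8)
The plan is to work directly from the explicit formula for the Tilted Hessian given in the preceding definition and discard the term that is manifestly positive semidefinite. Observe that the first sum in $\nabla^2_{\bc_j\bc_j^{\top}}\phi(t,\bdelta_j,\bc_j)$ is a sum of rank-one outer products $vv^{\top}$ (with $v = \nabla_{\bc_j}f(\bx_i,\bc_j)\delta_{ij} - \nabla_{\bc_j}\phi(t,\bdelta_j,\bc_j)$) scaled by the nonnegative factors $\tfrac{t}{n}e^{t(f(\bx_i,\bc_j)\delta_{ij} - \phi(t,\bdelta_j,\bc_j))}$; since $t \geq 0$, this entire sum is PSD and can be dropped from a lower bound. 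Hence it suffices to bound the second sum from below.

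First I would simplify the second sum: it equals $\tfrac{2}{n}\mathbb{I}\sum_{i=1}^n \delta_{ij}\, e^{t(f(\bx_i,\bc_j)\delta_{ij} - \phi(t,\bdelta_j,\bc_j))}$. The key algebraic step is to recognize that $e^{-t\phi(t,\bdelta_j,\bc_j)}$ is, by definition of $\phi$ in Equation~\eqref{eq: tkm}, exactly $\bigl(\tfrac{1}{n}\sum_{i=1}^n e^{tf(\bx_i,\bc_j)\delta_{ij}}\bigr)^{-1}$. So the scalar multiplier becomes
\begin{align*}
\frac{2}{n}\cdot\frac{\sum_{i=1}^n \delta_{ij}\, e^{tf(\bx_i,\bc_j)\delta_{ij}}}{\frac{1}{n}\sum_{i=1}^n e^{tf(\bx_i,\bc_j)\delta_{ij}}}.
\end{align*}
Splitting the denominator's sum over $i$ into indices with $\delta_{ij}=1$ (where $e^{tf(\bx_i,\bc_j)\delta_{ij}} = e^{tf(\bx_i,\bc_j)}$) and indices with $\delta_{ij}=0$ (where the term equals $1$), and noting that for $t\geq 0$ and the squared distance $f\geq 0$ we have $e^{tf(\bx_i,\bc_j)}\geq 1$, one checks that the numerator $\sum_{\bx_i\in\mathcal{S}_j} e^{tf(\bx_i,\bc_j)}$ is at least $\tfrac{|\mathcal{S}_j|}{n}$ times the denominator $\sum_{i=1}^n e^{tf(\bx_i,\bc_j)\delta_{ij}}$ — because the denominator is a sum of $|\mathcal{S}_j|$ terms each $\geq 1$ plus $(n-|\mathcal{S}_j|)$ terms equal to $1$, i.e.\ a convex-combination-type comparison. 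This yields the multiplier $\geq \tfrac{2|\mathcal{S}_j|}{n}$, and combined with dropping the PSD first sum gives $\nabla^2_{\bc_j\bc_j^{\top}}\phi \succeq \tfrac{2|\mathcal{S}_j|}{n}\mathbb{I}$.

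I expect the main obstacle to be the bookkeeping in the ratio estimate: one must be careful that $\phi(t,\bdelta_j,\bc_j)$ uses the normalization $\tfrac{1}{n}$ rather than $\tfrac{1}{|\mathcal{S}_j|}$, so the $e^{-t\phi}$ factor cancels correctly, and one must track the $\delta_{ij}$ factors in both the Hessian's second sum and the definition of $\phi$. A cleaner route, which I would present as an alternative, is to rewrite everything using the tilted weights $w_i(t,\mathcal{S}_j,\bc_j)$: the second sum of the Hessian is $\tfrac{2}{n}\mathbb{I}\cdot\bigl(\sum_{\bx_i\in\mathcal{S}_j} e^{tf(\bx_i,\bc_j)}\bigr)/\bigl(\tfrac{1}{n}\sum_{i} e^{tf(\bx_i,\bc_j)\delta_{ij}}\bigr)$, and since $\sum_i e^{tf(\bx_i,\bc_j)\delta_{ij}} = \sum_{\bx_i\in\mathcal{S}_j} e^{tf(\bx_i,\bc_j)} + (n-|\mathcal{S}_j|)$, the bound reduces to the elementary inequality $\tfrac{A}{A + (n-m)} \geq \tfrac{m}{n}$ whenever $A \geq m$ (here $A = \sum_{\bx_i\in\mathcal{S}_j} e^{tf(\bx_i,\bc_j)} \geq |\mathcal{S}_j| = m$ because each summand is $\geq 1$). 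Since this is cited as a known result from \cite{li2023jmlr}, I would also remark that it specializes their general strong-convexity statement for tilted objectives to the quadratic losses $f(\bx_i,\bc_j) = \|\bx_i - \bc_j\|^2$, whose Hessians are $2\mathbb{I}$, which is precisely what makes the constant $2|\mathcal{S}_j|/n$ appear.
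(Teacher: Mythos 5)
Your proposal is correct and follows essentially the same route as the paper: discard the first (manifestly PSD) sum of the tilted Hessian and lower-bound the second sum by $\tfrac{2|\mathcal{S}_j|}{n}\mathbb{I}$. The only difference is that you spell out the ratio estimate $\tfrac{A}{A+(n-|\mathcal{S}_j|)}\geq\tfrac{|\mathcal{S}_j|}{n}$ with $A=\sum_{\bx_i\in\mathcal{S}_j}e^{tf(\bx_i,\bc_j)}\geq|\mathcal{S}_j|$, which is exactly the detail the paper's one-line proof leaves implicit (and is genuinely needed, since individual exponents $f(\bx_i,\bc_j)\delta_{ij}-\phi$ can be negative).
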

\begin{proof}
    Note that the first term in tilted Hessian is positive semi-definite, and the second term is positive definite and lower bounded by $\frac{2|\mathcal{S}_j|}{n}\mathbb{I}$, which completes the proof.
\end{proof}

\begin{lemma}[Gradient Lipschitz Continuity of Tilted SSE \cite{li2023jmlr}]\label{lemma: gradient lipschitz of tilted sse}
    For any $t\geq 0$, $\nabla_{\bc_j}\phi(t,\bdelta_j,\bc_j)$ is $L(t)$-Lipschitz with respect to $\bc_j$, where $L(t):=\sigma_{\max}\Bigl(\nabla^2_{\bc_j\bc_j^{\top}}\phi(t,\bdelta_j,\bc_j)\Bigr)$, and $\sigma_{\max}$ denotes the largest eigenvalue.
  
\end{lemma}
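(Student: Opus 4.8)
The plan is to show that the Lipschitz constant $L(t)$ is indeed a valid (finite) Lipschitz constant for $\nabla_{\bc_j}\phi(t,\bdelta_j,\bc_j)$, i.e., that the claimed identity $L(t)=\sigma_{\max}\bigl(\nabla^2_{\bc_j\bc_j^{\top}}\phi(t,\bdelta_j,\bc_j)\bigr)$ makes sense and yields the inequality in Definition~3. First I would invoke the standard fact that a twice continuously differentiable function whose Hessian is bounded in spectral norm by $M$ everywhere is $M$-gradient-Lipschitz; this follows from the mean value theorem applied to $s\mapsto\nabla\phi(\bc+s(\bc'-\bc))$ together with Cauchy--Schwarz. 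So the core of the argument is: (i) the map $\bc_j\mapsto\phi(t,\bdelta_j,\bc_j)$ is $C^2$ (clear, since it is a composition of exponentials, the smooth quadratic $f(\bx_i,\bc_j)=\|\bx_i-\bc_j\|^2$, and $\log$ of a strictly positive sum), and (ii) $\sigma_{\max}\bigl(\nabla^2_{\bc_j\bc_j^{\top}}\phi\bigr)$ is uniformly bounded over $\bc_j\in\mathbb{R}^d$.

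For step (ii) I would bound the two terms of the Tilted Hessian given in Definition~4 separately. The second term is $\frac{2}{n}\sum_{i=1}^n e^{t(f(\bx_i,\bc_j)\delta_{ij}-\phi(t,\bdelta_j,\bc_j))}\mathbb{I}\delta_{ij}$, and since the exponential weights $w_i$ sum to one over $\mathcal{S}_j$ after normalization, this term equals $\frac{2|\mathcal{S}_j|}{n}\mathbb{I}$ times an average of positive quantities with mean one — in fact it is exactly $\frac{2|\mathcal{S}_j|}{n}\mathbb{I}$ when one recognizes $e^{t(f(\bx_i,\bc_j)\delta_{ij}-\phi)} = n\,w_i(t,\mathcal{S}_j,\bc_j)\cdot\frac{1}{|\mathcal{S}_j|}\cdot|\mathcal{S}_j|$ summing appropriately; more carefully, $\frac1n\sum_i e^{t(f(\bx_i,\bc_j)\delta_{ij}-\phi)}\delta_{ij}=1$ by the definition of $\phi$, so the second term is $2\mathbb{I}$. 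For the first term, each summand is a rank-one PSD matrix $\bv_i\bv_i^{\top}$ scaled by $\frac{t}{n}e^{t(\cdots)}\delta_{ij}$ with $\bv_i=\nabla_{\bc_j}f(\bx_i,\bc_j)\delta_{ij}-\nabla_{\bc_j}\phi$; its spectral norm is $\frac{t}{n}\|\bv_i\|^2 e^{t(\cdots)}\delta_{ij}$, and using $\nabla_{\bc_j}f=2(\bx_i-\bc_j)$ together with the fact that $\nabla_{\bc_j}\phi$ is a convex combination of the $\nabla_{\bc_j}f(\bx_i,\bc_j)$, one gets $\|\bv_i\|\le 2\,\mathrm{diam}(\mathcal{S}_j)$, so the first term has spectral norm at most $8t\,\mathrm{diam}(\mathcal{S}_j)^2$ after summing the weights to one. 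Hence $L(t)\le 2+8t\,\mathrm{diam}(\mathcal{S}_j)^2<\infty$, confirming it is well defined, and by construction $L(t)$ is the tightest such uniform spectral bound.

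The main obstacle I anticipate is subtle rather than computational: the Lipschitz constant $L(t)=\sigma_{\max}(\nabla^2\phi)$ as written appears to depend on $\bc_j$, whereas a genuine Lipschitz constant must be a single number valid for all $\bc,\bc'$. So the real content of the proof is to argue that the relevant quantity is $\sup_{\bc_j}\sigma_{\max}\bigl(\nabla^2_{\bc_j\bc_j^{\top}}\phi(t,\bdelta_j,\bc_j)\bigr)$, and that this supremum is finite (attained, in fact, since the Hessian entries depend continuously on $\bc_j$ and the bounding argument shows the relevant behaviour is controlled on a compact region containing the minimizer — or one simply notes the explicit finite bound above). Once this uniform boundedness is in hand, the mean value theorem step is routine. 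I would therefore structure the write-up as: state the $C^2$ property, derive the uniform spectral bound on the Tilted Hessian from Definition~4 (reusing the computation already present there and the weight-normalization identity), and then conclude via the standard mean-value-theorem argument that $\nabla_{\bc_j}\phi$ is $L(t)$-Lipschitz with $L(t)$ equal to that supremum.
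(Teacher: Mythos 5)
The paper itself does not prove this lemma: it is imported verbatim from the TERM paper \cite{li2023jmlr}, with the Tilted Hessian of Definition~4 serving as the relevant object. Your route — show $\phi(t,\bdelta_j,\cdot)$ is $C^2$, bound $\sigma_{\max}$ of the Hessian uniformly over $\bc_j$, then conclude by the mean value theorem — is exactly the standard argument behind the cited result (and is the same machinery the paper later re-derives as Proposition~2 in the proof of Theorem~2), so structurally you are on the right track. Your observation that $L(t):=\sigma_{\max}\bigl(\nabla^2_{\bc_j\bc_j^{\top}}\phi\bigr)$ as written depends on $\bc_j$, and must really be read as $\sup_{\bc_j}\sigma_{\max}(\cdot)$ for the Lipschitz claim to make sense, is a legitimate sharpening that neither the statement nor the citation addresses.

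However, your explicit bounding of the two Hessian terms leans on identities that fail under this paper's normalization, where $\phi(t,\bdelta_j,\bc_j)=\frac{1}{t}\log\frac{1}{n}\sum_{i=1}^{n}e^{tf(\bx_i,\bc_j)\delta_{ij}}$ averages over all $n$ points, so each $\bx_i\notin\mathcal{S}_j$ contributes $e^{0}=1$ to the denominator. First, the normalization identity is $\frac{1}{n}\sum_{i=1}^n e^{t(f(\bx_i,\bc_j)\delta_{ij}-\phi)}=1$ \emph{without} an extra $\delta_{ij}$ factor; with it, the sum is strictly less than $1$, so the second Hessian term is $\preceq 2\mathbb{I}$ rather than $=2\mathbb{I}$ (harmless for an upper bound, but the equality you assert is false). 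Second, and more seriously, $\nabla_{\bc_j}\phi=\frac{\sum_{\bx_i\in\mathcal{S}_j}e^{tf(\bx_i,\bc_j)}\nabla_{\bc_j}f(\bx_i,\bc_j)}{\sum_{i=1}^n e^{tf(\bx_i,\bc_j)\delta_{ij}}}$ is only a \emph{sub}-convex combination of the cluster gradients (the weights sum to $W<1$ because of the $n-|\mathcal{S}_j|$ unit terms), so the cancellation of $\bc_j$ in $\bv_i=\nabla_{\bc_j}f(\bx_i,\bc_j)\delta_{ij}-\nabla_{\bc_j}\phi$ is not exact and $\|\bv_i\|\le 2\,\mathrm{diam}(\mathcal{S}_j)$ does not hold uniformly in $\bc_j$; moreover the first Hessian sum also runs over $i\notin\mathcal{S}_j$, contributing $\nabla_{\bc_j}\phi\,\nabla_{\bc_j}\phi^{\top}e^{-t\phi}$ terms you drop. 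The uniform-boundedness conclusion survives, because the offending pieces carry weights of order $e^{-t\phi}$ or $1-W$ that decay like $e^{-t(\|\bc_j\|-\max_i\|\bx_i\|)^2}$ and therefore kill the polynomially growing factors as $\|\bc_j\|\to\infty$, but that damping argument is the missing step: as written, your bound $L(t)\le 2+8t\,\mathrm{diam}(\mathcal{S}_j)^2$ is not justified, and the constant must be enlarged (or the argument rerun with the correct weights) before the mean-value-theorem step can be invoked.
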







\begin{assumption}\label{assumption: expetation}
    Let $g(\mathcal{B},\bc_j):=\overline{\phi}(t,\mathcal{B},\mathcal{C})$ denote the mini-batch gradient of $\overline{\phi}(t,\mathcal{S},\mathcal{C})$, then the following conditions hold:
\begin{itemize}
    \item There exist scalars $\mu_G\geq \mu>0$ such that for any $\bc_j\in\mathbb{R}^d$,  
    \begin{align*}
        \nabla_{\bc_j}\phi(t,\bdelta_j,\bc_j)^{\top}\mathbb{E}[g(&\mathcal{B},\bc_j)]\geq\mu\cdot\|\nabla_{\bc_j}\phi(t,\bdelta_j,\bc_j)\|^2,\\
        \|\mathbb{E}[g(\mathcal{B},\bc_j)]\|&\leq \mu_G \cdot\|\nabla_{\bc_j}\phi(t,\bdelta_j,\bc_j)\|.
    \end{align*}
    \item There exist scalars $\nu\geq 0$ and $\nu_H\geq 0$ such that for any $\bc_j\in\mathbb{R}^d$, it holds that
    \begin{align*}
        \mathbb{E}[\|g(\mathcal{B},\bc_j)-\mathbb{E}[g(\mathcal{B},\bc_j)]\|^2]\leq \nu + \nu_H\cdot\|\nabla_{\bc_j}\phi(t,\bdelta_j,\bc_j)\|^2.
    \end{align*}
\end{itemize}
\end{assumption}


The first requirement in Assumption \ref{assumption: expetation} states that in expectation, the vector $-g(\mathcal{B},\bc_j)$ is a direction of sufficient descent for $\phi$ from $\bc_j$ with a norm comparable to the norm of the gradient. The second requirement in Assumption \ref{assumption: expetation}, states that the variance of $g(\mathcal{B},\bc_j)$ is restricted, but in a relatively minor manner.

\subsubsection{Approximation Guarantee}
Let $\overline{\phi}^*$ represent the optimal value of $\overline{\phi}$, we aim to prove that $k$-means++ can ensure the resulting initial centroids set $\mathcal{C}$ satisfy 
$\mathbb{E}[\overline{\phi}(t,\mathcal{S},\mathcal{C})]\leq\alpha\cdot\overline{\phi}^*$, where $\alpha$ is a multiplicative error. Next, we mathematically obtain the value of $\alpha$.

\begin{theorem}\label{theorem 1}
    Let $\overline{\phi}^*$ be the optimal value of tilted SSE, Let $\overline{\psi}^{\star}$ be the optimal value of SSE, then for any dataset $\mathcal{X}$, centroids set $\mathcal{C}$ initialized by $k$-means++, and induced clusters $\mathcal{S}$, it holds that 
    \begin{align}
        \mathbb{E}[\overline{\phi}(t,\mathcal{S},\mathcal{C})]\leq O(k \log k)\cdot\overline{\psi}^{\star}\leq O(k \log k)\cdot\overline{\phi}^*.
    \end{align}

\end{theorem}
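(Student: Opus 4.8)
The plan is to reduce the statement for tilted SSE to the classical $k$-means++ approximation guarantee, which bounds the expected SSE of the $k$-means++ centroids by $O(\log k)$ times the optimal SSE $\overline{\psi}^{\star}$. The two inequalities to establish are: (i) $\mathbb{E}[\overline{\phi}(t,\mathcal{S},\mathcal{C})]\leq O(k\log k)\cdot\overline{\psi}^{\star}$, where the expectation is over the randomness of $k$-means++ and $\mathcal{S}$ is the clustering induced by assigning each point to its nearest centroid in $\mathcal{C}$; and (ii) $\overline{\psi}^{\star}\leq O(k\log k)\cdot\overline{\phi}^*$, or more simply $\overline{\psi}^{\star}\leq\overline{\phi}^*$, since it is known that $\phi(t,\bdelta_j,\bc_j)\geq\psi(\bdelta_j,\bc_j)$ for $t\geq 0$ (the log-sum-exp / tilted mean dominates the arithmetic mean), and this inequality is preserved under summation over clusters and under taking the optimum.

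For inequality (i), the first step is to relate $\overline{\phi}$ to $\overline{\psi}$ pointwise. Since $\phi(t,\bdelta_j,\bc_j)=\frac1t\log\frac1n\sum_i e^{tf(\bx_i,\bc_j)\delta_{ij}}$, and using that there are at most $|\mathcal{S}_j|\leq n$ nonzero terms each bounded by $e^{t\max_i f(\bx_i,\bc_j)\delta_{ij}}$, one gets a crude bound $\phi(t,\bdelta_j,\bc_j)\leq\frac1t\log\frac{|\mathcal{S}_j|}{n}+\max_{\bx_i\in\mathcal{S}_j}f(\bx_i,\bc_j)$. However, $\max_i f$ cannot be controlled by the average, so instead I would bound the log-sum-exp more carefully: $\frac1t\log\frac1n\sum_i e^{tf(\bx_i,\bc_j)\delta_{ij}}\leq\frac1t\log\Bigl(\frac{|\mathcal{S}_j|}{n}\max_{\bx_i\in\mathcal{S}_j}e^{tf(\bx_i,\bc_j)}\Bigr)$, and then use $\max_{\bx_i\in\mathcal{S}_j} f(\bx_i,\bc_j)\leq\sum_{\bx_i\in\mathcal{S}_j}f(\bx_i,\bc_j)$ — but this gives an $n$-independent blow-up only if the cluster sum is $O(\psi)$. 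The cleaner route: the log-sum-exp satisfies $\frac1t\log\sum_i e^{ta_i}\leq\max_i a_i+\frac1t\log(\text{number of terms})$, hence $\phi(t,\bdelta_j,\bc_j)\leq\max_{\bx_i\in\mathcal{S}_j}f(\bx_i,\bc_j)+\frac1t\log\frac{|\mathcal{S}_j|}{n}\leq\max_{\bx_i\in\mathcal{S}_j}f(\bx_i,\bc_j)$, and then $\max_{\bx_i\in\mathcal{S}_j}f(\bx_i,\bc_j)\leq n\cdot\psi(\bdelta_j,\bc_j)$ is too lossy. The factor $k$ (rather than $n$) in the theorem suggests the intended bound is $\overline{\phi}(t,\mathcal{S},\mathcal{C})\leq k\cdot\overline{\psi}(\mathcal{S},\mathcal{C})$ via $\phi(t,\bdelta_j,\bc_j)\leq\sum_{l=1}^k\psi(\bdelta_l,\bc_l)=\overline{\psi}$ using $\phi\leq\max_i f\leq n\psi$ and then a counting argument, or more plausibly via $\max_{\bx_i\in\mathcal{S}_j}f(\bx_i,\bc_j)\le\overline{\psi}(\mathcal{S},\mathcal{C})\cdot\frac{n}{1}$ — so I expect the actual argument bounds $\phi(t,\bdelta_j,\bc_j)$ by the total SSE $n\cdot\overline{\psi}$ over one cluster and the factor $k$ comes from summing $k$ clusters while absorbing the per-cluster $1/n$ normalization; combined with $\mathbb{E}[\overline{\psi}(\mathcal{S},\mathcal{C})]\leq O(\log k)\overline{\psi}^{\star}$ from Arthur–Vassilvitskii, this yields $\mathbb{E}[\overline{\phi}(t,\mathcal{S},\mathcal{C})]\leq O(k\log k)\overline{\psi}^{\star}$.

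The main obstacle I anticipate is making the reduction $\phi\leq(\text{const})\cdot\psi$ precise with the claimed dependence $O(k)$ rather than $O(n)$: the log-sum-exp term $\frac1t\log\frac1n\sum_i e^{tf_i\delta_{ij}}$ is dominated by the largest distance in the cluster, and bounding a maximum by a (normalized) sum inherently costs a factor equal to the number of summands unless one exploits that only the nearest-centroid assignment is used, so that each point's squared distance is at most its contribution to $\overline{\psi}^{\star}$ scaled appropriately. I would handle this by writing $\frac1t\log\frac1n\sum_{\bx_i\in\mathcal{S}_j}e^{tf(\bx_i,\bc_j)}\le\max_{\bx_i\in\mathcal{S}_j}f(\bx_i,\bc_j)\le\sum_{j'=1}^k\max_{\bx_i\in\mathcal{S}_{j'}}f(\bx_i,\bc_{j'})\le n\sum_{j'=1}^k\psi(\bdelta_{j'},\bc_{j'})=n\,\overline{\psi}(\mathcal{S},\mathcal{C})$, then sum over $j\in[k]$ to get $\overline{\phi}(t,\mathcal{S},\mathcal{C})\le kn\,\overline{\psi}(\mathcal{S},\mathcal{C})$ — and if the $n$ truly cannot be removed, reconcile it with the theorem's $O(k\log k)$ by noting the $\psi$'s in this paper carry a $1/n$ normalization that cancels it. Finally, inequality (ii) is immediate from $\phi\ge\psi$ pointwise: taking clusters/centroids optimal for $\overline{\phi}$ gives $\overline{\psi}^{\star}\le\overline{\psi}(\mathcal{S}^{\phi},\mathcal{C}^{\phi})\le\overline{\phi}(t,\mathcal{S}^{\phi},\mathcal{C}^{\phi})=\overline{\phi}^*\le O(k\log k)\overline{\phi}^*$, closing the chain.
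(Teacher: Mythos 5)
Your overall architecture matches the paper's: for inequality (ii) you argue exactly as the paper does (Jensen gives $\psi(\bdelta_j,\bc_j)\leq\phi(t,\bdelta_j,\bc_j)$ per cluster, then optimality of $\overline{\psi}^{\star}$ gives $\overline{\psi}^{\star}\leq\overline{\phi}^*$), and for inequality (i) both you and the paper look for a deterministic bound of the form $\overline{\phi}(t,\mathcal{S},\mathcal{C})\leq\epsilon\cdot\overline{\psi}(\mathcal{S},\mathcal{C})$ to combine with the Arthur--Vassilvitskii guarantee $\mathbb{E}[\overline{\psi}(\mathcal{S},\mathcal{C})]\leq 8(\log k+2)\,\overline{\psi}^{\star}$. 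The gap is in how you obtain $\epsilon$. Your chain $\phi(t,\bdelta_j,\bc_j)\leq\max_{\bx_i\in\mathcal{S}_j}f(\bx_i,\bc_j)\leq n\,\psi(\bdelta_j,\bc_j)$ is correct but only yields $\overline{\phi}(t,\mathcal{S},\mathcal{C})\leq kn\,\overline{\psi}(\mathcal{S},\mathcal{C})$, i.e.\ an $O(nk\log k)$ multiplicative error, and your proposed rescue --- that the $1/n$ normalization of $\psi$ cancels the extra $n$ --- does not work: that $1/n$ is precisely what you already spent to get $\max_{\bx_i\in\mathcal{S}_j}f(\bx_i,\bc_j)\leq n\,\psi(\bdelta_j,\bc_j)$, so there is nothing left to cancel.

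The paper's key step (its lemma producing the scalar $\epsilon$) avoids the factor $n$ by comparing against the \emph{global minimum} point-to-centroid distance instead of the per-cluster average: it bounds each $\phi(t,\bdelta_j,\bc_j)$ by $\max_{\bx_i\in\mathcal{X}}dist(\bx_i,\mathcal{C})$, where $dist(\bx_i,\mathcal{C}):=\min_{\bc_l\in\mathcal{C}}\|\bx_i-\bc_l\|^2$, and lower-bounds $\overline{\psi}(\mathcal{S},\mathcal{C})=\frac1n\sum_{i}dist(\bx_i,\mathcal{C})\geq\min_{\bx_i\in\mathcal{X}}dist(\bx_i,\mathcal{C})$, so that summing over the $k$ clusters gives $\overline{\phi}(t,\mathcal{S},\mathcal{C})\leq k\cdot\frac{\max_{\bx_i\in\mathcal{X}}dist(\bx_i,\mathcal{C})}{\min_{\bx_i\in\mathcal{X}}dist(\bx_i,\mathcal{C})}\cdot\overline{\psi}(\mathcal{S},\mathcal{C})$. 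The factor $k$ comes from the $k$ clusters; the distance ratio is a data-dependent quantity that the paper absorbs into the $O(\cdot)$. So the missing idea is to pay a condition-number-like ratio $\max dist/\min dist$ rather than the cluster size; without it your argument proves only $\mathbb{E}[\overline{\phi}(t,\mathcal{S},\mathcal{C})]\leq O(nk\log k)\,\overline{\psi}^{\star}$, not the claimed bound. (Your suspicion is not misplaced, though: the paper's $O(k\log k)$ silently treats that distance ratio as a constant, a hidden dependence your lossier but explicit bound at least makes visible.)
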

The proof of Theorem \ref{theorem 1} can be found in Section \ref{sec: proof of theorem 1}.
$k$-means++ has been proven to generate initial centroids with a multiplicative error of $O(\log k)$ in $k$-means when \textit{fairness constraints are not considered} \cite{arthur2007k}. 
Theorem \ref{theorem 1} demonstrates that \textit{with individual fairness constraints}, $k$-means++ achieves the multiplicative error of $O(k \log k)$.

\subsubsection{Convergence Analysis}
Next, we provide the convergence analysis of \tkm by proving that the assignment and refinement steps ensure that the expected value of the tilted SSE decreases.
\begin{theorem}\label{theorem: convergence}
    Let $\mathcal{S}^{it}$, $\mathcal{C}^{it}$ and $\mathcal{S}^{it+1}$, $\mathcal{C}^{it+1}$ be the solutions in the $it$-th and $(it+1$)-th iterations of \tkm. Under Assumption \ref{assumption: expetation} and by choosing the learning rate $\eta<\frac{2}{\mu\cdot L(t)}$, it holds that
\begin{align}
    \mathbb{E}[\overline{\phi}(t,\mathcal{S}^{it+1},\mathcal{C}^{it+1})]\leq \overline{\phi}(t,\mathcal{S}^{it},\mathcal{C}^{it}).
\end{align}
\end{theorem}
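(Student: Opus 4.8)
The plan is to split the iteration into its two constituent steps---assignment (updating $\delta_{ij}$ with $\bc_j$ fixed) and refinement (running $E$ epochs of SGD on $\bc_j$ with $\delta_{ij}$ fixed)---and show that neither step increases the expected tilted SSE. For the assignment step this is essentially immediate: by Equation~\eqref{eq: solution to fij} each point is reassigned to the centroid minimizing $f(\bx_i,\bc_l)$, and since $\overline{\phi}$ is monotonically increasing in each term $tf(\bx_i,\bc_j)\delta_{ij}$ (as noted in the text right after the algorithm), the new assignment $\bdelta^{it+1}$ satisfies $\overline{\phi}(t,\mathcal{S}^{it+1},\mathcal{C}^{it})\leq\overline{\phi}(t,\mathcal{S}^{it},\mathcal{C}^{it})$. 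More carefully, since the tilted SSE decomposes as a sum over clusters and the log-sum-exp is increasing coordinatewise, reassigning each $\bx_i$ to the cluster with smallest $f(\bx_i,\bc_j)$ does not increase any of the $k$ summands relative to keeping the old assignment.

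For the refinement step, the idea is to invoke a standard SGD descent lemma. By Lemma~\ref{lemma: gradient lipschitz of tilted sse}, $\nabla_{\bc_j}\phi(t,\bdelta_j,\bc_j)$ is $L(t)$-Lipschitz, and by Lemma~\ref{lemma: strong convexity of tilted sse} the objective is strongly convex (so in particular the minimizer over $\bc_j$ is unique and SGD has a well-defined target). Under Assumption~\ref{assumption: expetation}, the usual one-step analysis of SGD with a constant step size applies: starting from the $L$-smoothness inequality $\phi(t,\bdelta_j,\bc_j^+)\leq \phi(t,\bdelta_j,\bc_j) + \nabla_{\bc_j}\phi^{\top}(\bc_j^+-\bc_j) + \tfrac{L(t)}{2}\|\bc_j^+-\bc_j\|^2$, substituting the update $\bc_j^+ = \bc_j - \eta\, g(\mathcal{B},\bc_j)$, and taking conditional expectation over the mini-batch $\mathcal{B}$, the first requirement of Assumption~\ref{assumption: expetation} lower-bounds the inner-product term by $-\eta\mu\|\nabla_{\bc_j}\phi\|^2$, while the variance bound controls $\mathbb{E}\|g\|^2 \le \nu + (\nu_H + \mu_G^2)\|\nabla_{\bc_j}\phi\|^2$. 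Collecting terms yields
\begin{align*}
    \mathbb{E}[\phi(t,\bdelta_j,\bc_j^+)] \le \phi(t,\bdelta_j,\bc_j) - \eta\Bigl(\mu - \tfrac{\eta L(t)}{2}(\nu_H+\mu_G^2)\Bigr)\|\nabla_{\bc_j}\phi\|^2 + \tfrac{\eta^2 L(t)}{2}\nu.
\end{align*}
The condition $\eta < \tfrac{2}{\mu L(t)}$ (together with an implicit smallness requirement absorbed into the constants) makes the coefficient of $\|\nabla_{\bc_j}\phi\|^2$ nonnegative, so in expectation each SGD step does not increase $\phi$; summing over clusters $j\in[k]$ and over the $E$ epochs, and then chaining with the assignment-step inequality via the tower property, gives $\mathbb{E}[\overline{\phi}(t,\mathcal{S}^{it+1},\mathcal{C}^{it+1})]\le\overline{\phi}(t,\mathcal{S}^{it},\mathcal{C}^{it})$.

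The main obstacle I anticipate is the $\nu$ term: with a constant learning rate and nonzero gradient noise $\nu>0$, the bound above has an additive $\tfrac{\eta^2 L(t)}{2}\nu$ that is \emph{not} obviously dominated, so a literal ``non-increasing'' conclusion requires either $\nu=0$ (interpolation-type assumption, plausible here since at a cluster's tilted mean the full gradient vanishes and mini-batch gradients may too in the relevant regime), or a more delicate argument---e.g.\ bounding the step only when $\|\nabla_{\bc_j}\phi\|$ is not already small, or arguing that the refinement is run until the per-step decrease outweighs the noise floor. I would therefore either state the result under the reading that Assumption~\ref{assumption: expetation} is used with the strong-growth ($\nu=0$) form, or note that the learning-rate condition $\eta<\tfrac{2}{\mu L(t)}$ should be paired with the gradient being bounded away from zero before convergence; the rest of the proof is the routine smoothness-plus-assumption bookkeeping sketched above. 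A secondary minor point is making precise that the mini-batch gradient $g(\mathcal{B},\bc_j)$ in Equation~\eqref{eq: sgd for cj}, which divides by the \emph{full-sample} normalizer $\sum_i e^{tf(\bx_i,\bc_j)\delta_{ij}}$ rather than a batch normalizer, indeed satisfies Assumption~\ref{assumption: expetation}; I would treat this as given since the assumption is postulated, not derived.
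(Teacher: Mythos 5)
Your proposal follows essentially the same route as the paper's proof: the same split of an iteration into assignment (optimality of the nearest-centroid rule for $\bdelta_j$) and refinement, and the same smoothness/descent-lemma argument combined with Assumption \ref{assumption: expetation}, arriving at the identical per-step bound $\mathbb{E}[\phi(t,\bdelta_j,\bc_j^{+})]\le\phi(t,\bdelta_j,\bc_j)-\eta\bigl(\mu-\tfrac{1}{2}\eta\nu_G L(t)\bigr)\|\nabla_{\bc_j}\phi\|^2+\tfrac{1}{2}\eta^2\nu L(t)$ with $\nu_G:=\nu_H+\mu_G^2$ and the resulting step-size condition $\eta<\tfrac{2\mu}{\nu_G L(t)}\le\tfrac{2}{\mu L(t)}$, then summing over clusters and chaining with the assignment inequality. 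The residual noise term $\tfrac{1}{2}\eta^2\nu L(t)$ that you flag is a genuine point, but the paper's own proof silently drops it when concluding the refinement step is non-increasing, so your explicit acknowledgment (e.g.\ via a $\nu=0$ reading of the assumption) is, if anything, more careful than the published argument.
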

The proof of Theorem \ref{theorem: convergence} is provided in Section \ref{proof of theorem convergence}. 
Theorem \ref{theorem: convergence} demonstrates that with the selection of an appropriate learning rate, the expected value of the tilted SSE can decrease until reaching convergence.

\subsubsection{Fairness Analysis}
We propose using the variance of each data point's squared distance to the centroid within each cluster to measure the fairness of clustering algorithms. 
Note that when $t=0$ in the tilted weight, the tilted empirical variance generalizes to standard variance. 
We employ variance as a measure of fairness because it quantifies the extent to which sample points in a dataset are distributed around the mean, with smaller variance indicating reduced fluctuation in distances from the mean and thus greater fairness.
Next, we consider the monotonicity of the tilted empirical variance with $t$.

\begin{theorem}\label{theorem: variance decrease with t}
    For any cluster $\mathcal{S}_j$, any corresponding centroid $\bc_j(t)=\tm(t,\mathcal{S}_j)$, and any $t\geq 0$, suppose all data points are normalized to a unit norm, then it holds that
\begin{align}
    \frac{\partial }{\partial t}\Bigl\{\var_\tau\Bigl(\mathrm{f}(\mathcal{S}_j,\bc_j(t))\Bigr)\Bigr\}< 0.
\end{align}
\end{theorem}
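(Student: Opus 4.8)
\textbf{Proof proposal for Theorem~\ref{theorem: variance decrease with t}.}

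The plan is to differentiate the tilted empirical variance $\var_\tau(\mathrm{f}(\mathcal{S}_j,\bc_j(t)))$ with respect to $t$, treating the centroid $\bc_j(t)=\tm(t,\mathcal{S}_j)$ as an implicit function of $t$ defined by the optimality condition \eqref{eq9}. There are two sources of $t$-dependence: the explicit $t$ appearing (if $\tau$ itself is taken as $t$, or more generally a fixed reference $\tau$ in the tilted weights) and the implicit dependence through $\bc_j(t)$. I would first write $\var_\tau(\mathrm{f}(\mathcal{S}_j,\bc_j(t)))=\mathbb{E}_\tau(f^2)-(\mathbb{E}_\tau(f))^2$ where $f=f(\bx_i,\bc_j(t))$, and then apply the chain rule. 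The key structural fact to exploit is that at the optimal centroid $\bc_j(t)$, the gradient condition $\sum_{\bx_i\in\mathcal{S}_j}e^{t\|\bx_i-\bc_j\|^2}(\bx_i-\bc_j)=0$ holds, which should make many first-order terms collapse; this is the analogue of the envelope theorem, and it is why the normalization to unit norm is invoked (so that $f(\bx_i,\bc_j)=\|\bx_i\|^2-2\langle\bx_i,\bc_j\rangle+\|\bc_j\|^2=2-2\langle\bx_i,\bc_j\rangle$, turning squared distances into affine functions of the inner product).

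The main steps, in order, would be: (i) use the unit-norm normalization to replace $f(\bx_i,\bc_j)$ by $2-2\langle \bx_i,\bc_j\rangle$, so that the variance of $f$ over the cluster is $4$ times the tilted variance of the scalar quantity $\langle\bx_i,\bc_j(t)\rangle$; (ii) compute $\frac{d}{dt}\bc_j(t)$ by implicit differentiation of \eqref{eq9}, expressing it in terms of the tilted Hessian (which Lemma~\ref{lemma: strong convexity of tilted sse} guarantees is invertible and $\succeq\frac{2|\mathcal{S}_j|}{n}\mathbb{I}$) and a tilted covariance between $\nabla_{\bc_j}f$ and $f$; (iii) substitute into the derivative of the variance and simplify using the optimality condition to cancel cross terms; (iv) identify the surviving expression as (up to a positive factor) a negative quantity — I expect it to reduce to something like a tilted analogue of $-\mathrm{Cov}_\tau$ or a fourth-moment/second-moment comparison that is sign-definite, essentially capturing that increasing $t$ up-weights the far points, which pulls $\bc_j$ toward them and compresses the spread of distances.

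The hard part will be step (iv): after implicit differentiation one is left with a combination of tilted moments (means, variances, and covariances of $f$ and of $\langle\bx_i,\bc_j\rangle$, possibly of mixed order up to third or fourth tilted moments), and showing this combination is strictly negative is not purely mechanical. I anticipate needing a Cauchy–Schwarz or correlation-type inequality for the tilted measure $w_i(\tau,\mathcal{S}_j,\bc_j)$ — for instance, that the tilted covariance of a monotone function of $f$ with $f$ itself has a definite sign — together with the fact that $\partial_t w_i(t,\cdot)$ is itself proportional to $w_i\cdot(f-\mathbb{E}_t(f))$, so that the derivative of any tilted expectation is a tilted covariance. Chaining these observations should force the sign. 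A secondary technical point is handling the degenerate case where all points in $\mathcal{S}_j$ are equidistant from $\bc_j(t)$ (variance zero), where the strict inequality would fail; presumably this is excluded implicitly or the statement is understood on clusters with nonzero spread. I would also double-check the edge behavior at $t=0$ to make sure the derivative formula is continuous there and matches the ordinary (untilted) variance derivative.
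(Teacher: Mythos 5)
Your plan is, in substance, the paper's own proof: normalize so that $f(\bx_i,\bc_j)$ is affine in $\bc_j^{\top}\bx_i$, keep the tilt $\tau$ in the variance fixed, obtain $\partial_t\bc_j(t)$ by implicit differentiation of the stationarity condition \eqref{eq9}, and chain-rule into the variance. The only substantive divergence is your step (iv), which you expect to demand Cauchy--Schwarz or correlation-type inequalities among tilted moments; in the paper that step needs none of this, and the reason is worth internalizing. Because $\tau$ is held constant, the paper (Definition \ref{definition: tilted mean and variance under assumption 1}) writes $\var_\tau\bigl(\mathrm{f}(\mathcal{S}_j,\bc_j)\bigr)=\bc_j^{\top}V_\tau\bc_j$ with $V_\tau$ a \emph{constant} tilted covariance matrix, so the entire $t$-dependence is through $\bc_j(t)$ and the derivative is just $2\bigl(\partial_t\bc_j(t)\bigr)^{\top}V_\tau\bc_j(t)$ --- no moments of $f$ ever need to be differentiated. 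Under the unit-norm normalization the stationarity condition \eqref{eq9} becomes $2\bc_j(t)=M\bigl(t,\mathcal{S}_j,\bc_j(t)\bigr)$, and using $\partial_t M=-V\bc_j$ and $\nabla_{\bc_j}M=-tV$ (Lemma \ref{lemma:Partial derivatives}), implicit differentiation gives $\partial_t\bc_j(t)=-\bigl(2\mathbb{I}+tV\bigr)^{-1}V\bc_j(t)$. Substituting collapses everything to the single term $-2\,\bc_j(t)^{\top}V\bigl(2\mathbb{I}+tV\bigr)^{-1}V_\tau\,\bc_j(t)$, which the paper then declares negative. So the "hard part" you anticipate is, in the paper, one line of algebra rather than a fourth-moment or correlation inequality; your proposal as written stops short of this collapse, and without it the theorem is not yet proved.

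That said, your two caveats are legitimate and are in fact glossed over by the paper. Strict negativity cannot hold for degenerate clusters (e.g., when $V_\tau\bc_j(t)=0$, as happens when all points are equidistant from the centroid), and when $\tau\neq t$ the two positive semidefinite matrices $V(2\mathbb{I}+tV)^{-1}$ and $V_\tau$ need not commute, so nonnegativity of the quadratic form $\bc^{\top}V(2\mathbb{I}+tV)^{-1}V_\tau\bc$ is asserted rather than argued (it is immediate only in the case $\tau=t$, where the form is manifestly nonnegative). In that sense your instinct that the sign step is "not purely mechanical" is accurate; to complete your write-up along the paper's lines you should derive the displayed expression above and then either restrict to $\tau=t$ or supply the positivity argument the paper omits.
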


The proof of Theorem \ref{theorem: variance decrease with t} is provided in Section \ref{sec: proof of theorem fairness}. Note that $\tau$ is a constant in the calculation of tilted empirical variance, where it contributes to the tilted weight adjustment.
Theorem \ref{theorem: variance decrease with t} states that the $\tau$-tilted empirical variance among the distances between each data point in $\mathcal{S}_j$ and their corresponding centroid will decrease with an increase in $t$. 
Therefore, there exists a potential trade-off between SSE and variance, enabling solutions to flexibly achieve desirable clustering utility and fairness. 
While Theorem \ref{theorem: variance decrease with t} suppose all data points are normalized to a unit norm which is not satisfied in some datasets, we observe favorable numerical results motivating the extension of these results beyond the cases that are theoretically studied in this paper.

\subsubsection{Time Complexity}
We provide the time complexity of \tkm and analyze why \tkm is suitable for individually fair clustering analysis in big data scenarios.
\begin{theorem}\label{theorem: time complexity}
    The time complexity of \tkm is $O(nkdET)$, where $d$ is the number of attributes of each data point, $E$ is the epoch size, and $T$ is the total number of iterations.
\end{theorem}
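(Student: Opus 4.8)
The plan is to track the cost of each of the three phases of Algorithm~\ref{alg: sgd for tilted kmeans} --- initialization, assignment, and refinement --- and sum them over the $T$ outer iterations. First I would bound the initialization by $k$-means++: selecting the first centroid is $O(d)$, and each subsequent centroid requires updating the $D(\bx_i)$ values for all $n$ points and sampling proportional to $D(\bx_i)^2$, which is $O(nd)$ per new centroid; over $k$ centroids this is $O(nkd)$, which will be absorbed into the bound for the main loop. Next I would analyze one pass of the assignment step (Line~\ref{line 3 alg 3}): computing $\delta_{ij}$ via Equation~\eqref{eq: solution to fij} requires, for each of the $n$ points, evaluating $f(\bx_i,\bc_l)=\|\bx_i-\bc_l\|^2$ against all $k$ centroids, each distance costing $O(d)$; hence one assignment step costs $O(nkd)$.

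Then I would turn to the refinement step, which runs $E$ epochs per outer iteration. In each epoch a mini-batch $\mathcal{B}$ is sampled and $\bc_j$ is updated by Equation~\eqref{eq: sgd for cj}; the gradient $\nabla_{\bc_j}\overline{\phi}(t,\mathcal{B},\mathcal{C})$ aggregates, over the points in $\mathcal{B}_j$, the terms $e^{tf(\bx_i,\bc_j)}\,\nabla_{\bc_j}f(\bx_i,\bc_j)$, each of which is $O(d)$ to compute once $f(\bx_i,\bc_j)$ is known. Treating $|\mathcal{B}|$ as $O(n)$ in the worst case (or noting the batch size is a constant that can be folded in), one epoch over all $k$ clusters costs $O(nkd)$, so the $E$ epochs cost $O(nkdE)$. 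Adding the assignment cost, one full outer iteration is $O(nkd + nkdE) = O(nkdE)$, and multiplying by the $T$ iterations --- with the $O(nkd)$ initialization subsumed --- gives the claimed $O(nkdET)$.

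The main obstacle, such as it is, is not a deep argument but a bookkeeping subtlety: making precise how the mini-batch size and the normalizing denominator $\sum_{i=1}^n e^{tf(\bx_i,\bc_j)\delta_{ij}}$ in Equation~\eqref{eq: sgd for cj} are handled. That denominator nominally ranges over all $n$ points, so if recomputed from scratch every epoch it would still only add an $O(nd)$ term per cluster per epoch, consistent with the bound; but one should either state that it is recomputed once per outer iteration (after the assignment changes) and reused across epochs, or that recomputing it is dominated by the $O(nkd)$ epoch cost anyway. I would also remark that $E$ and $T$ are typically treated as small constants in practice, so the effective complexity is $O(nkd)$, linear in $n$ --- this is the point the theorem is really making, in contrast to the $O(kn^4)$ of \cite{Negahbani2021Better}, and I would close the proof by emphasizing that linear dependence on $n$.
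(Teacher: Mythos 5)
Your proposal is correct and follows essentially the same decomposition as the paper's proof: $O(nkd)$ for the $k$-means++ initialization, $O(nkd)$ per assignment step, $O(nkdE)$ per refinement phase, multiplied by the $T$ outer iterations. Your additional bookkeeping about the mini-batch size and the normalizing denominator in Equation~\eqref{eq: sgd for cj} is a reasonable elaboration of details the paper leaves implicit, but it does not change the argument.
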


The proof of Theorem \ref{theorem: time complexity} is provided in Section \ref{sec: proof of theorem complexity}. Note that the time complexity of \tkm is linear with the dataset size, which is the same as that of vanilla $k$-means algorithms without fair constraints such as \lloyd \cite{Lloyd1982} and SGD-based $k$-means \cite{Sculley2010Web}. In contrast, existing individual fair clustering methods exhibit a time complexity of $O(kn^4)$ \cite{Mahabadi2020Individual,Negahbani2021Better}. In the context of big data, employing these methods for clustering becomes impractical, as the required running time becomes difficult to estimate when dealing with dataset sizes reaching the order of millions. 
Moreover, these methods encounter RAM overflow issues due to the necessity of computing distances between each data point, requiring storage of an $n\times n$ array in RAM. Conversely, \tkm only necessitates distance calculations between each data point and corresponding centroids during the assignment, thus requiring the computation of only an $n\times k$ array, effectively mitigating the risk of RAM overflow.

\subsubsection{Monotonicity Analysis}
In this section, we provided a monotonicity analysis for tilted SSE in a simple case.
\begin{theorem}\label{theorem: monotonicity}
    When $k=1$, suppose all data points are normalized to a unit norm, then for any $t\geq 0$, it holds that,
\begin{align}
    \frac{\partial \overline{\phi}(t,\mathcal{S},\mathcal{C})}{\partial t}\geq0.
\end{align} 
\end{theorem}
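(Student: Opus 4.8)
The plan is to reduce to the single-cluster case, recognize the tilted SSE as a rescaled cumulant generating function, and exploit its convexity. When $k=1$ there is one cluster $\mathcal{S}$ containing all of $\mathcal{X}$, so $\delta_{i1}=1$ for every $i$ and $\bc_1=\tm(t,\mathcal{S})$, whence
\[
  \overline{\phi}(t,\mathcal{S},\mathcal{C})=\frac{\Lambda_{\bc_1}(t)}{t},\qquad \Lambda_{\bc}(t):=\log\frac{1}{n}\sum_{i=1}^{n}e^{t f(\bx_i,\bc)} .
\]
For any fixed $\bc$, the map $\Lambda_{\bc}$ is exactly the cumulant generating function of the uniform distribution on the multiset $\{f(\bx_i,\bc)\}_{i=1}^n$: a direct computation gives $\Lambda_{\bc}'(t)=\sum_i w_i(t,\mathcal{S},\bc)\,f(\bx_i,\bc)=\mathbb{E}_t(\mathrm{f}(\mathcal{S},\bc))$ and $\Lambda_{\bc}''(t)=\var_t(\mathrm{f}(\mathcal{S},\bc))\ge 0$, so $\Lambda_{\bc}$ is convex, and $\Lambda_{\bc}(0)=\log 1=0$.

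First I would handle the $t$-dependence of the centroid. If $\mathcal{C}$ is regarded as fixed this step is vacuous; otherwise $\bc_1=\bc_1(t)=\tm(t,\mathcal{S})$ is, by Lemma~\ref{lemma: strong convexity of tilted sse}, the unique solution of the optimality condition~\eqref{eq9} and, by the implicit function theorem (the tilted Hessian being positive definite), depends smoothly on $t$; here the normalization $\|\bx_i\|=1$ is convenient, confining $\bc_1(t)$ to the unit ball and keeping all $f(\bx_i,\bc_1(t))$ uniformly bounded, hence $\Lambda_{\bc_1(t)}$ and its derivatives finite. Differentiating $t\mapsto \overline{\phi}(t,\mathcal{S},\tm(t,\mathcal{S}))$ and using the envelope theorem --- the cross term $\langle\nabla_{\bc_1}\phi,\tfrac{d}{dt}\bc_1(t)\rangle$ vanishes because $\nabla_{\bc_1}\phi=0$ at the minimizer --- the total derivative equals the partial derivative $\partial_t\bigl(\Lambda_{\bc}(t)/t\bigr)$ evaluated with $\bc$ frozen at $\bc_1(t)$.

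Then the heart of the argument is a single line of convexity. Writing $\Lambda:=\Lambda_{\bc_1(t_0)}$ for the point $t_0>0$ at which we differentiate, the tangent-line inequality for the convex function $\Lambda$ at $t_0$ gives $\Lambda(0)\ge \Lambda(t_0)+\Lambda'(t_0)(0-t_0)$, i.e.\ $t_0\,\Lambda'(t_0)\ge \Lambda(t_0)$ using $\Lambda(0)=0$, hence
\[
  \left.\frac{\partial}{\partial t}\frac{\Lambda(t)}{t}\right|_{t_0}=\frac{t_0\,\Lambda'(t_0)-\Lambda(t_0)}{t_0^{2}}\ge 0 ,
\]
which is the claim for $t_0>0$. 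The case $t_0=0$ follows by continuity: $\Lambda(t)/t$ extends to a smooth function on $[0,\infty)$ with value $\Lambda'(0)=\tfrac1n\sum_i f(\bx_i,\bc_1(0))$, consistent with~\eqref{eq: limition} and~\eqref{eq: phi 0 > psi}, and with derivative $\tfrac12\Lambda''(0)=\tfrac12\var_0(\mathrm{f}(\mathcal{S},\bc_1(0)))\ge 0$ at the origin. I expect the only genuine obstacle to be the bookkeeping around the moving centroid --- smoothness and boundedness of $\bc_1(t)$ and a clean invocation of the envelope theorem; the convexity estimate itself is essentially immediate once the problem is recast as the monotonicity of $\Lambda(t)/t$. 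A fully self-contained alternative is to expand the total derivative of $\overline{\phi}(t,\mathcal{S},\tm(t,\mathcal{S}))$ directly and check that the terms generated by $\tfrac{d}{dt}\bc_1(t)$ cancel against the optimality condition~\eqref{eq9}.
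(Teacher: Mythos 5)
Your proposal is correct, and it reaches the paper's conclusion by a cleaner and more general route. The paper also reduces to showing $t\Lambda'(t)\ge\Lambda(t)$ (in your notation), but it does so indirectly: after substituting the normalization $\|\bx_i\|=1$ it rewrites the derivative through the quantities $M(t,\mathcal{S},\bc)$, $\Gamma(t,\mathcal{S},\bc)$, $V(t,\mathcal{S},\bc)$, applies the partial-derivative identities of Lemma~\ref{lemma:Partial derivatives}, shows that $t^2 g(t,\mathcal{S},\bc)=t\Lambda'(t)-\Lambda(t)$ has derivative $t\,\bc^{\top}V(t,\mathcal{S},\bc)\bc\ge 0$, and then pins down the value at $t=0$ by L'H\^opital's rule --- i.e.\ it proves the key inequality in integrated form, and its algebra genuinely uses the unit-norm assumption. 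You instead observe that $\Lambda_{\bc}$ is a cumulant generating function with $\Lambda_{\bc}(0)=0$ and $\Lambda_{\bc}''=\var_t\ge 0$, and obtain $t_0\Lambda'(t_0)\ge\Lambda(t_0)$ in one line from the tangent-line inequality at $t_0$; this dispenses with the normalization hypothesis entirely (it only matters for the paper's bookkeeping, not for the monotonicity itself) and so proves a slightly stronger statement. You are also more careful than the paper about the $t$-dependence of the centroid: the paper simply substitutes $\bc=\tm(t,\mathcal{S})$ and then differentiates with $\bc$ frozen, whereas you justify this via strong convexity (Lemma~\ref{lemma: strong convexity of tilted sse}), the implicit function theorem, and the envelope argument that the cross term $\langle\nabla_{\bc}\phi,\tfrac{d}{dt}\bc_1(t)\rangle$ vanishes at the minimizer; your Taylor-expansion treatment of $t_0=0$ matches the paper's L'H\^opital computation $g(0)=\tfrac12\bc^{\top}V(0,\mathcal{S},\bc)\bc$. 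In short, both arguments rest on the same convexity fact about the tilted average, but yours isolates it abstractly and is shorter, assumption-free with respect to normalization, and more explicit about the moving centroid.
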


Proof of Theorem \ref{theorem: monotonicity} is provided in Section \ref{sec: proof of theorem monotonicity}.  When $k=1$, $k$-means simplifies to a point estimation problem. In this case, Theorem \ref{theorem: monotonicity} shows that the tilted SSE increases as $t$ increases. 
While the monotonicity of the tilted SSE is restricted to the scenario when $k=1$, our experiments suggest that the tilted SSE also exhibits a monotonically increasing trend for other values of $k$.

\section{Experiments}\label{sec experiments}

\begin{table}[t]
\centering
\renewcommand{\arraystretch}{1.2}
\setlength\tabcolsep{5pt}
\caption{An overview of the datasets}
\label{datasets}
\begin{tabular}{ccccc}
\toprule
\textbf{Ref.} & \textbf{Datasets}     & \begin{tabular}[c]{@{}c@{}}\textbf{\#}\\ \textbf{of points}\end{tabular} & \begin{tabular}[c]{@{}c@{}}\textbf{\#}\\ \textbf{of attributes}\end{tabular} & \begin{tabular}[c]{@{}c@{}}\textbf{\#}\\ \textbf{of clusters}\end{tabular} \\ \midrule
\cite{HuangJV19}    & \texttt{Athlete}     & 271,117                                               & 15                                                         & 3-10                                                     \\
\cite{Moro2014bank}    & \texttt{Bank}        & 4,521                                                 & 16                                                         & 3-10                                                     \\
\cite{Kohavi1996census}    & \texttt{Census}      & 32,561                                                & 15                                                         & 3-10                                                     \\
\cite{strack2014impact}    & \texttt{Diabetes}    & 101,766                                               & 24                                                         & 3-10                                                     \\
\cite{recruitment}    & \texttt{Recruitment} & 4,001                                                 & 50                                                         & 3-10                                                     \\
\cite{spanish}    & \texttt{Spanish}     & 4,747                                                 & 15                                                         & 3-10                                                     \\
\cite{student}    & \texttt{Student}     & 32,594                                                & 21                                                         & 3-10                                                     \\
\cite{3dspatial}    & \texttt{3D-spatial}  & 434,874                                               & 12                                                         & 3-10                                                     \\
\cite{Meek2002census1990}    & \texttt{Census1990}  & 2,458,285                                             & 69                                                         & 4                                                        \\
\cite{HMDA}   & \texttt{HMDA}        & 5,986,660                                             & 53                                                         & 4                                                        \\
\cite{Fabian2011scikit}   & \texttt{Synthetic}   & 200                                                   & 2                                                          & 2, 3                                                     \\ \bottomrule
\end{tabular}
\end{table}

\myparagraph{Goals} In this section, we verify the effectiveness and efficiency of \tkm by comparing it with various methods. We also examine the impact of various hyperparameters on the convergence of \tkm. Moreover, we provide visualizations of the centroids' variations with varying $t$.
\subsection{Settings}
\myparagraph{Datasets} We employ ten real-world datasets and two synthetic datasets to validate the performance of \tkm. 
To compare the effectiveness and fairness of \tkm with various methods and parameters, we utilize \athlete, \bank, \census, \diabetes, \recruitment, \spanish, \student, and \spatial. To compare the efficiency of \tkm with other methods, we employ \censusbig and \hmda. For visualizing \tkm, we use two synthetic datasets. 
We sampled numerical features from ten real-world datasets and then standardized these features (the names of these features are provided in our repository). 
A comprehensive overview of the datasets can be obtained from Table \ref{datasets}.


\myparagraph{Baselines} We experimentally evaluate the performance of \textsf{TKM} against six methods, namely, \kplus \cite{arthur2007k}, \jkl \cite{Jung2020Service}, \mv \cite{Mahabadi2020Individual}, \fr \cite{Negahbani2021Better}, \sfr \cite{Negahbani2021Better}, and \nf \cite{Sculley2010Web}.
As explained in our related works, \jkl first introduced the concept of individual fairness for $k$-means. \mv, \fr and \sfr are three state-of-the-art methods for individually fair $k$-means. Note that \sfr is a sparsed version of \fr. 
\kplus and \nf are two clustering methods that do not take individual fairness into account. It is worth noting that \nf is a method different from the classical \lloyd. It is solved through SGD and can be considered as the case of $t=0 $ in \tkm.
For \tkm and \nf, we employed \kplus for initialization.

\begin{figure*}[p]
  \centering
  \includegraphics[width=\linewidth]{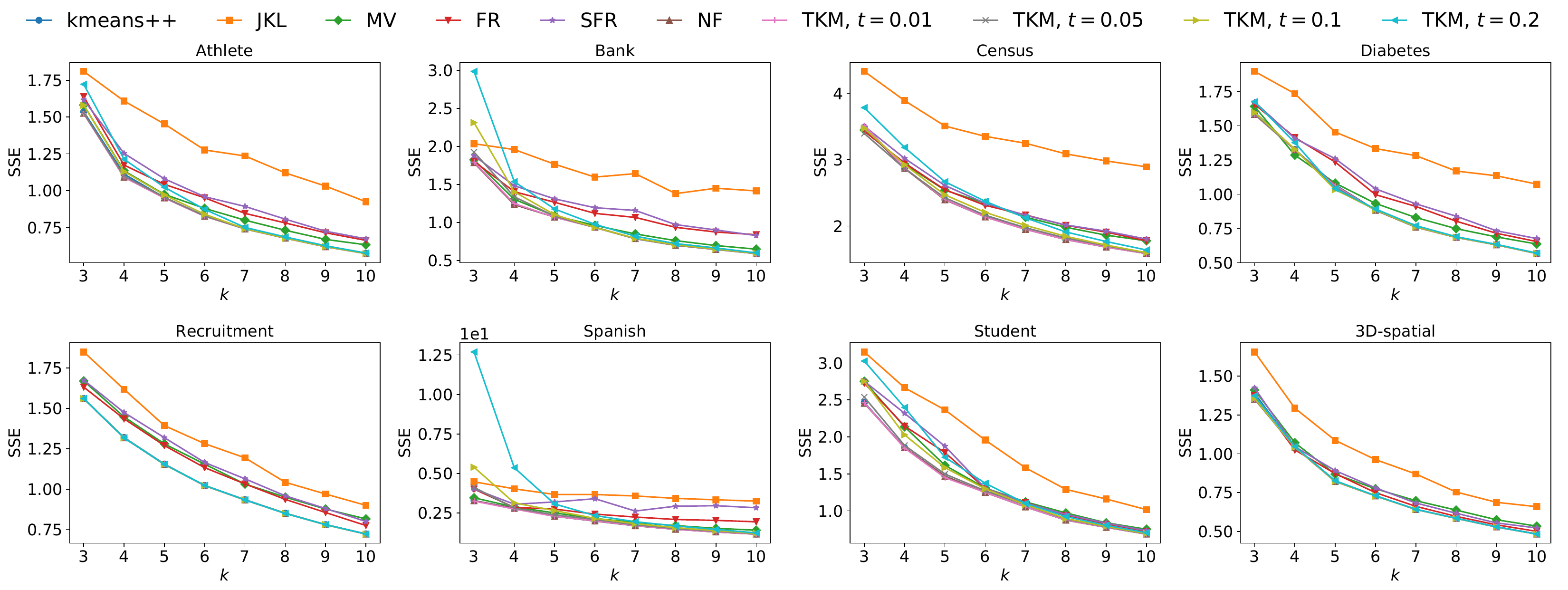}
  \vspace{-2em}
  \caption{Comparison among various methods in terms of SSE at varying values of $k$.}  \label{fig: sse_k}
  \vspace{1em}
  \includegraphics[width=\linewidth]{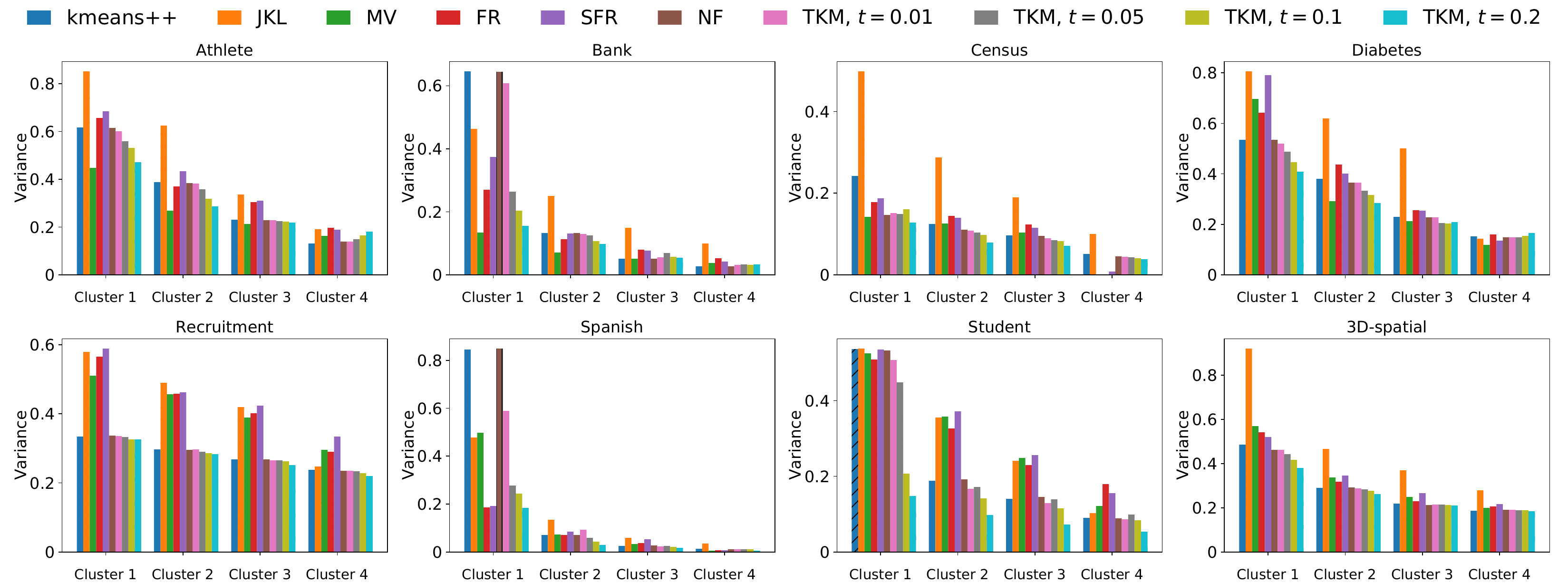}
  \vspace{-2em}
  \caption{Comparison among various methods in terms of the variance in each cluster.}
    \vspace{1em}
  \label{fig: variance}
  \includegraphics[width=\linewidth]{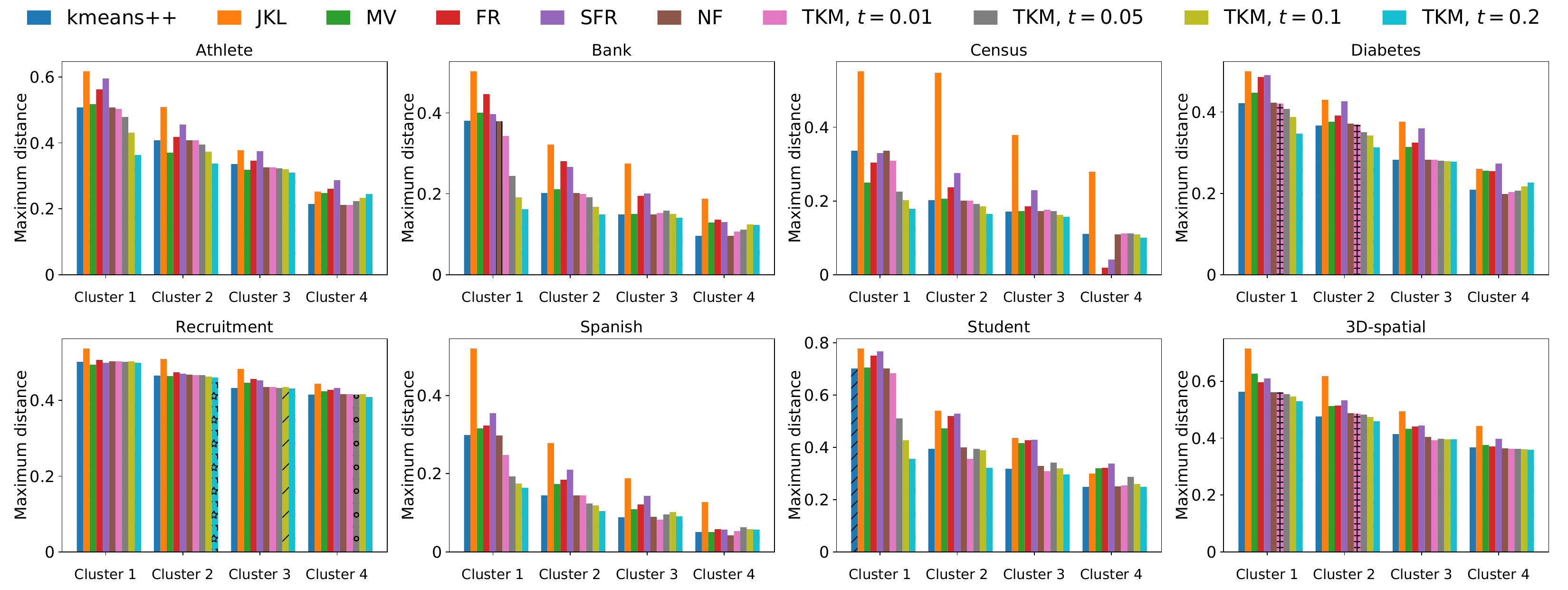}
  \vspace{-2em}
  \caption{Comparison among various methods in terms of the maximum distance in each cluster.}
  \label{fig: largest distance}
\end{figure*}

\begin{table*}[t]
\centering
\caption{Comparison among \textsf{TKM}, \textsf{MV}, and \textsf{FR} in terms of running time (seconds). We abbreviate TLE as the time limit exceeded for 1 hour,  SLE as the sampling size limit exceeded for the dataset dimension, and ROF as the RAM overflow.}
\vspace{-0.51em}
\label{running time}
\renewcommand{\arraystretch}{1.2}
\setlength\tabcolsep{3.2pt}
\begin{tabular}{ccccccccccccccccccccc}
\toprule
\textbf{Dataset}            & \textbf{Method} & & \textbf{1K}  & \textbf{2K} & \textbf{5K} &\textbf{10K} &\textbf{15K}  & \textbf{20K} &\textbf{25K} &\textbf{30K}  &\textbf{40K} &\textbf{50K} &\textbf{60K} &\textbf{70K} &\textbf{80K}&\textbf{90K} & \textbf{2M}     & \textbf{5M}     \\ \midrule
\multirow{4}{*}{\textsf{Census1990}} & \tkm &    & 0.7 & 1.3 & 2.4 & 4.0 &9.0  &12.0&15.9 &18.9 &23.4 & 30.7 &41.5&49.4&56.2&66.0 & 542.3 & SLE    \\
                            & \sfr & & 0.5 & 2.6 & 11.7 & 31.4 &45.4 & 65.2 &77.3 &98.9 &156.7 &195.2&291.8 &483.1 &601.3&ROF& ROF    & SLE   \\ 
                            & \mv & & 5.6 & 30.7 & 85.9 & 250.9&1068.3  & 1783.9&4960.8 &TLE &TLE &TLE  &TLE&TLE&TLE&ROF & ROF    & SLE    \\ 
                            & \fr & & 13.4 & 129.8& 1053.4 & 10692.7  &TLE &TLE &TLE&TLE &TLE &TLE &TLE&TLE&TLE&ROF  & ROF    & SLE    \\ 
                            \midrule
\multirow{4}{*}{\textsf{HMDA}}       & \tkm &    & 0.3 & 1.0 & 2.2 & 3.8&9.3  & 12.3 &15.5 &19.4 &24.3 & 31.1 &45.2 &53.0 &59.1 &71.3& 743.9   & 1901.6\\
& \sfr & & 2.3 & 5.7 & 17.9 & 41.1&65.8 & 72.9  &88.6 &111.8 &174.5 &211.9 &348.8 &528.1 &712.5 &ROF  &ROF   & ROF    \\ 
                            & \mv & & 5.0  & 27.8 & 61.2 & 304.5&406.8 & 1923.9  &5187.6 &TLE &TLE & TLE &TLE&TLE&TLE&ROF & ROF & ROF   \\ 
                            & \fr & & 49.8 & 263.3 & 2784.1 & TLE&TLE & TLE &TLE &TLE &TLE  & TLE&TLE &TLE &TLE&ROF&ROF  & ROF    \\ 
                            \bottomrule
\end{tabular}
\end{table*}

\myparagraph{Measurements} We employ several metrics to evaluate the performance of clustering algorithms.
We use \underline{\textit{SSE}} to measure the utility of different clustering algorithms, where a smaller SSE is considered a better clustering utility. 
To measure fairness among different clustering algorithms, we consider using two metrics. The first is the \underline{\textit{variance}} of each point's distance to its nearest centroid within a cluster. A smaller variance indicates a fairer algorithm. The second metric is the \underline{\textit{maximum distance}} from each point in a cluster to the centroid, where a smaller maximum distance signifies greater fairness.
As for efficiency evaluation, we measure it using the \underline{\textit{running time}} of each algorithm.
To verify the impact of different hyperparameters on the convergence of \tkm, we use \underline{\textit{tilted SSE}} as the metric.

\myparagraph{Implementations} Our algorithms were executed on a platform comprising an Intel i9-14900KF CPU with 24 cores, 64 GB of RAM, and operating on the CentOS 7 environment. 
The software implementations, including our methods and the comparison methods, were realized in Python 3.7 and open-sourced (\url{https://github.com/zsk66/TKM-master}).



\subsection{Comparison among Various Methods}
\subsubsection{Effectiveness Analysis}\label{sec: effectiveness analysis}
Fig. \ref{fig: sse_k} compares the SSE of six methods as $k$ varies on eight datasets:  \athlete, \bank, \census, \diabetes, \recruitment, \spanish, \student, and \spatial. Due to the long running time required by our comparison methods, we need to sample the datasets to accommodate them. We sampled 1000 data points from each dataset, repeated this process 10 times, conducted experiments on the resulting 10 sampled datasets, and averaged the obtained SSE values. We set the parameter $t$ of \tkm to be 0.01, 0.05, 0.1, and 0.2, respectively. The learning rate for \nf and \tkm was set to 0.05, the number of epochs was set to 5, the batch size was set to 100, and the number of iterations was set to 500. \jkl, \mv, \fr, and \sfr adopted the default hyperparameter settings in their papers.

\underline{\textit{Observations}}. 
We can see that as $t$ increases, the SSE of \tkm also increases. This is because an increase in $t$ inevitably brings the centroids closer to the minority data points, resulting in an increase in SSE. 
Comparing the SSE of different methods, we can observe that the SSE of \jkl is consistently the highest across all datasets except for \textsf{Bank} and \textsf{Spanish}.
In these two datasets, \tkm has a large SSE at $t=0.2$, which is due to excessively large $t$ causing the centroids obtained by \tkm to be too close to those minority data points.
The SSE of \sfr is always larger than \fr because \sfr is a version of \fr that applies the sparsification technique.
The SSE for \textsf{3D-spatial} and \textsf{Recruitment} in \fr\ is lower than in \mv, but on the other six datasets, \mv\ has a lower SSE compared to \fr.
Meanwhile, \tkm's SSE at $t=0.01,0.05,0.1$ is consistently lower than \jkl, \mv, \fr, and \sfr, and even performs nearly as well as \kplus and \nf on the \census and \recruitment, which reflects the outstanding effectiveness of \tkm.

\subsubsection{Fairness Analysis}
Fig. \ref{fig: variance} and Fig. \ref{fig: largest distance} illustrate the variance and maximum distance within each cluster for various methods when $k=4$. 
The variance and maximum distance values within Clusters 1-4 are arranged in descending order. 
The data processing and hyperparameter configurations for all methods remain consistent with those outlined in Section \ref{sec: effectiveness analysis}.

\underline{\textit{Observations}}.
From Fig. \ref{fig: variance}, it can be seen that for \tkm, as $t$ increases, the variance of each cluster decreases, which is consistent with our theoretical results. Next, without loss of generality, we examine the variance of each method on Cluster 1. It can be observed that \jkl has the largest variance across all datasets except for \bank, \recruitment, and \spanish, while \kplus and \nf have the largest variance on \bank and \spanish, and \sfr has the largest variance on \recruitment. It is worth noting that in some datasets, such as \diabetes, \recruitment, \student, and \spatial, even when $t=0.01$, the variance of \tkm is smaller than other comparison methods. Moreover, in other datasets, by adjusting $t$, it is always possible to make the variance of \tkm smaller than the comparison methods. 
From Fig. \ref{fig: largest distance}, we observe that the maximum distance within each cluster decreases as $t$ increases. 
This occurs because the greater maximum distance is caused by the centroids being farther from the minority points. With a higher $t$, the centroids shift towards the minority points, thereby reducing the maximum distance. 
Comparing \tkm with other methods reveals that \tkm achieves the smallest maximum distance, demonstrating its fairness.
Moreover, we observe that in \spatial, the variance and maximum distance of \jkl, \mv, \fr, and \sfr are all larger than those of \kplus, indicating that existing individually fair clustering methods might even exacerbate unfairness in our scenario.

\subsubsection{Effeciency Analysis}
Table \ref{running time} presents a comparison of the running time of \tkm with three state-of-the-art methods, \mv, \fr, and \sfr (
Due to the poor performance of \jkl and \nf in effectiveness and fairness, we do not consider these two methods in the comparison of efficiency). We sampled the \censusbig and \hmda with sizes $n_s$ of 1K, 2K, 5K, 10K,15K, 20K, 25K, 30K, 40K, 50K, 60K, 70K, 80K, 90K, 2M, and 5M, respectively. We set the number of iterations for \tkm to 500, the batch size to $\frac{1}{50}n_s$, the number of epochs to 5, and the learning rate to 0.05. The hyperparameters for \mv, \fr, and \sfr were set to their default values in their papers.

\begin{figure*}[p]
  \centering
  \includegraphics[width=\linewidth]{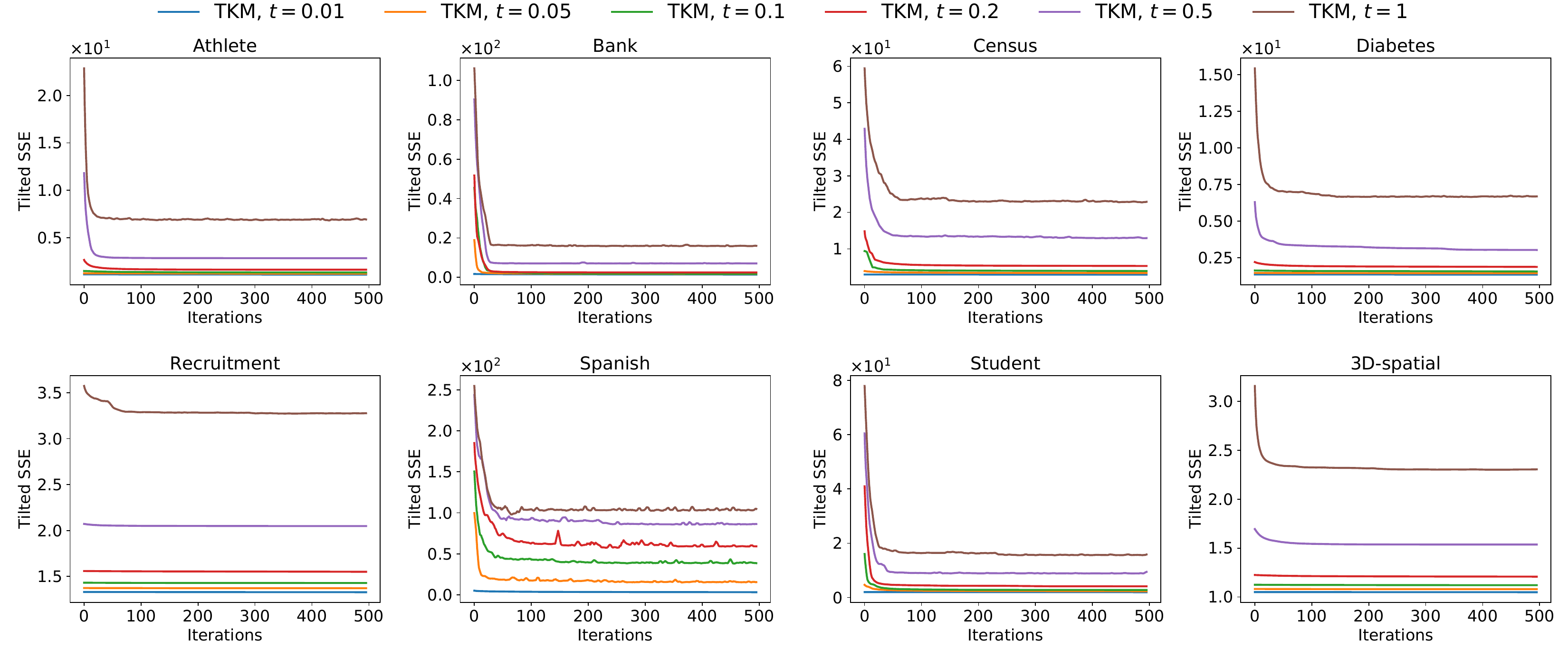}
  \vspace{-2em}
  \caption{Effect of $t$ on the convergence of \tkm.}
    \vspace{1em}
  \label{fig: tilted sse with t}
    \includegraphics[width=\linewidth]{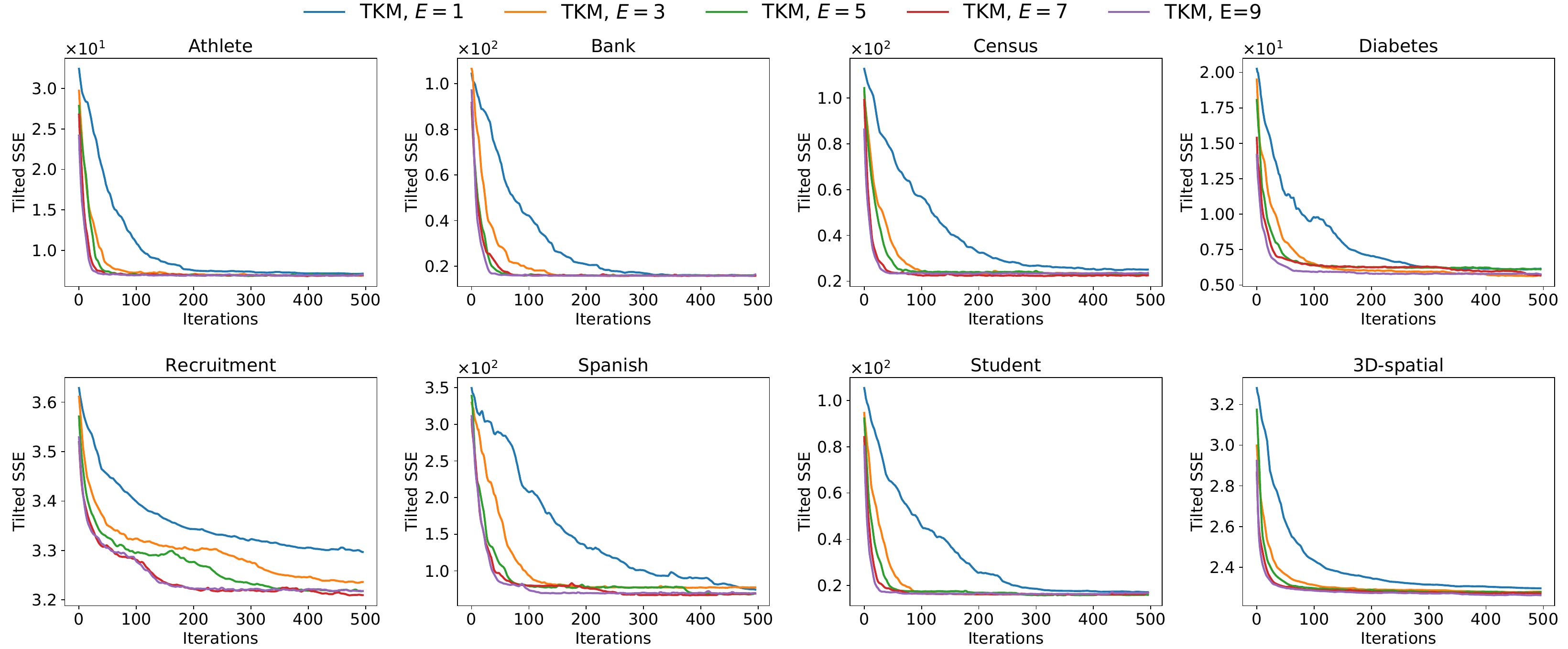}
  \vspace{-2em}
  \caption{Effect of the epoch on the convergence of \tkm.}
    \vspace{1em}
  \label{fig: tilted sse with epoch}
    \includegraphics[width=\linewidth]{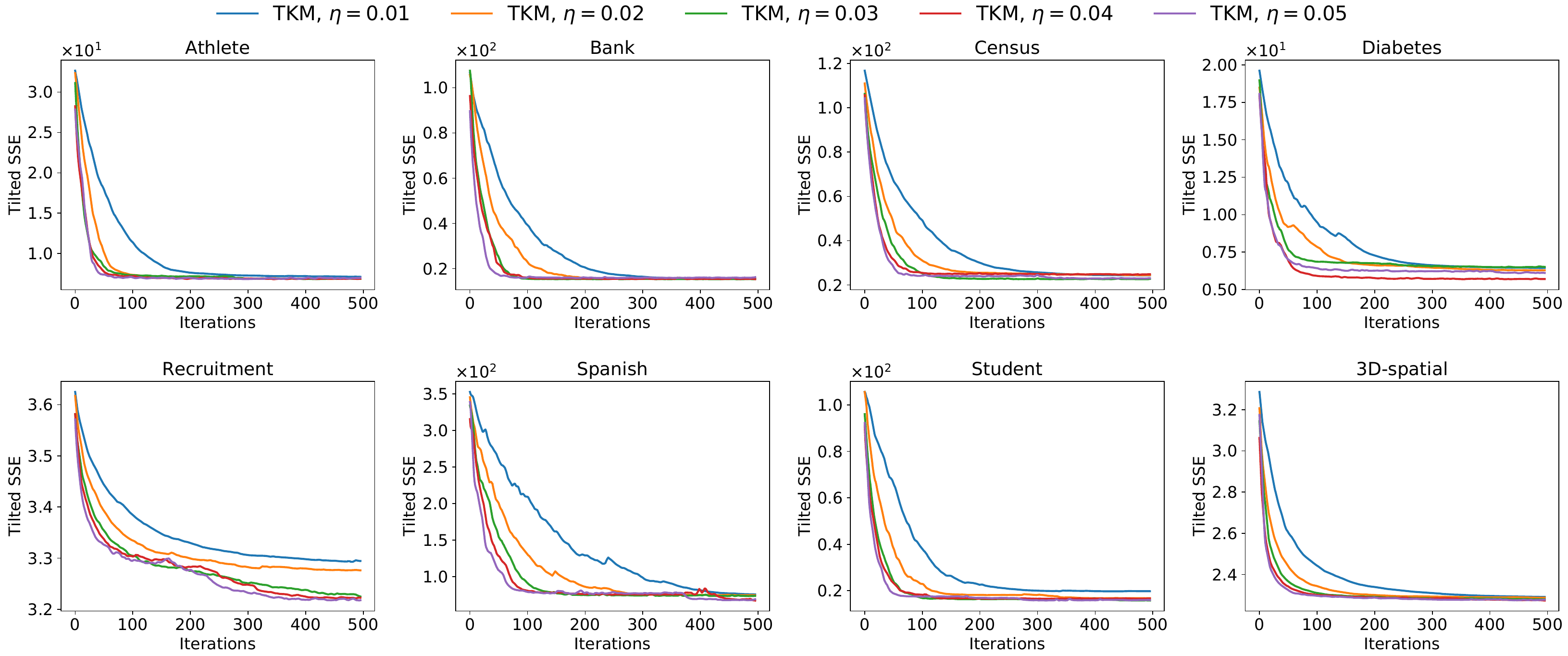}
  \vspace{-2em}
  \caption{Effect of the learning rate on the convergence of \tkm.}
  \vspace{1em}
  \label{fig: tilted sse with lr}
\end{figure*}



\begin{figure*}[t]
  \centering
  \includegraphics[width=\linewidth]{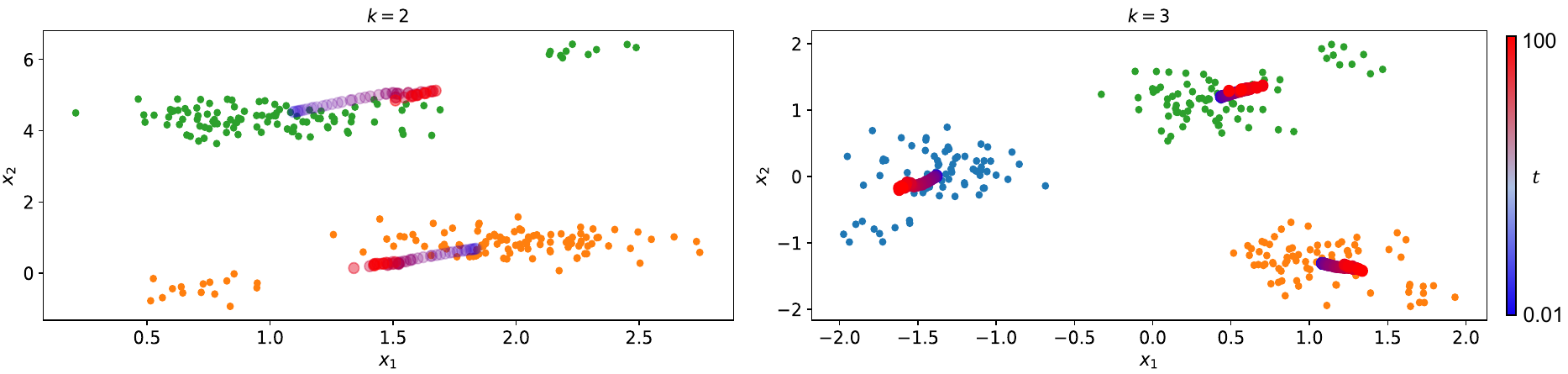}
  \vspace{-2em}
  \caption{Visualization of two synthetic 2-dimensional data for $k=2$ and $k=3$ by \tkm.}
  \vspace{-1em}
  \label{fig: visualization}
\end{figure*}

\underline{\textit{Observations}}. Experimental results demonstrate that regardless of the number of data points sampled, the running time of \tkm is always significantly shorter than that of \mv, \fr, and \sfr.
It can be observed that \tkm can cluster 5 million data points in about 30 minutes, while \mv can only cluster 20,000 samples within 30 minutes, and \fr can only cluster 5,000 data points. Moreover, it is worth noting that as the sample size increases, \tkm's running time increases by hundreds or even thousands of times compared to \mv and \fr. For example, when the number of sampled points is 1,000, \tkm achieves 8.0$\times$ and 19.1$\times$ acceleration compared to \mv and \fr in \censusbig, respectively. When the number of sampled points is 10,000, \tkm achieves 62.7$\times$ and 2673.2$\times$ acceleration compared to \mv and \fr in \censusbig, respectively. Furthermore, although the running time of \sfr is significantly shorter than \mv and \fr, \tkm still achieves approximately a 10.7$\times$ and 12.1$\times$ acceleration with 80,000 data points in \censusbig and \hmda, respectively. Furthermore, when the sample size reaches 90,000, the algorithmic characteristic of \sfr, which requires computing distances between each sample point, can lead to a memory overflow issue, causing the algorithm to terminate. This issue also arises in \mv and \fr.

\subsubsection{Summary of Lessons Learned}
We have provided the changes of SSE with respect to $k$ for different methods, the variance results in four clusters for different methods, and a comparison of the efficiency of different methods. Our experimental results have led us to draw the following conclusions:
\begin{itemize}
    \item \tkm outperforms state-of-the-art methods in terms of effectiveness. Specifically, \tkm achieves smaller SSE compared to state-of-the-art methods across different values of $k$ and $t$. In some datasets, the SSE of \tkm is almost the same as methods that do not consider individual fairness.
    \item \tkm outperforms state-of-the-art methods in terms of fairness. Specifically, \tkm can achieve smaller variance and maximum distance than state-of-the-art methods when an appropriate value of $t$ is chosen.
    \item \tkm surpasses state-of-the-art methods in terms of efficiency. Specifically, \tkm can cluster more data points in a shorter time, and as the sample size increases, this acceleration effect becomes even more pronounced. 
    Moreover, \tkm can overcome the RAM overflow issue that existing methods encounter when dealing with large-scale data.
\end{itemize}

\subsection{Comparison among Various Parameters}
\subsubsection{Tilted SSE vs. $t$}\label{sec: sse with t}
Fig. \ref{fig: tilted sse with t} illustrates the convergence of tilted SSE with iterations at $t$ values of 0.01, 0.05, 0.1, 0.2, 0.5, and 1.
We randomly select 1000 data points from each dataset, repeating this process 10 times. We then conduct experiments on these 10 subsampled datasets, calculating the average of the resulting tilted SSE values.
For other hyperparameters, we set the learning rate to 0.05, the number of iterations to 500, the batch size to 100, and the epoch size to 5.

\underline{\textit{Observations}}.
We observe that despite using SGD to update the centroids, the tilted SSE of \tkm still decreases steadily with iterations, which confirms the convergence of \tkm.
As $t$ increases, the tilted SSE also increases. This confirms that our theoretical analysis of the monotonicity of the tilted SSE with respect to $t$ holds not only for $k=1$.
When $t=0.01$, the tilted SSE remains nearly unchanged with iterations. This indicates that the tilted SSE is insensitive to variations in $t$ when $t$ is small.

\subsubsection{Tilted SSE vs. Epoch}\label{sec: sse with epoch}
Fig. \ref{fig: tilted sse with epoch} illustrates the convergence of tilted SSE with different numbers of epochs during iterations.
The data preprocessing for \tkm here follows the same procedures outlined in Section \ref{sec: sse with t}.
To visualize the curve of tilted SSE of \tkm over iterations more intuitively, we set $t=0.5$, learning rate to 0.03, number of iterations to 500, batch size to 50, and epoch size to 1, 3, 5, 7, 9.

\underline{\textit{Observations}}.
From Fig. \ref{fig: tilted sse with epoch}, it can be observed that as the number of iterations increases, the tilted SSE of \tkm decreases and tends to stabilize after reaching a certain value on all datasets. 
With an increase in the epoch size, the convergence speed of \tkm accelerates, and its convergence performance improves. This is because increasing the epoch size allows for higher precision in the solution obtained through SGD during each iteration, as more data can be utilized. 
When the epoch size is 7 and 9, the convergence and convergence speed of \tkm are not significantly different. Therefore, selecting 7 as the epoch size is an appropriate choice.
However, in some datasets, we found that increasing the epoch size does not necessarily improve convergence. For example, in \texttt{Recruitment}, a smaller epoch size of 7 yields better convergence compared to an epoch size of 9. This is attributed to the risk of overfitting when the epoch size is too large. Therefore, choosing an epoch size of 7 is deemed appropriate for these datasets.

\subsubsection{Tilted SSE vs. Learning Rate}
Fig. \ref{fig: tilted sse with lr} illustrates the convergence of tilted SSE with various learning rates during iterations. The data preprocessing for \tkm here is the same as in Section \ref{sec: sse with t}. 
For the parameter settings of \tkm, we set $t=0.5$, epoch size as 5, batch size as 50, number of iterations to 500, and learning rate as 0.01, 0.02, 0.03, 0.04, and 0.05.

\underline{\textit{Observations}}.
From Fig. \ref{fig: tilted sse with lr}, we can see that, across the eight datasets, the convergence speed generally increases with the increase in learning rate. However, when the learning rate increases to a certain extent, the increase in convergence speed becomes slower. For example, when $\eta=0.03,0.04,0.05$, the convergence speed and the converged tilted SSE value on the \texttt{Bank} are almost indistinguishable. Additionally, if the learning rate is excessively high, it can result in poorer convergence, as demonstrated in \texttt{Diabetes} where $\eta=0.04$ produces a smaller tilted SSE. This occurs because an overly large learning rate may cause the SGD step size to become excessive, hindering the achievement of locally optimal solutions.

\subsubsection{Visualization}
Fig. \ref{fig: visualization} demonstrates how centroids change over $t$ in two synthetic datasets when the number of clusters is set to 2 and 3, respectively.
We set the number of epochs for \tkm to 5, the number of iterations to 1000, and the learning rate to 0.01, the batch size to 20.
For the values of $t$, we take a total of 60 geometrically spaced values between $10^{-2}$ and $10^2$.
We employ a blue-to-red gradient to depict the rising values of $t$, and we use the same color to represent data points within the same cluster.

\underline{\textit{Observations}}.
It can be observed that as $t$ increases, the positions of the centroids tend to shift towards the minority data points in each cluster. 
This ensures data points in each cluster can guarantee ``treat all points equally'', aligning with the concept of individual fairness.
Furthermore, we observe that as $t$ increases, the centroids do not shift excessively towards minority data points, ensuring that the distance from majority data points to the centroids remains reasonable. This demonstrates that \tkm ensures equal treatment of each data point.

\subsubsection{Summary of Lessons Learned}
We have provided the convergence behavior of \tkm under different epoch sizes and learning rates, as well as visualizations of \tkm on 2-dimension synthetic data. These experiments lead us to the following conclusions:
\begin{itemize}
    \item \tkm is a convergent algorithm, and the tilted SSE increases monotonically with $t$. Specifically, for different values of $t$, the tilted SSE in \tkm steadily decreases to a stable value. Moreover, as $t$ increases, the tilted SSE increases.
    \item The convergence of \tkm is influenced by the epoch size and learning rate. Specifically, selecting an appropriate epoch size and learning rate can lead to faster convergence speed and better convergence of \tkm. However, choosing larger epoch sizes and learning rates does not necessarily improve the performance.
    \item \tkm indeed can ensure individual fairness for $k$-means. Specifically, as $t$ increases, it can guarantee that those minority data points can be closer to the centroids, achieving the goal of treating each individual equally. 
\end{itemize}

\section{Conclusions and Future Work}\label{sec conclusion}
This paper investigated the individually fair $k$-means in the context of location-based resource allocation.
To address the issue where existing individually fair clustering methods and fairness metrics may exacerbate unfairness, we proposed \tkm, an algorithm designed to effectively solve the individually fair $k$-means problem via exponential tilting.
We constructed the tilted SSE as the objective function and proposed solving the optimization problem using CD and SGD. Moreover, we proposed to employ variance to measure fairness.
Our theory and experiments have validated that the effectiveness, efficiency, and fairness of our proposed algorithm outperform existing state-of-the-art methods.
It is noteworthy that existing individually fair clustering methods encounter challenges in their application to large-scale data clustering analysis scenarios, primarily due to their computational complexity, which depends on the dataset size. In contrast, \tkm, due to its excellent efficiency performance, can be applied in many big data clustering analysis scenarios, such as resource allocation.

Due to privacy concerns, data is often stored on different devices and cannot be shared among them. Therefore, a hot topic of research is how to perform clustering analysis without sharing data. In the future, we will investigate individually fair $k$-means in the framework of federated learning to address this issue.

\section{Proofs}\label{sec proof}

\subsection{Proof of Theorem \ref{theorem 1}}\label{sec: proof of theorem 1}
Before proving Theorem \ref{theorem 1}, we present some useful lemmas.

\begin{lemma}\label{lemma: increasing psi pi phi}
Given a cluster $\mathcal{S}_j$, let $\bdelta_j$ and $\bc_j$ be the corresponding assignment and centroid, then for any $t\geq0$, it holds that,
\begin{align}
    \psi(\bdelta_j,\bc_j)\leq\phi(t,\bdelta_j,\bc_j).
\end{align}
\end{lemma}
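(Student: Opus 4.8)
The plan is to recognize $\phi(t,\bdelta_j,\bc_j)$ as a log-sum-exp average of the per-point squared distances and $\psi(\bdelta_j,\bc_j)$ as their arithmetic average (both over the same denominator $n$), so that the claim reduces to a convexity / Jensen-type inequality. Concretely, writing $a_i := f(\bx_i,\bc_j)\delta_{ij}\geq 0$, we have
\begin{align*}
    \phi(t,\bdelta_j,\bc_j) = \frac{1}{t}\log\Bigl(\frac{1}{n}\sum_{i=1}^n e^{t a_i}\Bigr),
    \qquad
    \psi(\bdelta_j,\bc_j) = \frac{1}{n}\sum_{i=1}^n a_i.
\end{align*}
First I would dispose of the boundary case $t=0$, where by the definition in Equation~\eqref{eq: phi 0 > psi} the two quantities are equal, so the inequality holds with equality.

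For $t>0$, the key step is to apply Jensen's inequality to the convex function $z\mapsto e^{tz}$ with the uniform probability weights $1/n$:
\begin{align*}
    e^{t\cdot\frac{1}{n}\sum_{i=1}^n a_i} \;\leq\; \frac{1}{n}\sum_{i=1}^n e^{t a_i}.
\end{align*}
Taking $\log$ of both sides (monotone) and dividing by $t>0$ (which preserves the direction of the inequality) gives exactly $\psi(\bdelta_j,\bc_j)\leq \phi(t,\bdelta_j,\bc_j)$. Alternatively, and perhaps cleaner for the paper's narrative, I would invoke the well-known fact that $t\mapsto \frac{1}{t}\log\mathbb{E}[e^{tZ}]$ (the scaled cumulant generating function) is nondecreasing in $t$ on $(0,\infty)$ and tends to $\mathbb{E}[Z]$ as $t\to 0^+$; combined with Equation~\eqref{eq: limition} / Equation~\eqref{eq: phi 0 > psi}, this yields $\phi(t,\bdelta_j,\bc_j)\geq \lim_{t\to 0^+}\phi(t,\bdelta_j,\bc_j) = \psi(\bdelta_j,\bc_j)$.

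There is essentially no hard obstacle here — the statement is a one-line consequence of Jensen's inequality — so the only thing to be careful about is bookkeeping: ensuring the $t=0$ case is handled via the paper's own definition, and making sure the division by $t$ is justified (it is, since $t>0$ in that branch). I would present the Jensen argument as the main proof since it is self-contained and does not require citing monotonicity of the CGF, mentioning the CGF viewpoint only as a remark to connect with Equation~\eqref{eq: cumulant function}.
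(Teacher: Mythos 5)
Your proposal is correct and follows essentially the same route as the paper: the paper also proves the inequality by a one-line application of Jensen's inequality to the quantity $\frac{1}{t}\log\frac{1}{n}\sum_{i=1}^n e^{t f(\bx_i,\bc_j)\delta_{ij}}$ (stated there via concavity of $\log$, which is the same inequality as your convexity-of-$e^{tz}$ version), with the $t=0$ case covered by the definitional convention $\phi(0,\bdelta_j,\bc_j):=\psi(\bdelta_j,\bc_j)$. Your explicit handling of the $t>0$ division and the CGF-monotonicity remark are harmless additions but not needed.
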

\begin{proof}
    Following from \eqref{eq: tkm}, we have
\begin{align}
    \phi(t,\bdelta_j,\bc_j)&=\frac{1}{t}\log\frac{1}{n}\sum_{i=1}^{n}e^{tf(\bx_i,\bc_j)\delta_{ij}}\notag\\
    &\geq\frac{1}{t}\cdot\frac{1}{n}\sum_{i=1}^{n}\log e^{tf(\bx_i,\bc_j)\delta_{ij}}\label{eq: jensen implies 32}\\
    &=\frac{1}{n}\sum_{i=1}^{n} f(\bx_i,\bc_j)\delta_{ij}=\psi(\bdelta_j,\bc_j),
\end{align}
where \eqref{eq: jensen implies 32} follows from the Jensen's inequality. 
\end{proof}


\begin{lemma}\label{lemma: epsilon geq}
    Given a set of clusters $\mathcal{S}=\{\mathcal{S}_j\}_{j=1}^k$ and a set of centroids $\mathcal{C}=\{\bc_j\}_{j=1}^k$, let $dist(\bx_i,\mathcal{C}):=\min_{\bc_j\in\mathcal{C}}\|\bx_i-\bc_j\|^2$,  then for any $t\geq0$, there exists a scalar $\epsilon\geq k\cdot \frac{\max_{\bx_i\in X}dist(\bx_i,\mathcal{C})}{\min_{\bx_i\in X}dist(\bx_i,\mathcal{C})}$, such that the following inequality holds:
    \begin{align}
        \overline{\phi}(t,\mathcal{S},\mathcal{C})\leq\epsilon \cdot \overline{\psi}(t,\mathcal{S},\mathcal{C}).
    \end{align}
\end{lemma}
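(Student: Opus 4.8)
The plan is to prove the bound cluster by cluster and then sum. Fix a cluster $\mathcal{S}_j$ with centroid $\bc_j$, and compare $\phi(t,\bdelta_j,\bc_j)$ against $\psi(\bdelta_j,\bc_j)$ from above (Lemma \ref{lemma: increasing psi pi phi} already gives the lower bound, which is not what we need here). The tilted SSE in one cluster is $\frac{1}{t}\log\frac{1}{n}\sum_{i=1}^n e^{tf(\bx_i,\bc_j)\delta_{ij}}$. Since only the $|\mathcal{S}_j|$ points with $\delta_{ij}=1$ contribute a nontrivial exponential (the others contribute $e^0=1$), I would bound the log-sum-exp by its maximum term plus a $\log$-of-count correction: $\log\sum_i e^{ta_i} \le \log(n \cdot e^{t\max_i a_i}) = \log n + t\max_i a_i$, where $a_i := f(\bx_i,\bc_j)\delta_{ij}$. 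This yields $\phi(t,\bdelta_j,\bc_j) \le \frac{1}{t}\log\frac{1}{n} + \max_{\bx_i \in \mathcal{S}_j} f(\bx_i,\bc_j)$ — but the $\frac{1}{t}\log\frac{1}{n}$ term is negative, so in fact $\phi(t,\bdelta_j,\bc_j) \le \max_{\bx_i\in\mathcal{S}_j} f(\bx_i,\bc_j)$.

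Next I would relate this maximum to the per-cluster SSE $\psi(\bdelta_j,\bc_j) = \frac{1}{n}\sum_{\bx_i\in\mathcal{S}_j} f(\bx_i,\bc_j)$. The key observation is that for points assigned to their nearest centroid, $f(\bx_i,\bc_j) = dist(\bx_i,\mathcal{C})$, so $\max_{\bx_i\in\mathcal{S}_j} f(\bx_i,\bc_j) \le \max_{\bx_i\in\mathcal{X}} dist(\bx_i,\mathcal{C})$ and $\psi(\bdelta_j,\bc_j) \ge \frac{|\mathcal{S}_j|}{n}\min_{\bx_i\in\mathcal{X}} dist(\bx_i,\mathcal{C}) \ge \frac{1}{n}\min_{\bx_i\in\mathcal{X}} dist(\bx_i,\mathcal{C})$ whenever $\mathcal{S}_j$ is nonempty. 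Dividing, $\phi(t,\bdelta_j,\bc_j) \le \frac{\max_{\bx_i\in\mathcal{X}} dist(\bx_i,\mathcal{C})}{\frac{1}{n}\min_{\bx_i\in\mathcal{X}} dist(\bx_i,\mathcal{C})} \cdot \psi(\bdelta_j,\bc_j)\cdot \frac{1}{n}\cdot n$ — I will need to track the $1/n$ factors carefully so the ratio that appears matches $k \cdot \frac{\max dist}{\min dist}$ after summing over the $k$ clusters. Summing $\overline{\phi}(t,\mathcal{S},\mathcal{C}) = \sum_{j=1}^k \phi(t,\bdelta_j,\bc_j) \le \sum_{j=1}^k \epsilon_j\, \psi(\bdelta_j,\bc_j)$ and bounding each $\epsilon_j$ by a common $\epsilon$ with the stated form gives $\overline{\phi}(t,\mathcal{S},\mathcal{C}) \le \epsilon\, \overline{\psi}(t,\mathcal{S},\mathcal{C})$, where $\overline{\psi}(t,\mathcal{S},\mathcal{C})$ here presumably just denotes $\overline{\psi}(\mathcal{S},\mathcal{C})$ (the $t$ argument being vestigial, since SSE does not depend on $t$).

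The main obstacle I anticipate is bookkeeping rather than any deep idea: making the $1/n$ normalization factors, the per-cluster sizes $|\mathcal{S}_j|$, and the factor of $k$ line up so that the final constant is exactly $\epsilon \ge k \cdot \frac{\max_{\bx_i} dist(\bx_i,\mathcal{C})}{\min_{\bx_i} dist(\bx_i,\mathcal{C})}$ and not something weaker. In particular, one must be careful that $\min_{\bx_i\in\mathcal{X}} dist(\bx_i,\mathcal{C}) > 0$ (i.e.\ no data point coincides with a centroid — or, if it does, the statement should be read with the convention that such terms are handled separately, or $\epsilon$ is allowed to be $+\infty$, or one restricts to $t$ finite where $\phi$ is still finite). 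A secondary subtlety is handling possibly empty clusters $\mathcal{S}_j$, for which $\phi(t,\bdelta_j,\bc_j) = \frac{1}{t}\log 1 = 0 \le \epsilon\cdot 0 = \psi(\bdelta_j,\bc_j)$, so the inequality holds trivially and can be excluded from the ratio argument. Once these edge cases are dispatched, the core chain of inequalities — log-sum-exp $\le$ max, max $\le$ global max distance, and SSE $\ge$ (small multiple of) global min distance — is routine.
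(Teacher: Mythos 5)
Your ingredients are the same as the paper's (log-sum-exp bounded by the maximum term, the nearest-centroid assignment giving $f(\bx_i,\bc_j)=dist(\bx_i,\mathcal{C})$ for assigned points, and a $\min dist$ lower bound on the SSE), but the step where you divide \emph{per cluster} and then take a common $\epsilon$ cannot deliver the factor $k$. From your own bounds, $\phi(t,\bdelta_j,\bc_j)\leq\max_{\bx_i\in\mathcal{X}}dist(\bx_i,\mathcal{C})$ while $\psi(\bdelta_j,\bc_j)\geq\frac{|\mathcal{S}_j|}{n}\min_{\bx_i\in\mathcal{X}}dist(\bx_i,\mathcal{C})$, so the per-cluster ratio you get is $\frac{n}{|\mathcal{S}_j|}\cdot\frac{\max dist}{\min dist}$, and the common $\epsilon=\max_j\epsilon_j$ is of order $n$, not $k$, whenever some cluster is small. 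This is not just loose bookkeeping: if one cluster consists of a single far point, then for large $t$ one actually has $\phi_j\approx\max dist$ and $\psi_j=\frac{1}{n}\max dist$, so the per-cluster ratio genuinely approaches $n$. Ending with $\epsilon\sim n\cdot\frac{\max dist}{\min dist}$ technically satisfies the lemma's literal wording (it exceeds $k\cdot\frac{\max dist}{\min dist}$), but it destroys the purpose of the lemma, since Theorem \ref{theorem 1} needs $\epsilon$ itself to be $O(k)$ to conclude the $O(k\log k)$ error.

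The fix, which is exactly what the paper does, is to compare the aggregates rather than cluster-by-cluster ratios: sum your per-cluster upper bounds to get $\overline{\phi}(t,\mathcal{S},\mathcal{C})=\sum_{j=1}^k\phi(t,\bdelta_j,\bc_j)\leq k\cdot\max_{\bx_i\in\mathcal{X}}dist(\bx_i,\mathcal{C})$, and lower-bound the \emph{total} SSE using all $n$ points, $\overline{\psi}(\mathcal{S},\mathcal{C})=\frac{1}{n}\sum_{i=1}^{n}dist(\bx_i,\mathcal{C})\geq\min_{\bx_i\in\mathcal{X}}dist(\bx_i,\mathcal{C})$, then form the single ratio $\overline{\phi}/\overline{\psi}\leq k\cdot\frac{\max_{\bx_i}dist(\bx_i,\mathcal{C})}{\min_{\bx_i}dist(\bx_i,\mathcal{C})}$. (The paper packages this as one chain of inequalities, writing $k\max dist$ as $\sum_{j=1}^k\frac{1}{t}\log\frac{1}{n}\sum_{i=1}^{n}e^{t\max dist}$ and shrinking numerator/growing denominator.) Your remarks on the edge cases are sound and in fact more careful than the paper: one does need $\min_{\bx_i}dist(\bx_i,\mathcal{C})>0$ for the ratio to be finite, empty clusters contribute $\phi_j=0$ harmlessly, and your intermediate "$\frac{1}{t}\log\frac{1}{n}$ is negative" remark is a harmless slip since the $\frac{1}{n}$ and the $n$ from bounding the sum cancel exactly, giving $\phi_j\leq\max_{\bx_i\in\mathcal{S}_j}f(\bx_i,\bc_j)$ directly.
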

\begin{proof}
    Consider the case when $t\to\infty$, according to L'Hôpital's rule, it holds that,
\begin{align}\label{eq: 39 lim to infty}
    \lim_{t\to\infty}\phi(t,\bdelta_j,\bc_j)=\max_{\bx_i\in\mathcal{S}_j}f(\bx_i,\bc_j),
\end{align}
which implies that for any $j\in[k]$, $\phi(t,\bdelta_j,\bc_j)$ is bounded. Then there must exist a scalar $\epsilon$ such that
\begin{align}
    \epsilon&\geq k \cdot \frac{\max_{\bx_i\in X}dist(\bx_i,\mathcal{C})}{\min_{\bx_i\in X}dist(\bx_i,\mathcal{C})}\notag \\
    &=\frac{\sum_{j=1}^k \frac{1}{t}\log\frac{1}{n}\sum_{i=1}^{n}e^{t\max_{\bx_i\in X}dist(\bx_i,\mathcal{C})}}{\min_{\bx_i\in X}dist(\bx_i,\mathcal{C})}\\
    &\geq \frac{\sum_{j=1}^k  \frac{1}{t}\log\frac{1}{n}\sum_{i=1}^{n}e^{tf(\bx_i,\bc_j)\delta_{ij}}}{\frac{1}{n}\sum_{j=1}^k  \sum_{\bx_i\in\mathcal{S}_j}f(\bx_i,\bc_j)}\label{eq: 40 follows from 39 and increasing}\\
    &=\frac{\overline{\phi}(t,\mathcal{S},\mathcal{C})}{\overline{\psi}(\mathcal{S},\mathcal{C})}\label{eq: 41 follows from pi geq phi},
\end{align}    
which completes the proof.
\end{proof}



\begin{proposition}\label{proposion: phi geq psi}
    Let $\delta_j^{\star},\bc_j^{\star},j\in[k]$ be the optimal solution of SSE, let $\bdelta_j^*,\bc_j^*,j\in[k]$ be the optimal solutions of tilted SSE, and let $\overline{\psi}^{\star}$, $\overline{\phi}^*$ be the corresponding optimal objective function values, then for any $t\geq 0$, we have $\overline{\psi}^{\star}\leq\overline{\phi}^*$.
\end{proposition}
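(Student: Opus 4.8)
The plan is to combine Lemma~\ref{lemma: increasing psi pi phi} with a simple optimality argument. The idea is that tilted SSE dominates SSE \emph{cluster by cluster} for every fixed configuration, so in particular it dominates it at the tilted optimum; and the tilted optimum's induced configuration, when scored by SSE, can only be worse than the true SSE optimum.

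Concretely, I would start from the optimal solution $\{\bdelta_j^*,\bc_j^*\}_{j=1}^k$ of the tilted SSE, with associated clusters $\mathcal{S}^*$ and centroids $\mathcal{C}^*$, so that $\overline{\phi}^* = \overline{\phi}(t,\mathcal{S}^*,\mathcal{C}^*) = \sum_{j=1}^k \phi(t,\bdelta_j^*,\bc_j^*)$. Applying Lemma~\ref{lemma: increasing psi pi phi} to each term (which is valid since $t\geq 0$), I get
\begin{align*}
\overline{\phi}^* \;=\; \sum_{j=1}^k \phi(t,\bdelta_j^*,\bc_j^*) \;\geq\; \sum_{j=1}^k \psi(\bdelta_j^*,\bc_j^*) \;=\; \overline{\psi}(\mathcal{S}^*,\mathcal{C}^*).
\end{align*}
Then, since $\overline{\psi}^{\star}$ is by definition the minimum of $\overline{\psi}$ over all choices of clusters and centroids, the particular feasible choice $(\mathcal{S}^*,\mathcal{C}^*)$ satisfies $\overline{\psi}(\mathcal{S}^*,\mathcal{C}^*)\geq \overline{\psi}^{\star}$. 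Chaining the two inequalities yields $\overline{\phi}^*\geq \overline{\psi}^{\star}$, which is the claim.

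I do not anticipate any real obstacle here; the only thing to be careful about is bookkeeping, namely that $\bdelta_j^*$ genuinely is a valid assignment vector so that $\psi(\bdelta_j^*,\bc_j^*)$ is the SSE of a legitimate partition, and that the per-cluster application of Lemma~\ref{lemma: increasing psi pi phi} is summed correctly. No continuity, compactness, or differentiability is needed, and the case $t=0$ is covered by the convention $\phi(0,\bdelta_j,\bc_j):=\psi(\bdelta_j,\bc_j)$ from~\eqref{eq: phi 0 > psi}, for which equality holds throughout.
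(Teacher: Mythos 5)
Your proposal is correct and follows essentially the same route as the paper's proof: apply Lemma~\ref{lemma: increasing psi pi phi} cluster by cluster at the tilted-SSE optimum to get $\overline{\phi}^*\geq\overline{\psi}(\mathcal{S}^*,\mathcal{C}^*)$, then use optimality of $\overline{\psi}^{\star}$ over all feasible configurations. The paper merely writes the chain per cluster and sums at the end, which is the same bookkeeping you describe.
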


\begin{proof}
    Based on Lemma \ref{lemma: increasing psi pi phi} and optimal conditions, we obtain
\begin{align}\label{eq: leq 42}
   \overline{\psi}^{\star}= \psi(\bdelta_j^{\star},\bc_j^{\star})\leq\psi(\bdelta_j^{*},\bc_j^{*})\leq\phi(t,\bdelta_j^{*},\bc_j^{*})=\overline{\phi}^*.
\end{align}        

Summing over \eqref{eq: leq 42} from 1 to $k$ implies Proposition \ref{proposion: phi geq psi}.
\end{proof}

\begin{lemma}[Theorem 1.1 in \cite{arthur2007k}]\label{lemma: Theorem 1.1 in arthur}
    Let $\overline{\psi}^{\star}$ be the optimal SSE of $k$-means, let $\mathcal{C}$ be the centroids set constructed by $k$-means++, and let $\mathcal{S}$ be the corresponding induced assignment, then for any set of data points, it holds that $\mathbb{E}[\overline{\psi}(\mathcal{S},\mathcal{C})]\leq8(\log k+2)\overline{\psi}^{\star}$.
    
\end{lemma}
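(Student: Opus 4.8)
This statement is verbatim Theorem~1.1 of Arthur and Vassilvitskii~\cite{arthur2007k}: the quantity $\overline{\psi}(\mathcal{S},\mathcal{C})$ used here is the classical $k$-means potential $\sum_{\bx_i\in\mathcal{X}}\min_{\bc\in\mathcal{C}}\|\bx_i-\bc\|^2$ divided by the constant $n$, so the factor $1/n$ cancels in the ratio $\mathbb{E}[\overline{\psi}(\mathcal{S},\mathcal{C})]/\overline{\psi}^{\star}$ and their bound transfers unchanged. Accordingly, the plan is simply to cite it; what follows is the outline I would reproduce, following~\cite{arthur2007k}, for a self-contained argument.

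\emph{Step 1 (a single uniformly chosen center doubles the optimum on its cluster).} For an optimal cluster $A$ with centroid $\bc(A)$, the bias--variance decomposition $\sum_{\bx\in A}\|\bx-\ba_0\|^2=\sum_{\bx\in A}\|\bx-\bc(A)\|^2+|A|\,\|\bc(A)-\ba_0\|^2$ holds for every $\ba_0$; averaging over $\ba_0$ drawn uniformly from $A$ turns the last term into $\sum_{\bx\in A}\|\bx-\bc(A)\|^2$ as well, so the expected contribution of $A$ is exactly twice its optimal cost. \emph{Step 2 (one $D^2$-weighted draw costs a factor $8$ on a fresh cluster).} With $C$ the centers chosen so far and $D(\bx)$ the current distance of $\bx$ to $C$, a new center taken from $A$ with probability $D(\cdot)^2/\sum_{\bx\in A}D(\bx)^2$ obeys, via the triangle inequality $D(\ba_0)\le D(\bx)+\|\bx-\ba_0\|$ and $(a+b)^2\le 2a^2+2b^2$, the bound $D(\ba_0)^2\le 2D(\bx)^2+2\|\bx-\ba_0\|^2$; averaging this over $\bx\in A$ (the left side is independent of $\bx$), substituting into the expectation, and invoking Step~1 gives that the expected contribution of $A$ after this draw is at most $8$ times its optimal cost.

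\emph{Step 3 (induction over uncovered optimal clusters).} Call an optimal cluster \emph{covered} once $\mathcal{C}$ contains one of its points; covered clusters are already within a factor $8$ of their optimal cost by Step~2 applied at the moment they were covered, and any still-uncovered cluster is hit by the next $D^2$-sample with probability equal to its share of the current total potential. Inducting on the number $u$ of centers still to be placed against the number of still-uncovered clusters, these hitting probabilities compose so that the multiplicative overhead on the uncovered mass grows only as the harmonic number $H_u=\sum_{i=1}^{u}1/i\le 1+\log u$, while the first (uniformly chosen) center contributes the residual additive constant; summing the covered and uncovered contributions over all $k$ clusters yields $\mathbb{E}[\overline{\psi}(\mathcal{S},\mathcal{C})]\le 8(\log k+2)\,\overline{\psi}^{\star}$. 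The only genuinely delicate part is this last step: one must condition on which optimal cluster each successive center lands in, absorb the harmless-but-nonzero cost of draws falling into already-covered clusters, and verify that the resulting recursion telescopes to exactly the harmonic factor; Steps~1 and~2 are one-line identities and inequalities.
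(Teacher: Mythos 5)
Your proposal matches the paper exactly: the paper offers no proof of this lemma, citing it as Theorem~1.1 of Arthur and Vassilvitskii \cite{arthur2007k}, and your observation that the $1/n$ normalization cancels in the ratio so the bound transfers unchanged is the only bridging remark needed. Your appended sketch of the original $D^2$-sampling argument is accurate but goes beyond what the paper does; simply citing the result, as you first propose, suffices.
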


Next, we are ready to prove Theorem \ref{theorem 1} based on the above lemmas.
\begin{proof}[Proof of Theorem \ref{theorem 1}]
    Let $\mathcal{C}$ be the centroids set constructed by $k$-means++, and let $\mathcal{S}$ be the corresponding induced set of clusters, then following from Lemma \ref{lemma: epsilon geq}, we have
\begin{align}
    \overline{\phi}(t,\mathcal{S},\mathcal{C})\leq \epsilon\cdot\overline{\psi}(\mathcal{S},\mathcal{C}),
\end{align}    

Then we can bound $\mathbb{E}[\overline{\phi}(t,\mathcal{S},\mathcal{C})]$ as
\begin{align}
    \mathbb{E}[\overline{\phi}(t,\mathcal{S},\mathcal{C})]&\leq \epsilon\cdot \mathbb{E}[\overline{\psi}(\mathcal{S},\mathcal{C})]\notag\\
    &\leq 8\epsilon(\log k+2)\cdot\overline{\psi}^{\star}\label{eq:41 follows from lemma 5}\\
    &\leq 8\epsilon(\log k+2)\cdot\overline{\phi}^*,\label{eq:42 follows from proposition 1}
\end{align}
where \eqref{eq:41 follows from lemma 5} follows from Lemma \ref{lemma: Theorem 1.1 in arthur}, and \eqref{eq:42 follows from proposition 1} follows from Proposition \ref{proposion: phi geq psi}. According to Lemma \ref{lemma: epsilon geq}, we have $\epsilon\geq k \cdot \frac{\max_{\bx_i\in X}dist(\bx_i,\mathcal{C})}{\min_{\bx_i\in X}dist(\bx_i,\mathcal{C})}$, then we can derive that $\mathbb{E}[\overline{\phi}(t,\mathcal{S},\mathcal{C})]\leq O(k\log k)\overline{\phi}^*$, which completes the proof.
\end{proof}

\subsection{Proof of Theorem \ref{theorem: convergence}}\label{proof of theorem convergence}
By the Mean Value Theorem, the gradient Lipschitz continuity indicates the following proposition.
\begin{proposition}\label{proposition: gradient lipschitz}
    For any $t\geq 0$, and $\tilde{\bc}_j,\tilde{\bc}_j'\in\mathbb{R}^d$, it holds that
\begin{small}
\begin{align*}
    \phi(t,\bdelta_j,\tilde{\bc}_j)&-\phi(t,\bdelta_j,\tilde{\bc}_j')\leq\notag\\
    \nabla_{\bc_j}\phi(t,\bdelta_j,\tilde{\bc}_j')^{\top}&(\tilde{\bc}_j-\tilde{\bc}_j')+\frac{L(t)}{2}\|\tilde{\bc}_j-\tilde{\bc}_j'\|^2.
\end{align*}  
\end{small}
\end{proposition}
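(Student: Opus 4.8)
The statement to prove is Proposition~\ref{proposition: gradient lipschitz}, a standard ``descent lemma'' consequence of the gradient Lipschitz continuity of $\phi(t,\bdelta_j,\cdot)$ established in Lemma~\ref{lemma: gradient lipschitz of tilted sse}. The plan is to use the integral form of the Fundamental Theorem of Calculus along the segment joining $\tilde{\bc}_j'$ to $\tilde{\bc}_j$, together with the Cauchy--Schwarz inequality and the $L(t)$-Lipschitz bound on the gradient.

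First I would write $\phi(t,\bdelta_j,\tilde{\bc}_j)-\phi(t,\bdelta_j,\tilde{\bc}_j') = \int_0^1 \nabla_{\bc_j}\phi(t,\bdelta_j,\tilde{\bc}_j' + s(\tilde{\bc}_j-\tilde{\bc}_j'))^{\top}(\tilde{\bc}_j-\tilde{\bc}_j')\,ds$, which is valid since $\phi(t,\bdelta_j,\cdot)$ is continuously differentiable (it is in fact strongly convex and smooth by Lemmas~\ref{lemma: strong convexity of tilted sse} and~\ref{lemma: gradient lipschitz of tilted sse}). Then I would add and subtract the term $\nabla_{\bc_j}\phi(t,\bdelta_j,\tilde{\bc}_j')^{\top}(\tilde{\bc}_j-\tilde{\bc}_j')$, so the difference becomes $\nabla_{\bc_j}\phi(t,\bdelta_j,\tilde{\bc}_j')^{\top}(\tilde{\bc}_j-\tilde{\bc}_j') + \int_0^1 \bigl(\nabla_{\bc_j}\phi(t,\bdelta_j,\tilde{\bc}_j' + s(\tilde{\bc}_j-\tilde{\bc}_j')) - \nabla_{\bc_j}\phi(t,\bdelta_j,\tilde{\bc}_j')\bigr)^{\top}(\tilde{\bc}_j-\tilde{\bc}_j')\,ds$.

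Next I would bound the integral: by Cauchy--Schwarz the integrand is at most $\|\nabla_{\bc_j}\phi(t,\bdelta_j,\tilde{\bc}_j' + s(\tilde{\bc}_j-\tilde{\bc}_j')) - \nabla_{\bc_j}\phi(t,\bdelta_j,\tilde{\bc}_j')\|\cdot\|\tilde{\bc}_j-\tilde{\bc}_j'\|$, and by the $L(t)$-Lipschitz property of the gradient (Lemma~\ref{lemma: gradient lipschitz of tilted sse}) this is at most $L(t)\,s\,\|\tilde{\bc}_j-\tilde{\bc}_j'\|^2$. Integrating $s$ over $[0,1]$ yields the factor $\tfrac12$, giving the claimed bound $\nabla_{\bc_j}\phi(t,\bdelta_j,\tilde{\bc}_j')^{\top}(\tilde{\bc}_j-\tilde{\bc}_j') + \tfrac{L(t)}{2}\|\tilde{\bc}_j-\tilde{\bc}_j'\|^2$.

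There is no real obstacle here — the result is textbook (it is exactly the quadratic upper bound implied by smoothness), and the paper's own phrasing (``By the Mean Value Theorem'') signals that only a short argument is intended. The only minor point of care is making sure the segment-integral representation is justified, which follows immediately from the continuous differentiability of $\phi(t,\bdelta_j,\cdot)$ asserted in Definition~3 / Lemma~\ref{lemma: gradient lipschitz of tilted sse}; everything else is Cauchy--Schwarz plus a one-line integration. If one prefers to avoid the integral form altogether, an alternative is to apply the one-dimensional mean value theorem to $h(s) := \phi(t,\bdelta_j,\tilde{\bc}_j' + s(\tilde{\bc}_j-\tilde{\bc}_j'))$ to get $h(1)-h(0) = h'(\xi)$ for some $\xi\in(0,1)$, then compare $h'(\xi)$ with $h'(0)$ using the Lipschitz bound; this is slightly looser in the constant unless one is careful, so I would go with the integral version.
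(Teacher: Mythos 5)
Your proposal matches the paper's own proof essentially step for step: the segment-integral (fundamental theorem of calculus) representation, the add-and-subtract of $\nabla_{\bc_j}\phi(t,\bdelta_j,\tilde{\bc}_j')^{\top}(\tilde{\bc}_j-\tilde{\bc}_j')$, and the Cauchy--Schwarz plus $L(t)$-Lipschitz bound on the remainder integral yielding the factor $\tfrac{1}{2}$. Your argument is correct and takes the same route as the paper, so nothing further is needed.
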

\begin{proof}
    Following Lemma \ref{lemma: gradient lipschitz of tilted sse}, it holds that
\begin{align*}
    &\phi(t,\bdelta_j,\tilde{\bc}_j)=\phi(t,\bdelta_j,\tilde{\bc}_j')+\int_{0}^1\frac{\partial\phi(t,\bdelta_j,\tilde{\bc}_j'+y(\tilde{\bc}_j-\tilde{\bc}_j'))}{\partial y}dy\notag\\
    &=\phi(t,\bdelta_j,\tilde{\bc}_j')\!+\!\!\int_{0}^1 \nabla_{\bc_j}\phi(t,\bdelta_j,\tilde{\bc}_j'+y(\tilde{\bc}_j-\tilde{\bc}_j'))^{\top}(\tilde{\bc}_j-\tilde{\bc}_j')dy\notag\\
    &=\phi(t,\bdelta_j,\tilde{\bc}_j')+\nabla_{\bc_j}\phi(t,\bdelta_j,\tilde{\bc}_j')^{\top}(\tilde{\bc}_j-\tilde{\bc}_j')+\notag\\
    &\!\!\int_{0}^1 \![\nabla_{\bc_j}\phi(t,\bdelta_j,\tilde{\bc}_j'\!+\!y(\tilde{\bc}_j-\tilde{\bc}_j'))\!-\!\nabla_{\bc_j}\phi(t,\bdelta_j,\tilde{\bc}_j')]^{\top}(\tilde{\bc}_j-\tilde{\bc}_j')dy\notag\\
    &\leq\phi(t,\bdelta_j,\tilde{\bc}_j')+\nabla_{\bc_j}\phi(t,\bdelta_j,\tilde{\bc}_j')^{\top}(\tilde{\bc}_j-\tilde{\bc}_j')+\frac{L(t)}{2}\|\tilde{\bc}_j-\tilde{\bc}_j'\|^2,
\end{align*}    
which completes the proof.
\end{proof}

Next, we show the proof of Theorem \ref{theorem: convergence}.
\begin{proof}[Proof of Theorem \ref{theorem: convergence}]
    We consider proving the decreasing property of \tkm from two parts: refinement and assignment. Our proof with respect to the refinement follows from \cite{bottou2018optimization} which establishes the convergence for gradient Lipschitz continuous objective functions.
    Under the gradient Lipschitz continuous property of $\phi(t,\bdelta_j,\bc_j)$ with respect to $\bc_j$, the iterations of SGD satisfy the following inequality by applying Proposition \ref{proposition: gradient lipschitz}:
\begin{align}\label{eq: descent sgd}
    &\mathbb{E}[\phi(t,\bdelta_j^{it},\bc_j^{it+1})]-\phi(t,\bdelta_j^{it},\bc_j^{it})\!\leq\!\notag\\
    &\!\!\!\!\!\!\!\!-\eta\nabla_{\bc_j}\phi(t,\bdelta_j^{it},\bc_j^{it})^{\top}\mathbb{E}[g(\mathcal{B},\bc_j^{it})]\!+\!\frac{1}{2}\eta^2 L(t)\mathbb{E}[\|g(\mathcal{B},\bc_j)\|^2].\!\!\!
\end{align}    

According to Cauchy-Schwarz inequality and Assumption \ref{assumption: expetation}, it holds that
\begin{align}
    \|\mathbb{E}[g(\mathcal{B},\bc_j^{it})]\|^2\geq\mu^2\|\nabla_{\bc_j}\phi(t,\bdelta_j^{it},\bc_j^{it})\|^2.
\end{align}

Next, we consider bounding $\mathbb{E}[\|g(\mathcal{B},\bc_j^{it})\|^2]$ under Assumption \ref{assumption: expetation} as follows,
\begin{align}\label{eq: square bound}
    \!\!\mathbb{E}[\|g(\mathcal{B},\bc_j^{it})\|^2] \!&=\! \mathbb{E}[\|g(\mathcal{B},\bc_j^{it})\!-\!\mathbb{E}[g(\mathcal{B},\bc_j^{it})]\|^2]\!+\!\|\mathbb{E}[g(\mathcal{B},\bc_j^{it})]\|^2\notag\\
    &\leq \nu+\nu_G\cdot\|\nabla_{\bc_j}\phi(t,\bdelta_j,\bc_j^{it})\|^2,
\end{align}
where $\nu_G:=\nu_H+\mu_G^2\geq\mu^2$. Then by applying Assumption \ref{assumption: expetation} and \eqref{eq: square bound} into \eqref{eq: descent sgd}, we obtain
\begin{align}\label{eq: refinement inequality}
    \mathbb{E}&[\phi(t,\bdelta_j^{it},\bc_j^{it+1})]-\phi(t,\bdelta_j^{it},\bc_j^{it})\leq\notag\\
   & -\bigl(\mu-\frac{1}{2}\eta\nu_G L(t)\bigr)\eta\|\nabla_{\bc_j}\phi(t,\bdelta_j,\bc_j)\|^2+\frac{1}{2}\eta^2\nu L(t).
\end{align}

To ensure that the objective function value decreases within refinement, we need $\mu-\frac{1}{2}\eta\nu_G L(t)>0$,
which implies $\eta<\frac{2\mu}{\nu_G L(t)}\leq\frac{2}{\mu L(t)}$. Next, we consider proving the decreasing property in the assignment. Following the optimal condition with $\bdelta_j$, the following inequality holds
\begin{align}\label{eq: assignment inequality}
    \mathbb{E}[\phi(t,&\bdelta_j^{it+1},\bc_j^{it+1})]\leq\mathbb{E}[\phi(t,\bdelta_j^{it},\bc_j^{it+1})].
\end{align}

Combining \eqref{eq: refinement inequality} and \eqref{eq: assignment inequality} yields
\begin{align}\label{eq: phi bound it it+1}
    \mathbb{E}[\phi(t,&\bdelta_j^{it+1},\bc_j^{it+1})]\leq\phi(t,\bdelta_j^{it},\bc_j^{it}).
\end{align}

Summing over \eqref{eq: phi bound it it+1} from $1$ to $k$ proves Theorem \ref{theorem: convergence}.
\end{proof}

\subsection{Proof of Theorem \ref{theorem: variance decrease with t}}\label{sec: proof of theorem fairness}

We begin by defining the tilted weight, tilted empirical mean, and tilted empirical variance when all data points are normalized to a unit norm.

\begin{definition}[Tilted gradient and weight]
    Suppose the dataset is normalized, then the tilted weight is defined as 
    \begin{align*}
        w_i(t,\mathcal{S}_j,\bc_j)\!:=\!\frac{e^{t\|\bx_i-c_j\|^2}}{\sum_{\bx_i\in\mathcal{S}_j} e^{t\|\bx_i-\bc_j\|^2}}\!=\!\frac{1}{|\mathcal{S}_j|}e^{-2t\bc_j^{\top}\bx_i-\Gamma(t,\mathcal{S}_j,\bc_j)},\!\!
    \end{align*}        
    where $\Gamma(t,\mathcal{S}_j,\bc_j):=\log\frac{1}{|\mathcal{S}_j|}\sum_{\bx_i\in\mathcal{S}_j} e^{-2t\bc_j^{\top}\bx_i}$.

\end{definition}


\begin{definition}[Tilted empirical mean and variance] \label{definition: tilted mean and variance under assumption 1}
    Suppose the dataset is normalized, the tilted empirical mean and variance in each cluster are defined as
\begin{small}
\begin{align*}
    &\mathbb{E}_t\Bigl(\!\mathrm{f}(\mathcal{S}_j,\bc_j)\!\Bigr)\!:=\!\|\bc_j\|^2+\!\!\!\!\!\sum_{\bx_i\in\mathcal{S}_j}\!\!\!\!w_i(t,\mathcal{S}_j,\bc_j)\|\bx_i\|^2-\bc_j^{\top}M(t,\mathcal{S}_j,\bc_j),\!\\
    &\!\!\!\!\var_t\Bigl(\!\mathrm{f}(\mathcal{S}_j,\bc_j)\!\Bigr)\!:=\!\mathbb{E}_t\Bigl(\bc_j^{\top}\bigl(-2\bx_i\!-\!M(t,\mathcal{S}_j,\bc_j)\bigr)\!\Bigr)^2\!\!\!=\!\bc_j^{\top} V(t,\mathcal{S}_j,\bc_j)  \bc_j,
\end{align*} 
\end{small}   
    where $M(t,\mathcal{S}_j,\bc_j):=\sum_{\bx_i\in\mathcal{S}_j}2w_i(t,\mathcal{S}_j,\bc_j)\bx_i$, and 
\begin{small}
\begin{align*}
    &V(t,\mathcal{S}_j,\bc_j)\!:=\mathbb{E}_t\bigl(-2\bx_i\!-\!M(t,\mathcal{S}_j,\bc_j)\bigr)^{\top}\bigl(-2\bx_i-M(t,\mathcal{S}_j,\bc_j)\bigr)\notag\\
    &=\!\!\!\sum_{\bx_i\in\mathcal{S}_j}\!\!w_i(t,\mathcal{S}_j,\bc_j)\bigl(-2\bx_i-M(t,\mathcal{S}_j,\bc_j)\bigr)^{\top}\bigl(-2\bx_i-M(t,\mathcal{S}_j,\bc_j)\bigr).
\end{align*}      
\end{small}

\end{definition}

\begin{lemma}[Partial derivatives of $M(t,\mathcal{S}_j,\bc_j)$ and $\Gamma(t,\mathcal{S}_j,\bc_j)$]\label{lemma:Partial derivatives}
    For any $t\geq 0$, and any $\bc_j\in\mathbb{R}^d$, it holds that
    \begin{align}
        \frac{\partial}{\partial t}M(t,\mathcal{S}_j,\bc_j)=-V(t,\mathcal{S}_j,\bc_j)\bc_j,\label{eq25}\\
        \nabla_{\bc_j} M(t,\mathcal{S}_j,\bc_j)=-tV(t,\mathcal{S}_j,\bc_j),\label{eq26}\\
        \frac{\partial}{\partial t}\Gamma(t,\mathcal{S}_j,\bc_j)=-\bc_j M(t,\mathcal{S}_j,\bc_j),\label{eq27}\\
        \nabla_{\bc_j} \Gamma(t,\mathcal{S}_j,\bc_j)=-t M(t,\mathcal{S}_j,\bc_j)\label{eq28}.
    \end{align}
\end{lemma}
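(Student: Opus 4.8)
The plan is to compute all four partial derivatives directly from the definitions of $M$, $\Gamma$, and the tilted weights $w_i$ given in Definition~\ref{definition: tilted mean and variance under assumption 1}, and then recognize the resulting expressions as $V$, a scalar multiple of $V$, or a scalar multiple of $M$. First I would work out the two building-block derivatives of the tilted weight itself: from $w_i(t,\mathcal{S}_j,\bc_j)=\frac{1}{|\mathcal{S}_j|}e^{-2t\bc_j^{\top}\bx_i-\Gamma(t,\mathcal{S}_j,\bc_j)}$, differentiating in $t$ gives $\frac{\partial}{\partial t}w_i = w_i\bigl(-2\bc_j^{\top}\bx_i-\frac{\partial}{\partial t}\Gamma\bigr)$, and differentiating in $\bc_j$ gives $\nabla_{\bc_j}w_i = w_i\bigl(-2t\bx_i-\nabla_{\bc_j}\Gamma\bigr)$. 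So the whole computation is genuinely a bootstrap: one needs $\frac{\partial}{\partial t}\Gamma$ and $\nabla_{\bc_j}\Gamma$ to get the derivatives of the $w_i$, and those in turn feed the derivatives of $M$.

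The clean way to break the circularity is to prove \eqref{eq27} and \eqref{eq28} first, since $\Gamma$ is a log-sum-exp whose derivatives can be computed \emph{without} already knowing how the $w_i$ move. Indeed $\frac{\partial}{\partial t}\Gamma = \frac{\sum_{\bx_i\in\mathcal{S}_j}(-2\bc_j^{\top}\bx_i)e^{-2t\bc_j^{\top}\bx_i}}{\sum_{\bx_i\in\mathcal{S}_j}e^{-2t\bc_j^{\top}\bx_i}} = \sum_i w_i(-2\bc_j^{\top}\bx_i) = -\bc_j^{\top}M$, which is \eqref{eq27} (modulo the transpose convention the paper is using), and likewise $\nabla_{\bc_j}\Gamma = \sum_i w_i(-2t\bx_i) = -tM$, which is \eqref{eq28}. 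With these in hand, the derivative-of-$w_i$ formulas above become $\frac{\partial}{\partial t}w_i = w_i\bigl(-2\bc_j^{\top}\bx_i + \bc_j^{\top}M\bigr) = w_i\,\bc_j^{\top}\bigl(-2\bx_i - M\bigr)$ (note $M$ already absorbs the factor $2$ in its definition, so care with that constant is needed) and $\nabla_{\bc_j}w_i = w_i\bigl(-2t\bx_i + tM\bigr) = t\,w_i\bigl(-2\bx_i - M\bigr)\cdot(-1)$ — again the constants need checking against the exact normalizations in the definitions.

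Then \eqref{eq25} and \eqref{eq26} follow by differentiating $M(t,\mathcal{S}_j,\bc_j)=\sum_{\bx_i\in\mathcal{S}_j}2w_i\bx_i$ through the product rule, substituting the just-derived $\partial_t w_i$ and $\nabla_{\bc_j}w_i$, and collecting terms. For \eqref{eq25}: $\frac{\partial}{\partial t}M = \sum_i 2(\partial_t w_i)\bx_i = \sum_i 2w_i\,\bc_j^{\top}(-2\bx_i-M)\,\bx_i$, and since $\sum_i w_i(-2\bx_i-M) = -M-M = -2M$ has mean zero in the recentered sense (because $\sum_i w_i(-2\bx_i) = -M$), one rewrites this as $-\sum_i w_i(-2\bx_i-M)(-2\bx_i-M)^{\top}\bc_j = -V\bc_j$, which is exactly the definition of $V$ acting on $\bc_j$. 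The computation for \eqref{eq26} is identical with the extra factor $t$ coming from $\nabla_{\bc_j}w_i$. The main obstacle I expect is purely bookkeeping: the paper's $M$ has a built-in factor of $2$, $f(\bx_i,\bc_j)=\|\bx_i-\bc_j\|^2$ expands with a cross term $-2\bc_j^{\top}\bx_i$, and the ``recentering'' trick (adding $\pm M$ inside the sum, using $\sum_i w_i = 1$ and $\sum_i 2w_i\bx_i = M$ to kill the cross terms so that the naive weighted sum of $(-2\bx_i)(-2\bx_i-M)^{\top}$ becomes the genuine tilted covariance $V$) must be applied consistently. There is also the transpose ambiguity in \eqref{eq27} (the statement writes $-\bc_j M$ without a transpose, presumably a typo for $-\bc_j^{\top}M$ or $-M^{\top}\bc_j$), which I would silently normalize. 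None of the steps is deep; the risk is entirely in dropping a factor of $2$ or a sign.
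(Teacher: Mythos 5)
The paper never actually proves Lemma~\ref{lemma:Partial derivatives} --- it is stated and then used directly in the proofs of Theorems~\ref{theorem: variance decrease with t} and~\ref{theorem: monotonicity} --- so there is no official proof to compare against; judged on its own, your plan (differentiate the log-sum-exp $\Gamma$ directly to get \eqref{eq27} and \eqref{eq28}, feed those into $\partial_t w_i$ and $\nabla_{\bc_j}w_i$, then obtain \eqref{eq25}--\eqref{eq26} from the product rule plus a recentering cancellation) is the natural and correct route, and your treatment of the transpose in \eqref{eq27} and of $V$ as an outer product is the right normalization of the paper's notation.

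However, the sketch goes wrong in sign exactly at the step that makes the proof close. Your own correct intermediate formula gives $\partial_t w_i = w_i\bigl(-2\bc_j^{\top}\bx_i+\bc_j^{\top}M\bigr)=w_i\,\bc_j^{\top}\bigl(-2\bx_i+M\bigr)$ and $\nabla_{\bc_j}w_i=t\,w_i\bigl(-2\bx_i+M\bigr)$, i.e.\ the deviation is $-2\bx_i+M$, not $-2\bx_i-M$ as you then write. The recentering trick only works for the former: since $\sum_i w_i(-2\bx_i)=-M$ and $\sum_i w_i=1$, one has $\sum_i w_i(-2\bx_i+M)=0$, whereas $\sum_i w_i(-2\bx_i-M)=-2M\neq 0$, so your claim that the latter ``has mean zero in the recentered sense'' is false and the cross term would not cancel. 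Concretely, with $z_i:=-2\bx_i+M$ and $2\bx_i=M-z_i$, one gets $\partial_t M=\sum_i 2\bx_i\,\partial_t w_i=\sum_i w_i(M-z_i)\,z_i^{\top}\bc_j=-\bigl(\sum_i w_i z_i z_i^{\top}\bigr)\bc_j$ and $\nabla_{\bc_j}M=-t\sum_i w_i z_i z_i^{\top}$, so \eqref{eq25}--\eqref{eq26} hold with $V$ read as the centered outer-product covariance $\sum_i w_i(-2\bx_i+M)(-2\bx_i+M)^{\top}=4\sum_i w_i\bx_i\bx_i^{\top}-MM^{\top}$; with the literal ``$-2\bx_i-M$'' of Definition~\ref{definition: tilted mean and variance under assumption 1} that matrix equals $4\sum_i w_i\bx_i\bx_i^{\top}+3MM^{\top}$ and the identities fail. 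You flagged the bookkeeping risk but did not resolve it; to turn the sketch into a proof you must carry the computation through with the correctly centered deviation (and state that $V$ is to be understood in this centered, outer-product sense, which is evidently what the lemma intends).
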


\begin{proof}[Proof of Theorem \ref{theorem: variance decrease with t}]
Let $\bc_j(t):=\tm(t,\mathcal{S}_j)$ be the solution of (\ref{eq: tkm}), then substituting $t$, $\mathcal{S}$ and $\bc_j(t)$ into the tilted weight denoted as $\hat{w}_i:=w_i(t,\mathcal{S}_j,\bc_j(t))$, we can obtain the tilted empirical mean and variance for each cluster as
\begin{align}
    \mathbb{E}_t\Bigl(\mathrm{f}(\mathcal{S}_j,\bc_j)\Bigr)\!&=\!\!\!\sum_{\bx_i\in\mathcal{S}_j}\!\!\hat{w}_i\cdot f(\bx_i,\bc_j)\notag\\
    &=\|\bc_j\|^2+\!\!\!\sum_{\bx_i\in\mathcal{S}_j}\hat{w}_i\|\bx_i\|^2-\bc_j^{\top}M_t\notag\\
    \var_t\Bigl(\mathrm{f}(\mathcal{S}_j,\bc_j)\Bigr)\!&=\!\mathbb{E}_t\Bigl(\bc_j^{\top}\bigl(-2\bx_i-M_t\bigr)\Bigr)^2=\bc_j^{\top} V_t \bc_j,\notag
\end{align}
where $M_t\!:=\!2\!\sum_{\bx_i\in\mathcal{S}_j}\hat{w}_i\cdot\bx_i$ and $V_t\!:=\!\sum_{\bx_i\in\mathcal{S}_j}\!\hat{w}_i\bigl(-2\bx_i-M_t\bigr)^{\top}\bigl(-2\bx_i-M_t\bigr)$ are constants. Then, by taking derivative of $\var_{\tau}\Bigl(\mathrm{f}\bigl(\mathcal{S}_j,\bc_j(t)\bigr)\Bigr)$ with respect to $t$, we have
\begin{align}\label{eq55}
    &\frac{\partial }{\partial t}\Bigl\{\var_{\tau}\Bigl(\mathrm{f}\bigl(\mathcal{S}_j,\bc_j(t)\bigl)\Bigr)\Bigl\}\notag\\
    =&\Bigl(\frac{\partial}{\partial t}\bc_j(t)\Bigr)^{\top}\cdot\nabla_{\bc_j}\Bigl\{\var_{\tau}\Bigl(\mathrm{f}\bigl(\mathcal{S}_j,\bc_j(t)\bigl)\Bigr)\Bigr\}\notag\\
    =&2\Bigl(\frac{\partial}{\partial t}\bc_j(t)\Bigr)^{\top}V_{\tau} \bc_j(t).
\end{align}


Based on the optimal condition with $\bc_j$, we have
\begin{align}\label{eq: opti conditon}
    0&=\sum_{\bx_i\in\mathcal{S}_j} e^{t\|\bx_i-\bc_j(t)\|^2}(\bx_i-\bc_j(t)).
\end{align}

Divide both sides of \eqref{eq: opti conditon} by $-\frac{1}{2}\sum_{\bx_i\in\mathcal{S}_j} e^{t\|\bx_i-\bc_j(t)\|^2}$, and differentiate with respect to $t$ yields
\begin{align}
    &0=\frac{\partial}{\partial t}\Bigl\{\sum_{\bx_i\in\mathcal{S}_j}w_i(t,\mathcal{S}_j,\bc_j(t))\cdot2(\bc_j(t)-\bx_i)\Bigr\}\notag\\
    &\!\!\!\!=\frac{\partial}{\partial t}\Bigl\{2\bc_j(t) -  M(t,\mathcal{S}_j,\bc_j(t))\Bigr\}\notag\\
    &\!\!\!\!=\frac{\partial\bc_j(t)}{\partial t}\Bigl(2\!-\!\nabla_{\bc_j}M(t,\mathcal{S}_j,\bc_j(t))\!\Bigr)\!-\!\frac{\partial}{\partial \tau}M(\tau,\mathcal{S}_j,\bc_j(t))\Big|_{\tau=t} \!\!  \label{eq 47 follows from chain rule}\\
    &\!\!\!\!=\frac{\partial\bc_j(t)}{\partial t}\Bigl(2\!+\!tV(t,\mathcal{S}_j,\bc_j(t))\Bigr)\!+\!V(t,\mathcal{S}_j,\bc_j(t))\bc_j(t),\!\!\label{eq56}
\end{align}
where \eqref{eq 47 follows from chain rule} follows from the chain rule, and \eqref{eq56} follows from Lemma \ref{lemma:Partial derivatives}.
Then we can infer from (\ref{eq56}) that
\begin{align}\label{eq57}
    \frac{\partial\bc_j(t)}{\partial t} = -V(t,\mathcal{S}_j,\bc_j(t))\bc_j(t)\cdot\frac{1}{2+tV(t,\mathcal{S}_j,\bc_j(t))}.
\end{align}

Substituting (\ref{eq57}) into (\ref{eq55}), we obtain
\begin{align*}
    \frac{\partial }{\partial t}\Bigl\{\var_{\tau}\Bigl(\mathrm{f}\bigl(\mathcal{S}_j,\bc_j(t)\bigl)\Bigr)\Bigl\}&=2\Bigl(\frac{\partial}{\partial t}\bc_j(t)\Bigr)^{\top}V_{\tau} \bc_j(t)\notag\\
    &=\underbrace{-\frac{\bc_j(t)^{\top}V(t,\mathcal{S}_j,\bc_j(t))V_{\tau} \bc_j(t)}{2+tV(t,\mathcal{S}_j,\bc_j(t))}}_{<0},
\end{align*}
which completes the proof.
\end{proof}

\subsection{Proof of Theorem \ref{theorem: time complexity}}\label{sec: proof of theorem complexity}
\begin{proof}
When initializing the centroids with $k$-means++, the required number of multiplications is $O(nkd)$.
The number of multiplication needed for assignment and refinement are $O(nkd)$ and $O(nkdE)$, respectively.
When we set the number of iterations to $T$, we can obtain the multiplication required for \tkm is $O(nkdET)$.
\end{proof}

\subsection{Proof of Theorem \ref{theorem: monotonicity}}\label{sec: proof of theorem monotonicity}
\begin{proof}
When $k=1$, we obtain
\begin{align}
    \overline{\phi}(t,\mathcal{S},\bc)=\frac{1}{t}\log\frac{1}{n}\sum_{i=1}^{n}e^{tf(\bx_i,\bc)},
\end{align}
where $\mathcal{S}=X$ and $\bc=\tm(t,\mathcal{S})$ are the unique cluster and centroid.
We directly take the partial derivative of $\overline{\phi}(t,\mathcal{S},\bc)$ with respect to $t$, yielding: 
\begin{align}
    &\frac{\partial \Pi(t,\mathcal{S}_j,\bc_j)}{\partial t}\notag\\
    =&\frac{1}{t}\frac{\sum_{i=1}^n e^{t\|\bx_i-\bc\|^2}\|\bx_i-\bc\|^2}{\sum_{i=1}^n e^{t\|\bx_i-\bc\|^2}}-\frac{1}{t^2}\log\frac{1}{n}\sum_{i=1}^n e^{t \|\bx_i-\bc\|^2}\notag\\
    =&-\frac{1}{t}\bc_j^{\top}\sum_{i=1}^n 2w_i(t,\mathcal{S},\bc)\bx_i-\frac{1}{t^2}\log\frac{1}{n}\sum_{i=1}^n e^{-2t\bc^{\top}\bx_i}\label{eq: 51 follows from normal}\\
    =&-\frac{1}{t}\bc_j^{\top}M(t,\mathcal{S},\bc)\!-\!\frac{1}{t^2}\Gamma(t,\mathcal{S},\bc)=:g(t,\mathcal{S},\bc)\label{eq: g(t)},
\end{align}
where \eqref{eq: 51 follows from normal} follows from the fact that all data points are normalized, and \eqref{eq: g(t)} defines $g(t,\mathcal{S}_j,\bc_j)$. Next, we consider
\begin{align}\label{eq:32}
    \frac{\partial}{\partial t}\{t^2g(t,\mathcal{S},\bc)\}&=\frac{\partial}{\partial t}\Bigl\{-t\bc^{\top}M(t,\mathcal{S},\bc)\!-\!\Gamma(t,\mathcal{S},\bc)\Bigr\}\notag\\
    &=t\bc^{\top}V(t,\mathcal{S},\bc)\bc,
\end{align}
where (\ref{eq:32}) follows from (\ref{eq25}), (\ref{eq27}) and the chain rule. 
Given that $t\bc^{\top}V(t,\mathcal{S},\bc)\bc\geq0$ for any $t\geq 0$, therefore $t^2g(t,\mathcal{S},\bc)$ is a monotonically increasing function with $t$, and its minimum value is attained at $t=0$. When $t=0$, we have
\begin{align}\label{eq g0}
    g(0,\mathcal{S},\bc)&:=\lim_{t\rightarrow 0}-\frac{\Gamma(t,\mathcal{S},\bc)+t \bc_j^{\top} M (t,\mathcal{S},\bc)}{t^2}\notag,\\
    &=\frac{1}{2}\bc^{\top}V (0,\mathcal{S},\bc)\bc,
\end{align}
where (\ref{eq g0}) follows from (\ref{eq25}), (\ref{eq27}) and L'Hôpital's rule.
Then we obtain $t^2g(t,\mathcal{S},\bc)\geq 0$, and consequently infer $g(t,\mathcal{S},\bc)\geq 0$ for any $t\geq0$. In conjunction with Equation (\ref{eq: g(t)}), Theorem \ref{theorem: monotonicity} is implied.
\end{proof}


%




\ifCLASSOPTIONcaptionsoff
  \newpage
\fi

\bibliographystyle{abbrv}
\bibliography{sample.bib}

\vspace{-4em}
\begin{IEEEbiography}
	 [{\includegraphics[width=1in,height=1.22in,clip,keepaspectratio]{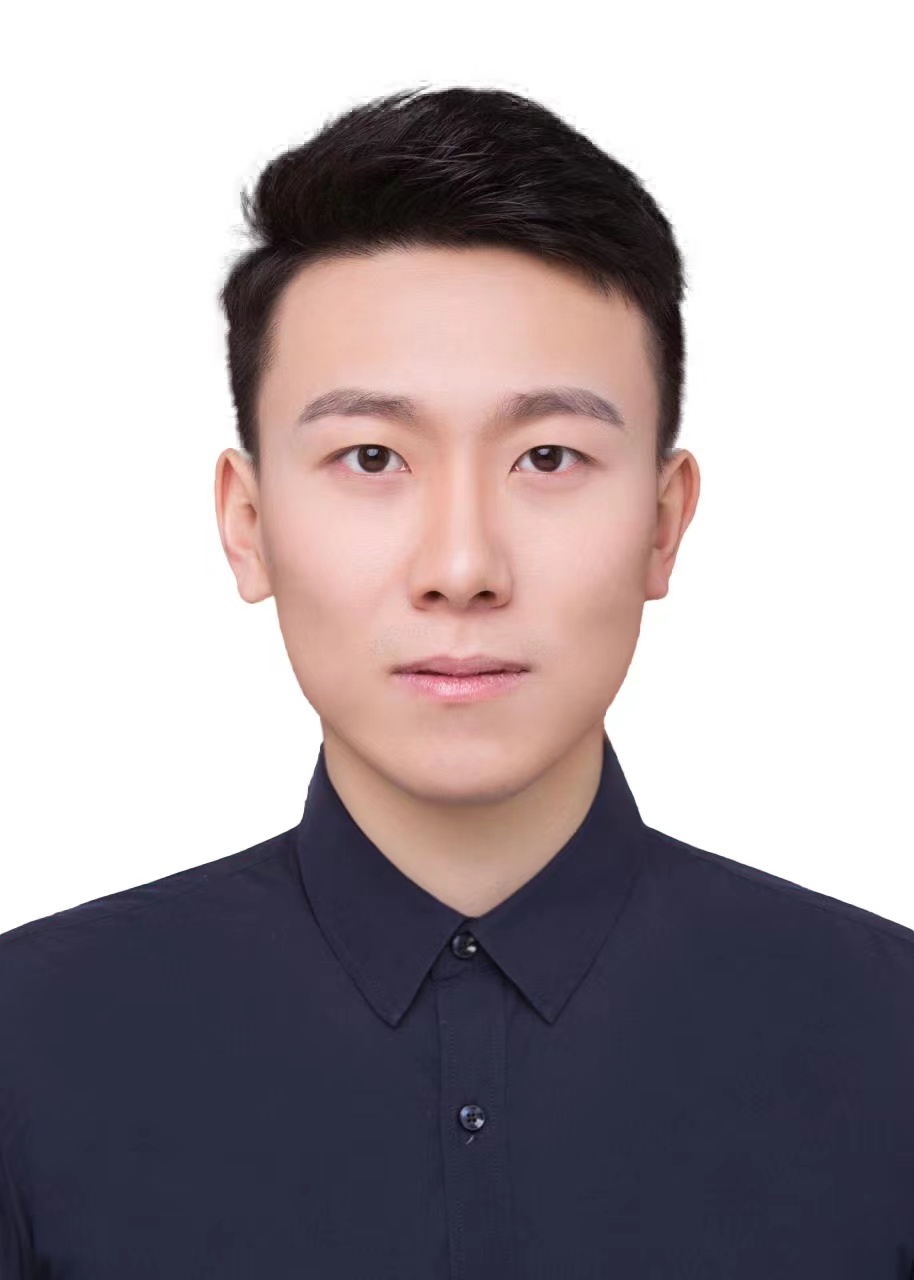}}]
	 {Shengkun Zhu} received the BE degree in electronic information engineering from Dalian University of Technology, China in 2018. He is currently working toward a Ph.D. degree in computer science and technology, School of Computer Science, Wuhan University, China. His research interests mainly include fair clustering, federated learning and nonconvex optimization. 
\end{IEEEbiography}
\begin{IEEEbiography}
	 [{\includegraphics[width=1in,height=1.22in,clip,keepaspectratio]{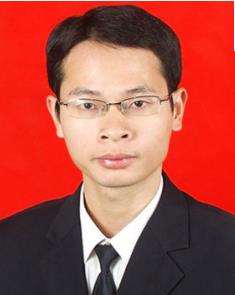}}]
	 {Jinshan Zeng} received the Ph.D. degree in mathematics from Xi’an Jiaotong University, Xi’an, China, in 2015. He is currently a Distinguished Professor with the School of Computer and Information Engineering, Jiangxi Normal University, Nanchang, China, and serves as the Director of the Department of Data Science and Big Data. He has authored more than 40 papers in high-impact journals and conferences such as IEEE TPAMI, JMLR, IEEE TSP, ICML, and AAAI. He has coauthored two papers with collaborators that received the International Consortium of Chinese Mathematicians (ICCM) Best Paper Award in 2018 and 2020). His current research interests include nonconvex optimization, machine learning (in particular deep learning), and remote sensing.
\end{IEEEbiography}

\begin{IEEEbiography}
	 [{\includegraphics[width=1in,height=1.20in,clip,keepaspectratio]{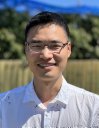}}]
	{Yuan Sun} is a Lecturer in Business Analytics and Artificial Intelligence at La Trobe University, Australia. He received his BSc in Applied Mathematics from Peking University, China, and his PhD in Computer Science from The University of Melbourne, Australia. His research interest is on artificial intelligence, machine learning, operations research, and evolutionary computation. He has contributed significantly to the emerging research area of leveraging machine learning for combinatorial optimisation. His research has been published in top-tier journals and conferences such as IEEE TPAMI, IEEE TEVC, EJOR, NeurIPS, ICLR, VLDB, ICDE, and AAAI. 
\end{IEEEbiography}

\vspace{-4em}

\begin{IEEEbiography}
	 [{\includegraphics[width=1in,height=1.22in,clip,keepaspectratio]{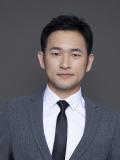}}]
	 {Sheng Wang} received the BE degree in information security, ME degree
	in computer technology from Nanjing University of Aeronautics and Astronautics,
	China in 2013 and 2016, and Ph.D. from RMIT University in 2019. He is a professor at the School of Computer Science, Wuhan University. His research interests mainly include mobile databases, multi-modal data management, and fairness-aware data analysis. He has published full research papers on top database and information systems venues as the first author, such as TKDE, SIGMOD, PVLDB, and ICDE.
\end{IEEEbiography}

\vspace{-4em}
\begin{IEEEbiography}
	 [{\includegraphics[width=1in,height=1.21in,clip,keepaspectratio]{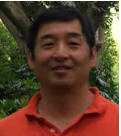}}]
	{Xiaodong Li} (Fellow, IEEE) received the B.Sc. degree from Xidian University, Xi’an, China, in 1988, and the Ph.D. degree in information science from the University of Otago, Dunedin, New Zealand, in 1998. He is a Professor with the School of Science (Computer Science and Software Engineering), RMIT University, Melbourne, VIC, Australia. His research interests include machine learning, evolutionary computation, neural networks, data analytics, multiobjective optimization, multimodal optimization, and swarm intelligence. Prof. Li is the recipient of the 2013 ACM SIGEVO Impact Award and the 2017 IEEE CIS IEEE TRANSACTIONS ON EVOLUTIONARY COMPUTATION Outstanding Paper Award. He serves as an Associate Editor for the IEEE TRANSACTIONS ON EVOLUTIONARY COMPUTATION, Swarm Intelligence (Springer), and International Journal of Swarm Intelligence Research. He is a Founding Member of IEEE CIS Task Force on Swarm Intelligence, the Vice-Chair of IEEE Task Force on Multimodal Optimization, and the Former Chair of IEEE CIS Task Force on Large Scale Global Optimization.
\end{IEEEbiography}


\vspace{-4em}
\begin{IEEEbiography}
	 [{\includegraphics[width=1in,height=1.21in,clip,keepaspectratio]{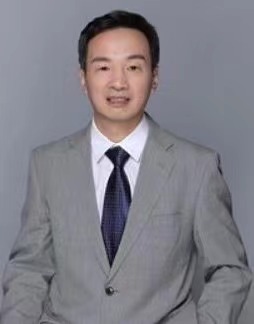}}]
	{Zhiyong Peng} received the BSc degree from
Wuhan University, in 1985, the MEng degree from the Changsha Institute of Technology of China, in 1988, and the PhD degree from the Kyoto University of Japan, in 1995. He is a professor of computer school, the Wuhan University of China. 
He worked as a researcher in Advanced Software Technology \& Mechatronics Research Institute of Kyoto from 1995 to 1997 and a member of technical staff in Hewlett-Packard Laboratories Japan from 1997 to 2000.
His research interests include complex data management, web data management, and trusted data management. 
He is a member of IEEE Computer Society, ACM SIGMOD and vice director of Database Society of Chinese Computer Federation. He was general co-chair of WAIM 2011, DASFAA 2013 and PC Co-chair of DASFAA 2012, WISE 2006, and CIT 2004.
\end{IEEEbiography}





\end{document}